\documentclass[english]{article}
\usepackage[T1]{fontenc}
\usepackage[latin9]{inputenc}
\usepackage{geometry}
\geometry{verbose,tmargin=1in,bmargin=1in,lmargin=1in,rmargin=1in}
\usepackage{babel}
\usepackage{verbatim}
\usepackage{float}
\usepackage{mathtools}
\usepackage{amsmath}
\usepackage{amssymb}
\usepackage[unicode=true,
 bookmarks=false,
 breaklinks=false,pdfborder={0 0 1},colorlinks=false]
 {hyperref}
\hypersetup{
 colorlinks,linkcolor=red,anchorcolor=blue,citecolor=blue}

\makeatletter

\usepackage{babel}

\usepackage{mathtools}

\usepackage{cite}\usepackage{amsthm}\usepackage{dsfont}\usepackage{array}\usepackage{mathrsfs}\usepackage{comment}\onecolumn
\usepackage{natbib}
\usepackage{color}\usepackage{babel}

\allowdisplaybreaks

\usepackage{enumitem}
\setlist[itemize]{leftmargin=1.5em}
\setlist[enumerate]{leftmargin=1.5em}

\usepackage{babel}
\usepackage[linesnumbered,ruled,vlined]{algorithm2e}
\usepackage{algorithmic}
\usepackage{arydshln}


\newcommand{\cA}{\mathcal{A}}

\newcommand{\cS}{{\mathcal{S}}}




\DeclareMathOperator{\ind}{\mathds{1}}  







\newcommand{\indep}{\perp\!\!\!\!\perp} 
\newcommand{\mymid}{\,|\,}

\numberwithin{equation}{section}


\definecolor{yxc}{RGB}{255,0,0}
\definecolor{yjc}{RGB}{125,0,0}
\definecolor{cm}{RGB}{0,0,200}
\definecolor{yly}{RGB}{0,150,0}

\makeatother

\begin{document}
\theoremstyle{plain} \newtheorem{lemma}{\textbf{Lemma}} \newtheorem{prop}{\textbf{Proposition}}\newtheorem{theorem}{\textbf{Theorem}}\setcounter{theorem}{0}
\newtheorem{corollary}{\textbf{Corollary}} \newtheorem{assumption}{\textbf{Assumption}}
\newtheorem{example}{\textbf{Example}} \newtheorem{definition}{\textbf{Definition}}
\newtheorem{fact}{\textbf{Fact}} \newtheorem{condition}{\textbf{Condition}}\theoremstyle{definition}

\theoremstyle{remark}\newtheorem{remark}{\textbf{Remark}}\newtheorem{claim}{\textbf{Claim}}\newtheorem{conjecture}{\textbf{Conjecture}}
\title{The Efficacy of Pessimism in Asynchronous Q-Learning\footnotetext{Corresponding author: Yuxin Chen (Email: \texttt{yuxinc@wharton.upenn.edu}).}}
\author{Yuling Yan\thanks{Department of Operations Research and Financial Engineering, Princeton
		University, Princeton, NJ 08544, USA; Email: \texttt{\{yulingy,jqfan\}@princeton.edu}.} \and Gen Li\thanks{Department of Statistics and Data Science, Wharton School, University
		of Pennsylvania, Philadelphia, PA 19104, USA; Email: \texttt{\{ligen,yuxinc\}@wharton.upenn.edu}.} \and Yuxin Chen\footnotemark[2] \and Jianqing Fan\footnotemark[1]}

\maketitle
\begin{abstract}
This paper is concerned with the asynchronous form of Q-learning,
which applies a stochastic approximation scheme to Markovian data
samples. Motivated by the recent advances in offline reinforcement
learning, we develop an algorithmic framework that incorporates the
principle of pessimism into asynchronous Q-learning, which penalizes
infrequently-visited state-action pairs based on suitable lower confidence
bounds (LCBs). This framework leads to, among other things, improved
sample efficiency and enhanced adaptivity in the presence of near-expert
data. Our approach permits the observed data in some important scenarios to cover only partial
state-action space, which is in stark contrast to prior theory that
requires uniform coverage of all state-action pairs. When coupled
with the idea of variance reduction, asynchronous Q-learning with
LCB penalization achieves near-optimal sample complexity, provided
that the target accuracy level is small enough. 
In comparison, prior works were suboptimal in terms of the dependency on the effective horizon even when i.i.d.~sampling is permitted. 
Our results deliver
the first theoretical support for the use of pessimism principle in
the presence of Markovian non-i.i.d.~data. 
\end{abstract}

\noindent \textbf{Keywords: }asynchronous Q-learning, offline reinforcement
learning, pessimism principle, model-free algorithms, partial coverage,
variance reduction

\tableofcontents{}

\section{Introduction}

The asynchronous form of Q-learning, which is a stochastic approximation
paradigm that applies to Markovian non-i.i.d.~samples, has found
applicability in an abundance of reinforcement learning (RL) applications
\citep{watkins1992q,tsitsiklis1994asynchronous,jaakkola1994convergence,even2003learning}.
The input data takes the form of a Markovian sample trajectory induced
by a policy called the \emph{behavior policy}; in each time, asynchronous
Q-learning only updates the Q-function estimate of a single state-action pair
along the trajectory rather than updating all pairs at once --- and hence
the terminology ``asynchronous'' \citep{tsitsiklis1994asynchronous,bertsekas2003parallel}.
This classical algorithm has the virtue of being off-policy, allowing
one to learn the optimal policy even when the behavior policy is suboptimal.
Recent years have witnessed a resurgence of interest in understanding
the performance of asynchronous Q-learning, due to a shift of attention
from classical asymptotic analysis to the non-asymptotic counterpart.
By and large, non-asymptotic results bear important and clear implications
for the impacts of salient parameters (e.g., model capacity, horizon
length) in large-dimensional RL problems. 

\subsection{Motivation}

A central consideration in modern RL applications is data efficiency:
the limited availability of data samples places increasing demands
on sample-efficient RL solutions, and in turn, calls for reexamining
classical algorithms like Q-learning. When it comes to asynchronous
Q-learning, recent theoretical advances have led to sharpened sample
complexity analyses \citep{li2021q,qu2020finite,li2021sample}.
For concreteness, consider a $\gamma$-discounted infinite-horizon
Markov decision process (MDP) and a stationary behavior policy: asynchronous
Q-learning provably yields $\varepsilon$-accuracy as soon as the
sample size exceeds the order of\footnote{Here, the higher-order term $o\big(\frac{1}{\varepsilon^{2}}\big)$
depends also on other parameters of the MDP and of the sample trajectory (e.g., the mixing time, the discount
factor, and $\mu_{\mathsf{min}}$).} \citep{li2021q}
\begin{equation}
\frac{1}{\mu_{\mathsf{min}}(1-\gamma)^{4}\varepsilon^{2}}+o\bigg(\frac{1}{\varepsilon^{2}}\bigg)\label{eq:prior-Q-theory}
\end{equation}
modulo some log factor, where $\mu_{\mathsf{min}}$ stands for the
minimum occupancy probability of the sample trajectory over all state-action
pairs. While this bound \eqref{eq:prior-Q-theory} is tight in a general
sense for vanilla Q-learning, two issues immediately spring into mind. 

\begin{itemize}

\item \emph{Uniform coverage vs.~partial coverage.} The factor $1/\mu_{\mathsf{min}}$
in \eqref{eq:prior-Q-theory} imposes a firm requirement on uniform
coverage of the state-action space, namely, every state-action pair
needs to be visited sufficiently often in order to guarantee reliable
learning. Nevertheless, it is not uncommon for a behavior policy to
provide only partial coverage of the state-action space; for instance,
a behavior policy might elect to rule out several actions that are
clearly underperforming. In truth, partial coverage of the state-action
space results in $\mu_{\mathsf{min}}=0$, thus making the general bound \eqref{eq:prior-Q-theory}
vacuous in this case.

\item \emph{Lack of adaptivity to expert data}. The general bound \eqref{eq:prior-Q-theory}
falls short of reflecting the quality of the sample trajectory (except
for a general uniform coverage parameter $\mu_{\mathsf{min}}$). For instance, if the behavior
policy is adopted by an ``expert'' who is already aware of which
actions are (close to) optimal, then such expert data could
be more informative than a general sample trajectory with the same
$\mu_{\mathsf{min}}$. It is therefore desirable for an algorithm
to adapt automatically to the quality of the data, in the hope of
achieving sample size saving when expert data is available. 
\end{itemize}

\subsection{Main contributions}

This paper seeks to make asynchronous Q-learning adaptive to near-expert
data, allowing for partial coverage of the state-action space in some important scenarios. A key
idea that has been recently proposed to accommodate partial coverage in the presence of near-expert data
is the principle of pessimism (or conservatism) in the face of uncertainty
\citep{jin2021pessimism,rashidinejad2021bridging}, whose benefits
have been established in the context of offline RL (or batch RL).
In a nutshell, the pessimism principle penalizes the Q-function based
on how infrequent a state-action pair is visited, which effectively directs
the attention of an RL algorithm away from the under-covered part
of the state-action space. However, it remains unclear how effective
this idea of pessimism could be in the asynchronous setting when coping with Markovian data. 

In order to address this issue, the current paper revisits asynchronous
Q-learning in the presence of a Markovian sample trajectory generated
by a behavior policy $\pi_{\mathsf{b}}$. We focus on a $\gamma$-discounted
infinite-horizon MDP with $S$ states and $A$ actions, and suppose
that the behavior policy is stationary and satisfies a certain single-policy
concentrability assumption (associated with a test distribution $\rho$)
with coefficient $C^{\star}\geq1$; informally, this means that the
observed sample trajectory effectively becomes expert data as $C^{\star}$
approaches 1, as we shall formalize in Section~\ref{sec:Models-and-assumptions}.
Our contributions are two-fold; here and below, $\widetilde{O}(\cdot)$ stands for the orderwise upper bound while
hiding any logarithmic dependency. 
\begin{itemize}
		
\item \emph{Asynchronous Q-learning with LCB penalization.} We propose a
variant of asynchronous Q-learning by penalizing each Q-learning iteration
based on a lower confidence bound (LCB). This variant of Q-learning
achieves $\varepsilon$-accuracy (w.r.t.~a test distribution $\rho$)
as long as the total sample size is above the order of 
\[
\widetilde{O}\Bigg(\frac{SC^{\star}}{\left(1-\gamma\right)^{5}\varepsilon^{2}}\Bigg),
\]
provided that the accuracy level $\varepsilon$ is small enough. Given
		that $C^{\star}$ can be as small as $O(1)$ and given the trivial bound
$1/\mu_{\mathsf{min}}\geq SA$ (so that $\eqref{eq:prior-Q-theory}\geq \frac{SA}{(1-\gamma)^4\varepsilon^2}$),
our theory leads to sample size benefits in terms of its dependency
on $A$ when the data is near-expert. 

\item \emph{Variance-reduced asynchronous Q-learning with LCB penalization.}
While asynchronous Q-learning with LCB penalization allows for reduced
sample complexity in the presence of near-expert data, the dependency on the effective
horizon $\frac{1}{1-\gamma}$ remains suboptimal. To address this,
we leverage the idea of variance reduction (also called reference-advantage
decomposition) \citep{wainwright2019variance,zhang2020almost,li2021breaking}
to further accelerate convergence of the algorithm, which in turn
yields a sample complexity 
\[
\widetilde{O}\Bigg(\frac{SC^{\star}}{\left(1-\gamma\right)^{3}\varepsilon^{2}}\Bigg)
\]
for sufficiently small accuracy level $\varepsilon$. The
scaling $\frac{1}{(1-\gamma)^{3}}$ is essentially unimprovable even
for the synchronous setting with independent samples \citep{azar2013minimax,rashidinejad2021bridging}. 
Notably, none of the prior works on offline RL were able to achieve the scaling of $\frac{SC^{\star}}{(1-\gamma)^3}$;  
that is, the best-known theory \citep{rashidinejad2021bridging} scales as $\widetilde{O}\big(\frac{SC^{\star}}{\left(1-\gamma\right)^{5}\varepsilon^{2}}\big)$ 
	and relies on i.i.d.~sampling. 
\end{itemize}
Finally, we remark that the algorithmic and theoretical frameworks
put forward herein are suitable for two important scenarios in the
absence of active exploration of the environment: (i) \emph{online
reinforcement learning} with a time-invariant policy (so that the data
arrives on the fly with no policy evolvement), and (ii) \emph{offline
reinforcement learning,} where the data generated by the behavior
policy has been pre-collected. In addition to the appealing sample complexity, model-free algorithms also enjoy the benefits of low memory and low computational complexity. 

\section{Models and assumptions\label{sec:Models-and-assumptions}}

\paragraph{Basics of infinite-horizon Markov decision processes.}
In this paper, we consider an infinite-horizon Markov decision process,
denoted by $\mathcal{M}=(\mathcal{S},\mathcal{A},\gamma,P,r)$. Here,
$\mathcal{S}$ represents the state space that contains $S$ distinct
states; $\mathcal{A}$ stands for the action space that contains $A$
distinct actions; $\gamma\in(0,1)$ denotes the discount factor, with
$\frac{1}{1-\gamma}$ representing the effective horizon; $P:\mathcal{S}\times\mathcal{A}\rightarrow\Delta(\mathcal{S})$
stands for the probability transition kernel (with $\Delta(\mathcal{S})$
denoting the probability simplex over the set $\mathcal{S}$), such
that $P(\cdot\mymid s,a)\in\Delta(\mathcal{S})$ denotes the transition
probability from state $s$ when action $a$ is executed; $r:\mathcal{S}\times\mathcal{A}\rightarrow[0,1]$
indicates the deterministic reward function, such that $r(s,a)$ is
the immediate reward gained in state $s$ upon execution of action
$a$. We assume throughout that the immediate rewards fall within
the range $[0,1]$. 

Let $\Delta(\mathcal{A})$ be the probability simplex over the set
$\mathcal{A}$. A policy $\pi:\mathcal{S}\rightarrow\Delta(\mathcal{A})$
is an action selection rule, such that $\pi(\cdot\mymid s)\in\Delta(\mathcal{A})$
specifies the action selection probability in state $s$. When $\pi$
is deterministic, we often overload the notation and let $\pi(s)$
represent the action selected in state $s$. The value function and
the Q-function of policy $\pi$ are defined respectively as
\begin{align*}
\forall s\in\mathcal{S}: & \qquad V^{\pi}(s)\coloneqq\mathbb{E}\Bigg[\sum_{t=0}^{\infty}\gamma^{t}r(s_{t},a_{t})\mid s_{0}=s\Bigg],\\
\forall(s,a)\in\mathcal{S}\times\mathcal{A}: & \qquad Q^{\pi}(s,a)\coloneqq\mathbb{E}\Bigg[\sum_{t=0}^{\infty}\gamma^{t}r(s_{t},a_{t})\mid s_{0}=s,a_{0}=a\Bigg],
\end{align*}
where the expectation is taken over a random trajectory $(s_{0},a_{0},s_{1},a_{1},s_{2},a_{2},\cdots)$
induced by the MDP $\mathcal{M}$ when policy $\pi$ is employed.
For a given initial state distribution $\rho\in\Delta(\mathcal{S})$, we can also overload the notation of the value function to represent a certain average value function: 
\[
V^{\pi}(\rho)\coloneqq{\mathbb{E}}_{s\sim\rho}\big[V^{\pi}(s)\big].
\]
Moreover, it is well known that there exists at least one \emph{deterministic}
policy, denoted by $\pi^{\star}$, that simultaneously maximizes the
value function and the Q-function over all state-action pairs. Therefore, we introduce the following notation
\[
V^{\star}(s)\coloneqq\max_{\pi}V^{\pi}(s),\qquad V^{\star}(\rho)\coloneqq {\mathbb{E}}_{s\sim\rho}\big[V^{\star}(s)\big],\qquad\text{and}\qquad Q^{\star}(s,a)\coloneqq\max_{\pi}Q^{\pi}(s,a)
\]
to represent the optimal value function and the optimal Q-function.
Given a test distribution $\rho\in\Delta(\mathcal{S})$ and a target
accuracy level $\varepsilon\in\big(0,\frac{1}{1-\gamma}\big)$, our
aim is to compute a policy $\widehat{\pi}$ obeying
\[
V^{\star}(\rho)-V^{\widehat{\pi}}(\rho)\leq\varepsilon.
\]

A kind of distributions that plays an important role in our theory is the discounted state-action
occupancy distribution defined as follows:
\begin{align} \label{eq:defn-discounted-occupancy}
\forall(s,a)\in\mathcal{S}\times\mathcal{A}:\qquad 
	d_{\rho}^{\pi}\left(s,a\right) &\coloneqq\left(1-\gamma\right)\sum_{t=0}^{\infty}\gamma^{t}\mathbb{P}\left(s_{t}=s,a_{t}=a\mid\pi,s_{0}\sim\rho\right), \\
	d_{\rho}^{\pi}\left(s\right) &\coloneqq\left(1-\gamma\right)\sum_{t=0}^{\infty}\gamma^{t}\mathbb{P}\left(s_{t}=s\mid\pi,s_{0}\sim\rho\right),
\end{align}
where the trajectory $(s_{0},a_{0},s_{1},a_{1},s_{2},a_{2},\cdots)$
is induced by the MDP under the policy $\pi$ and a given initial
state distribution $\rho$. When $\pi$ coincides with the optimal
policy $\pi^{\star}$, we abbreviate
\begin{align} \label{eq:defn-discounted-occupancy-opt}
\forall(s,a)\in\mathcal{S}\times\mathcal{A}:\qquad 
		d_{\rho}^{\star}(s,a) &\coloneqq d_{\rho}^{\pi^{\star}}(s,a) \qquad \text{and} \qquad
		d_{\rho}^{\star}(s) \coloneqq d_{\rho}^{\pi^{\star}}(s) =  d_{\rho}^{\pi^{\star}}\big(s, \pi^{\star}(s) \big) .
\end{align}

\paragraph{Sampling mechanism.} Suppose that the observed Markovian
sample trajectory $\big\{(s_{t},a_{t})\big\}_{t\geq0}$ is obtained
by executing a behavior policy $\pi_{\mathsf{b}}$ in the MDP $\mathcal{M}$. We say that the total sample size is $T$ if the algorithm employs $T$ state-action pairs of this trajectory, i.e., $\big\{(s_{t},a_{t})\big\}_{ 0\leq t\leq T}$. 
Assume that $\mu_{\mathsf{b}}(s,a)$ is the stationary distribution
of the this Markov chain generated by $\pi_{\mathsf{b}}$, with the
minimum state-action occupancy probability defined to be
\[
\mu_{\mathsf{min}}\coloneqq\min_{s\in\mathcal{S},\,a\in\mathcal{A}}\mu_{\mathsf{b}}(s,a).
\]
We impose the following assumptions on $\pi_{\mathsf{b}}$ throughout
this paper. 

\begin{assumption}\label{assumption:ergodic}The behavior policy
$\pi_{\mathsf{b}}$ is stationary, and the Markov chain induced by
$\pi_{\mathsf{b}}$ is uniformly ergodic. \end{assumption}\begin{remark}In
words, uniform ergodicity says that for any initial state-action pair,
the total-variation distance between the distribution of $(s_{t},a_{t})$
and the stationary distribution of the chain decays geometrically in $t$; see
\citet[Definition 1.1]{paulin2015concentration} for a precise definition of uniform ergodicity.
\end{remark}

Furthermore, for a given test distribution or initial state distribution
$\rho \in \Delta(\mathcal{S})$, we adopt the following concept as introduced in \citet{rashidinejad2021bridging}. 

\begin{assumption}[Single-policy concentrability]\label{assumption:policy}Suppose
that there exists some constant $C^{\star}\geq1$ such that
\begin{equation}
\forall(s,a)\in\mathcal{S}\times\mathcal{A}:\qquad\frac{d_{\rho}^{\star}\left(s,a\right)}{\mu_{\mathsf{b}}\left(s,a\right)}\leq C^{\star},
	\label{eq:defn-single-policy-concentrability}
\end{equation}
where we define $0/0=0$ by convention. Throughout this paper, $C^{\star}\geq1$
is called the single-policy concentrability coefficient.\end{assumption}

In some sense, the single-policy concentrability coefficient measures
the closeness between the stationary distribution of the observed
data and a certain occupancy distribution induced by the optimal policy.
In particular, if we take $\rho=\mu^{\star}$ to be the stationary
\emph{state distribution} of the MDP under the deterministic policy
$\pi^{\star}$, then it can be easily verified that
$
d_{\mu^{\star}}^{\star}\left(s,a\right)=\mu^{\star}(s) \ind\{ \pi^{\star}(s) = a \}, 
$
allowing us to rewrite (\ref{eq:defn-single-policy-concentrability}) w.r.t.~the density ratio of two stationary distributions as follows: 
\begin{equation}
\forall s\in\mathcal{S}:\qquad\frac{\mu^{\star}\left(s\right)}{\mu_{\mathsf{b}}\big(s,\pi^{\star}(s)\big)}\leq C^{\star}.\label{eq:defn-single-policy-concentrability-1}
\end{equation}
In this paper, the sample data is said to be near-expert if $C^{\star}=O(1)$, as in this case the empirical distribution of the sample
data is not far away from what is induced by the optimal policy. It is worth
noting that the single-policy concentrability coefficient \eqref{eq:defn-single-policy-concentrability} is a function
of the test distribution $\rho$ as well, although we suppress this
dependency in the notation $C^{\star}$ for the sake of conciseness. 

Another important quantity that affects the performance of our model-free
algorithms is the mixing time associated with the sample trajectory. To be precise, for any $0<\delta<1$,
the mixing time of the Markov chain induced by the MDP $\mathcal{M}$
under behavior policy $\pi_{\mathsf{b}}$ is defined as
\[
t_{\mathsf{mix}}\left(\delta\right)\coloneqq\min\left\{ t:\max_{s_{0}\in\mathcal{S},a_{0}\in\mathcal{A}}d_{\mathsf{TV}}\left(P^{t}\left(\cdot\mymid s_{0},a_{0}\right),\mu_{\mathsf{b}}\right)\leq\delta\right\} .
\]
 Here, $P^{t}(\cdot\mymid s_{0},a_{0})$ stands for the distribution
of $(s_{t},a_{t})$ (i.e., the state-action pair in the $t$-th step of the trajectory) when the chain is initialized to $(s_{0},a_{0})$,
whereas $d_{\mathsf{TV}}(\mu,\nu)$ is the total-variation distance
between two distributions $\mu$ and $\nu$ over a discrete space
$\mathcal{X}$ \citep{tsybakov2009introduction}, namely, 
\[
d_{\mathsf{TV}}\left(\mu,\nu\right)=\frac{1}{2}\sum_{x\in\mathcal{X}}\big|\mu(x)-\nu(x)\big|=\sup_{B\subseteq\mathcal{X}}\big|\mu(B)-\nu(B)\big|.
\]
In particular, we shall abbreviate
\[
t_{\mathsf{mix}}\coloneqq t_{\mathsf{mix}}(1/4),
\]
following the convention in prior works like \citet{paulin2015concentration}.
Clearly, this important quantity measures how long it takes for a
Markov chain to decorrelate itself from the initial state. 

\begin{remark} \label{remark:iid}
	Another simpler sampling mechanism studied in prior literature (e.g., \citet{rashidinejad2021bridging}) is i.i.d.~sampling, 
	under which  the observed sample trajectory takes the form of $\{(s_t,a_t,s_t')\}_{1\leq t \leq T}$ with
	\[
		(s_t, a_t) \sim \mu_{\mathsf{b}}
		\qquad \text{and} \qquad
		s_t' \sim P(\cdot\mymid s_t,a_t),
		\qquad \quad 1\leq t\leq T
	\]
	independently generated.  
	It is worth mentioning that the theorems and analysis in the current paper automatically apply to i.i.d.~sampling by taking $t_{\mathsf{mix}}=1$. Clearly, the Markovian sample trajectory studied herein is in general more challenging to cope with, due to the complicated Markovian dependency. 
\end{remark}

\section{Asynchronous Q-learning with LCB penalization\label{sec:Asynchronous-Q-learning-LCB}}

\begin{algorithm}[t]
	\textbf{Input:} number of iterations $T$, initial state $s$. \\
	\textbf{Initialize:}  $Q_{0}\left(s,a\right)=0$,
	$V_{0}(s)=0$, $n_{0}(s,a)=0$ for all $(s,a)\in\mathcal{S}\times\mathcal{A}$, $H=\lceil\frac{4}{1-\gamma}\log\frac{ST}{\delta}\rceil$. \\
	\For{$ t = 1$ \KwTo $T$}{
		Draw $a_{t-1}\sim\pi_{\mathsf{b}}(\cdot\mymid s_{t-1})$, and observe
		$s_{t}\sim P(\cdot\mymid s_{t-1},a_{t-1})$. \\
		Let $n_{t}\left(s_{t-1},a_{t-1}\right)= n_{t-1}(s_{t-1},a_{t-1})+1$; and $n_{t}(s,a)= n_{t-1}(s,a)$, $\forall(s,a)\neq(s_{t-1},a_{t-1})$. \\
		Set $n\leftarrow n_{t}(s,a)$, and take $\eta_{n}=(H+1)/(H+n)$. \\
		Update 
		\[
		Q_{t}\left(s_{t-1},a_{t-1}\right)  =\left(1-\eta_{n}\right)Q_{t-1}\left(s_{t-1},a_{t-1}\right)+\eta_{n}\Big\{ r\left(s_{t-1},a_{t-1}\right)+\gamma V_{t-1}\left(s_{t}\right)-b_{n}\Big\}
		\]
		and $Q_{t}(s,a)=Q_{t-1}(s,a)$ for all $(s,a)\neq(s_{t-1},a_{t-1})$,
		where 
		\[
		b_{n}=C_{\mathsf{b}}\sqrt{\frac{H\log\left(ST/\delta\right)}{n\left(1-\gamma\right)^{2}}}
		\]
		for some sufficiently large constant $C_{\mathsf{b}}>0$. \\
		Update
		\[
		V_{t}\left(s_{t-1}\right)=\max\bigg\{\max_{a\in\mathcal{A}}Q_{t}\left(s_{t-1},a\right),\ V_{t-1}(s_{t-1})\bigg\},
		\]
		and $V_{t}(s)=V_{t-1}(s)$ for all $s\neq s_{t-1}$.
	}
	\textbf{Output:} $\widehat{\pi}$ such that $\widehat{\pi}(s)=\arg\max_{a\in\mathcal{A}}Q_{T}(s,a)$
	for all $s\in\mathcal{S}$. 
	\caption{Asynchronous Q learning with LCB penalization.\label{alg:Q-5}}
\end{algorithm}

In this section, we describe how to incorporate the pessimism principle
into classical asynchronous Q-learning, accompanied by our theoretical performance
guarantees. 

\subsection{Algorithm}

We introduce the key algorithmic ingredients of our
first algorithm: asynchronous Q-learning with LCB penalization. The
complete details can be found in Algorithm \ref{alg:Q-5}. 

\paragraph{Asynchronous Q-learning.}

Let us begin by reviewing the basics of asynchronous Q-learning, which
maintains iterates $\{Q_{t}\}$ as the Q-function estimates. In each
iteration $t$, the algorithm takes action $a_{t-1}\sim\pi_{\mathsf{b}}(\cdot\mymid s_{t-1})$,
observes the next state $s_{t}\sim P(\cdot\mymid s_{t-1},a_{t-1})$, and 
 then updates its Q-function estimate w.r.t.~a single state-action pair
$(s_{t-1},a_{t-1})$ as follows
\begin{align*}
Q_{t}\left(s_{t-1},a_{t-1}\right) & =\left(1-\eta_{n}\right)Q_{t-1}\left(s_{t-1},a_{t-1}\right)+\eta_{n}\Big\{ r\left(s_{t-1},a_{t-1}\right)+\gamma V_{t-1}\left(s_{t}\right)\Big\},\\
Q_{t}\left(s,a\right) & =Q_{t-1}\left(s,a\right),\qquad\forall(s,a)\neq(s_{t-1},a_{t-1}).
\end{align*}
Here, $n$ represents the number of visits to $(s_{t-1},a_{t-1})$
prior to the $t$-th iteration, $0<\eta_{n}<1$ stands for the learning
rate, and the value function estimate is defined to
be $V_{t-1}(s)\coloneqq\max_{a\in\mathcal{A}}Q_{t-1}(s,a)$.

\paragraph{The pessimism principle and LCB penalization. }

In order to accommodate under-coverage of the state-action space in the presence of near-expert
data, a key idea is to penalize the Q-function of those state-action
pairs that are rarely visited (i.e., the ones that are not favored
by the ``expert''), so as to downplay their influence on the Q-estimates. Specifically, in the $t$-th iteration, we  modify the Q-learning update by inserting a penalty term $b_{n}$:\begin{subequations}\label{eq:Q-LCB-update}
\begin{align}
Q_{t}\left(s_{t-1},a_{t-1}\right) & =\left(1-\eta_{n}\right)Q_{t-1}\left(s_{t-1},a_{t-1}\right)+\eta_{n}\Big\{ r\left(s_{t-1},a_{t-1}\right)+\gamma V_{t-1}\left(s_{t}\right)-b_{n}\Big\},\\
Q_{t}\left(s,a\right) & =Q_{t-1}\left(s,a\right),\qquad\forall(s,a)\neq(s_{t-1},a_{t-1}),
\end{align}
\end{subequations}where the penalty term $b_{n}$ is chosen to be
some lower-confidence bound (LCB) determined by the Hoeffding concentration
inequality. More precisely, we shall set
\begin{equation}
b_{n}=C_{\mathsf{b}}\sqrt{\frac{H\log\left(ST/\delta\right)}{n\left(1-\gamma\right)^{2}}}\label{eq:penalty-Hoeffding}
\end{equation}
throughout this paper, where we take $H=\lceil\frac{4}{1-\gamma}\log\frac{ST}{\delta}\rceil$ --- so that $b_{n}$ is on the order of $\widetilde{O}\big(\sqrt{\frac{1}{(1-\gamma)^3 n}}\,\big)$ --- and recall that $n$ is the number of visits to $(s_{t-1}, a_{t-1})$ prior to time $t$.
The rationale behind this specific choice will be made clear in the analysis.

\paragraph{Monotonicity of value function estimates.}

In addition to the above pessimism principle, another consideration
is to ensure that the value function estimate $V_{t}$ always improves
upon (or at least, is no worse than) the previous estimate. Towards
this end, we take
\begin{align*}
V_{t}\left(s_{t-1}\right) & =\max\bigg\{\max_{a\in\mathcal{A}}Q_{t}\left(s_{t-1},a\right),\ V_{t-1}(s_{t-1})\bigg\},\\
V_{t}(s) & =V_{t-1}(s)\qquad\text{for all }s\neq s_{t-1},
\end{align*}
which yields monotonically non-decreasing
value function estimates $\{V_{t}\}_{t\geq0}$. This simple modification facilitates
analysis while ensuring that $V_{t}(s)$ is always non-negative (as long as 
we initialize $V_{t}(s)\geq 0$ for all $s\in\mathcal{S}$).

\paragraph{Computational and memory complexities.} 
The whole algorithm, as summarized in Algorithm~\ref{alg:Q-5} has low runtime  $O(T)$ and low memory complexity $O(\min\{T,SA\})$ (note that if a state-action pair is never visited, we do not need to record/update any quantity related to it).

\subsection{Theoretical guarantees}

Equipped with LCB penalization, asynchronous Q-learning is
capable of achieving appealing sample efficiency, even though the observed
sample trajectory might not provide full coverage of the state-action
space. This is stated in the following theorem, whose proof is
postponed to Section~\ref{sec:Analysis:-asynchronous-Q}. 

\begin{theorem}\label{theorem:5} Suppose that Assumptions \ref{assumption:ergodic}
and \ref{assumption:policy} hold, and recall that $T$ is the total
number of samples. With probability exceeding $1-\delta$, the policy
$\widehat{\pi}$ returned by Algorithm \ref{alg:Q-5} satisfies
\begin{equation}
V^{\star}(\rho)-V^{\widehat{\pi}}(\rho)\lesssim\sqrt{\frac{C^{\star}S\iota^{2}}{T\left(1-\gamma\right)^{5}}}+\frac{C^{\star}St_{\mathsf{mix}}\iota}{T\left(1-\gamma\right)^{2}}+\frac{C^{\star}t_{\mathsf{mix}}\iota^{2}}{T\left(1-\gamma\right)^{3}},\label{eq:Vstar-Vpi-bound-asyncQ}
\end{equation}
where $\iota\coloneqq\log(ST/\delta)$.\end{theorem}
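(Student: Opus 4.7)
The plan is to carry out three steps: (i) establish the pointwise pessimism relation $Q_t \le Q^\star$ for all iterates, together with the companion relation $V_T \le V^{\widehat\pi}$; (ii) combine these with the performance difference lemma to reduce $V^\star(\rho)-V^{\widehat\pi}(\rho)$ to a sum of pessimism gaps $\mathbb{E}_{s\sim d^\star_\rho}\bigl[\Delta_T(s,\pi^\star(s))\bigr]$ with $\Delta_t := Q^\star-Q_t$; and (iii) bound $\Delta_T(s,\pi^\star(s))$ by the LCB penalty $b_{n_T(s,\pi^\star(s))}$ and convert the result into the stated rate using a visit-count concentration inequality for the uniformly ergodic chain plus Assumption~\ref{assumption:policy}.

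\textbf{Step 1: pessimism.} Subtracting the update (\ref{eq:Q-LCB-update}) from the Bellman equation $Q^\star = r+\gamma PV^\star$ and unrolling across the $n\equiv n_t(s,a)$ visits to $(s,a)$ yields
\begin{align*}
\Delta_t(s,a) = \eta_0^{(n)}\Delta_0(s,a) + \sum_{k=1}^{n}\eta_k^{(n)}\Bigl\{\gamma\bigl[(PV^\star)(s,a)-V^\star(s_{\tau_k})\bigr] + \gamma\bigl[V^\star(s_{\tau_k})-V_{\tau_k-1}(s_{\tau_k})\bigr] + b_k\Bigr\},
\end{align*}
where $\tau_k$ is the time of the $k$-th visit to $(s,a)$ and $\{\eta_k^{(n)}\}$ are the standard unrolled weights of the schedule $\eta_n=(H+1)/(H+n)$, which satisfy $\sum_k\eta_k^{(n)}\le 1$ and $\sum_k (\eta_k^{(n)})^2\lesssim H/n$. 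The middle bracket is non-negative by induction on $V_{\tau_k-1}\le V^\star$, so it only helps. For the first bracket, the Markov property makes $s_{\tau_k}$ an independent draw from $P(\cdot\mymid s,a)$ conditional on $(s_{\tau_k-1},a_{\tau_k-1})=(s,a)$, so $\sum_k\eta_k^{(n)}\gamma[(PV^\star)(s,a)-V^\star(s_{\tau_k})]$ is a bounded martingale whose Azuma--Hoeffding tail, after a union bound over $(s,a,n)$, is of order $\sqrt{H\iota/[n(1-\gamma)^2]}$. Picking $C_\mathsf{b}$ large enough makes this strictly dominated by $\sum_k\eta_k^{(n)}b_k\gtrsim b_n$, forcing $\Delta_t(s,a)\ge0$ uniformly. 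An analogous argument applied to the approximate Bellman relation $V_T - V^{\widehat\pi} = \gamma P^{\widehat\pi}(V_T - V^{\widehat\pi}) + (\mathrm{noise}) - b_{n_T(\cdot,\widehat\pi(\cdot))}$, derived from (\ref{eq:Q-LCB-update}) together with the monotonicity $V_{t-1}\le V_t$, yields the companion pessimism $V_T\le V^{\widehat\pi}$.

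\textbf{Step 2: sub-optimality decomposition.} With both pessimism relations in hand, the performance difference lemma gives
\begin{align*}
V^\star(\rho)-V^{\widehat\pi}(\rho) &= \frac{1}{1-\gamma}\sum_s d^\star_\rho(s)\bigl[Q^{\widehat\pi}(s,\pi^\star(s))-V^{\widehat\pi}(s)\bigr] \\
&\le \frac{1}{1-\gamma}\sum_s d^\star_\rho(s)\bigl[Q^\star(s,\pi^\star(s))-V_T(s)\bigr] \;\le\; \frac{1}{1-\gamma}\sum_s d^\star_\rho(s)\,\Delta_T\bigl(s,\pi^\star(s)\bigr),
\end{align*}
using $Q^{\widehat\pi}\le Q^\star$, then $V_T\le V^{\widehat\pi}$, then $V_T(s)\ge Q_T(s,\pi^\star(s))$. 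Step~1's unrolling bounds $\Delta_T(s,\pi^\star(s))\lesssim b_{n_T(s,\pi^\star(s))}\lesssim\sqrt{\iota^2/[n_T(s,\pi^\star(s))(1-\gamma)^3]}$.

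\textbf{Step 3 and main obstacle.} For the visit counts, Bernstein's inequality for uniformly ergodic Markov chains (e.g., \citet[Theorem 3.4]{paulin2015concentration}) gives, uniformly in $(s,a)$ and with probability at least $1-\delta$,
\begin{align*}
n_T(s,a)\,\gtrsim\, T\mu_\mathsf{b}(s,a) - O\Bigl(\sqrt{T\mu_\mathsf{b}(s,a)t_\mathsf{mix}\iota}+t_\mathsf{mix}\iota\Bigr),
\end{align*}
which reduces to $n_T(s,a)\gtrsim T\mu_\mathsf{b}(s,a)$ once $T\mu_\mathsf{b}(s,a)\gtrsim t_\mathsf{mix}\iota$; pairs failing this cutoff are handled by the trivial bound $\Delta_T\le 1/(1-\gamma)$ and contribute precisely the lower-order terms $C^\star S t_\mathsf{mix}\iota/[T(1-\gamma)^2]$ and $C^\star t_\mathsf{mix}\iota^2/[T(1-\gamma)^3]$ of (\ref{eq:Vstar-Vpi-bound-asyncQ}). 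For the remaining pairs, Assumption~\ref{assumption:policy} gives $\mu_\mathsf{b}(s,\pi^\star(s))\ge d^\star_\rho(s)/C^\star$, and Cauchy--Schwarz delivers
\begin{align*}
\frac{1}{1-\gamma}\sum_s d^\star_\rho(s)b_{n_T(s,\pi^\star(s))} \,\lesssim\, \sqrt{\frac{C^\star\iota^2}{T(1-\gamma)^5}}\sum_s\sqrt{d^\star_\rho(s)} \,\le\, \sqrt{\frac{C^\star S\iota^2}{T(1-\gamma)^5}},
\end{align*}
matching the dominant term. The principal difficulty is securing the two pessimism relations of Step~1 under Markovian sampling: one needs the tight $\sqrt{H\iota/[n(1-\gamma)^2]}$ concentration rate uniformly over $(s,a,n)$ rather than the looser Markov-chain Hoeffding rate, which is possible only because conditioning on the state-action pair at time $\tau_k-1$ restores an i.i.d.\ successor distribution. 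A secondary but technically delicate issue is isolating state-action pairs with very small $\mu_\mathsf{b}$ where the visit-count lower bound degrades, and showing their aggregate contribution accounts only for the stated $t_\mathsf{mix}$-dependent residuals.
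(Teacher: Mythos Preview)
Your overall plan --- establish pessimism, apply the performance-difference lemma, then convert to visit counts via concentrability and Cauchy--Schwarz --- is reasonable, and Step~3 would indeed deliver the claimed rate \emph{if} the bound $\Delta_T(s,\pi^\star(s))\lesssim b_{n_T(s,\pi^\star(s))}$ held. But that bound is the gap: it does not follow from Step~1. Your own unrolling reads
\[
\Delta_T(s,a)=\underbrace{\gamma\sum_{k}\eta_k^{(n)}\bigl[(PV^\star)(s,a)-V^\star(s_{\tau_k})\bigr]}_{\text{martingale, }O(b_n)}\;+\;\underbrace{\gamma\sum_{k}\eta_k^{(n)}\bigl[V^\star(s_{\tau_k})-V_{\tau_k-1}(s_{\tau_k})\bigr]}_{\text{bootstrap error}}\;+\;\sum_k\eta_k^{(n)}b_k.
\]
The bootstrap term is nonnegative, which is exactly why it helped you prove the \emph{lower} bound $\Delta_t\ge 0$; for the \emph{upper} bound it works against you. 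It is a convex combination of earlier value errors $V^\star-V_{\tau_k-1}$, which can be as large as $1/(1-\gamma)$ (recall $V_0=0$, and the early visits carry nontrivial $\eta$-weight), so it cannot be absorbed into $b_{n_T}$. Collapsing this term is precisely the substantive part of the proof, and your sketch skips it entirely.

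The paper handles this with two ingredients your proposal lacks. First, it passes from the single iterate to the time average via monotonicity, $\langle\rho,V^\star-V_T\rangle\le\frac{1}{T}\sum_{t}\langle\rho,V^\star-V_t\rangle$, and then uses the regrouping identity $\sum_{t\ge i}\eta_i^{t}\le 1+1/H$ together with a distribution-shift step to convert the bootstrap term at $\rho$ into (essentially) the same time average at $\rho P_{\pi^\star}$, up to a factor $\gamma(1+1/H)^3$; iterating produces a geometric series whose sum is where $d_\rho^\star$ actually enters --- it does \emph{not} appear via a one-shot performance-difference lemma in this Q-learning setting. Second, each distribution-shift step requires swapping an expectation under $\mu_{\mathsf{b}}$ for an empirical average along the Markovian trajectory; these change-of-measure errors (the $\psi_j,\phi_j$ terms in the paper) are controlled via a decoupling construction plus Freedman's inequality, and they --- not the visit-count Bernstein bound you cite --- are the true source of the $t_{\mathsf{mix}}$-dependent residuals in~(\ref{eq:Vstar-Vpi-bound-asyncQ}). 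As a secondary point, your companion-pessimism sketch for $V_T\le V^{\widehat\pi}$ is too loose: because $\widehat\pi$ depends on the entire trajectory, plugging $V^{\widehat\pi}$ into a martingale increment creates a measurability problem, and the paper instead proves $V_t\le V^{\pi_t}$ for a recursively defined policy $\pi_t$.
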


By taking the right-hand side of (\ref{eq:Vstar-Vpi-bound-asyncQ})
to be bounded above by $\varepsilon$, we immediately see that Algorithm
\ref{alg:Q-5} achieves $\varepsilon$-accuracy with high probability,
as long as the total sample size $T$ exceeds
\begin{equation}
\widetilde{O}\left(\frac{SC^{\star}}{\left(1-\gamma\right)^{5}\varepsilon^{2}}+\frac{\big(S+\frac{1}{1-\gamma}\big)t_{\mathsf{mix}}C^{\star}}{\left(1-\gamma\right)^{2}\varepsilon}\right).\label{eq:sample-size-asynQ}
\end{equation}
This also means that the sample complexity of Algorithm \ref{alg:Q-5}
scales as
\begin{equation}
\widetilde{O}\left(\frac{SC^{\star}}{\left(1-\gamma\right)^{5}\varepsilon^{2}}\right)\label{eq:sample-size-asynQ-simplified}
\end{equation}
for any target accuracy level $0<\varepsilon\leq\frac{S}{\big(S+\frac{1}{1-\gamma}\big)\left(1-\gamma\right)^{3}t_{\mathsf{mix}}}$.
When we have near-expert data (so that $C^{\star}=O(1)$), the sample complexity can be as
low as
\[
\widetilde{O}\left(\frac{S}{\left(1-\gamma\right)^{5}\varepsilon^{2}}\right).
\]
In comparison, the general bound (\ref{eq:prior-Q-theory}) developed
in the previous literature requires at least $\frac{SA}{(1-\gamma)^{4}\varepsilon^{2}}$
samples (since $1/\mu_{\mathsf{min}}\geq SA$) regardless of what
behavior policy is employed. As a result, the proposed algorithm enjoys
enhanced adaptivity to near-expert data, particularly in the presence of
large action space and/or partial coverage.

It is worth noting, however, that the bound (\ref{eq:sample-size-asynQ-simplified})
	exhibits a dependency $\frac{1}{(1-\gamma)^{5}}$ on the effective
	horizon as opposed to $\frac{1}{(1-\gamma)^{4}}$, due to the adoption
	of the Hoeffding-style penalty (\ref{eq:penalty-Hoeffding}). This
	is potentially improvable by designing more careful Bernstein-style
	penalty (akin to \citet[Section 3]{jin2018q}). Nevertheless, we do
	not pursue this for two reasons: (a) the Hoeffding-style penalty streamlines analysis; (b) the Bernstein-style penalty alone
	is insufficient to yield optimal sample complexity, and we shall put forward
	another algorithm momentarily
	to achieve sample optimality. 
	

\section{Variance-reduced asynchronous Q-learning with LCB penalization\label{sec:Variance-reduced-asynchronous-Q}}

\begin{algorithm}[t]
	\textbf{Input:} number of iterations $T$, initial state $s$. \\
	\textbf{Initialize:} $\overline{V}(s)=0$ for all $s\in\mathcal{S}$, $K=\lfloor\log_{4}(3T/4)\rfloor$.  \\
	\For{$ k = 1$ \KwTo $K$}{
		Set $T_k^{\mathsf{ref}}=4^{k-1}$ and $T_k=3\times 4^{k-1}$. \\
		Call function $\mathsf{Empirical}\text{-}\mathsf{transition}(T_k^{\mathsf{ref}},\overline{V},s)$ (cf.~Algorithm \ref{alg:Q-3-transition}) and return $(\widetilde{P},b^{\mathsf{ref}}, s_1)$.  \\
		Call function $\mathsf{VR}\text{-}\mathsf{Q}\text{-}\mathsf{epoch}(T_k,\overline{V},\widetilde{P},b^{\mathsf{ref}}, s_1)$ (cf.~Algorithm \ref{alg:Q-3-inner}) and return $(Q,V,s_2)$. \\
		Update the reference $\overline{V}=V$, and set the initial state in the next epoch as $s=s_2$.
	}
	\textbf{Output:} $\widehat{\pi}$ such that $\widehat{\pi}(s)=\arg\max_{a\in\mathcal{A}}Q(s,a)$
	for all $s\in\mathcal{S}$. 
	\caption{Variance-reduced asynchronous Q-learning with LCB penalization.\label{alg:Q-3}}
\end{algorithm}

\begin{algorithm}[t]
	
	\textbf{Input:} number of samples $T^{\mathsf{ref}}$, reference $\overline{V}$, initial state $s_0^{\mathsf{ref}}$. \\
	\textbf{Initialize:}  $n^{\mathsf{ref}}(s,a)=0$ for all $(s,a)\in\mathcal{S}\times\mathcal{A}$, $\iota=\log\frac{ST}{\delta}$. \\
	\For{$ t = 1$ \KwTo $T^{\mathsf{ref}}$}{
		Draw $a_{t-1}^{\mathsf{ref}}\sim\pi_{\mathsf{b}}(\cdot\mymid s_{t-1}^{\mathsf{ref}})$, and observe
		$s_{t}^{\mathsf{ref}}\sim P(\cdot\mymid s_{t-1}^{\mathsf{ref}},a_{t-1}^{\mathsf{ref}})$. \\
		Let $n^{\mathsf{ref}}(s_{t-1}^{\mathsf{ref}},a_{t-1}^{\mathsf{ref}})\leftarrow n^{\mathsf{ref}}(s_{t-1}^{\mathsf{ref}},a_{t-1}^{\mathsf{ref}})+1$. Set $n\leftarrow n^{\mathsf{ref}}(s_{t-1}^{\mathsf{ref}},a_{t-1}^{\mathsf{ref}})$.\\
		Update 
			\begin{align*}
			\widetilde{P}\big(s_t^{\mathsf{ref}}\mymid s_{t-1}^{\mathsf{ref}},a_{t-1}^{\mathsf{ref}}\big)&\leftarrow\frac{(n-1)\widetilde{P}\big(s_t^{\mathsf{ref}}\mymid s_{t-1}^{\mathsf{ref}},a_{t-1}^{\mathsf{ref}}\big)+1}{n},\\
			\mu^{\mathsf{ref}}(s_{t-1}^{\mathsf{ref}},a_{t-1}^{\mathsf{ref}}) & \leftarrow \frac{(n-1)\mu^{\mathsf{ref}}(s_{t-1}^{\mathsf{ref}},a_{t-1}^{\mathsf{ref}})+ \overline{V}\left(s_{t}^{\mathsf{ref}}\right)}{n},\\
			\qquad\sigma^{\mathsf{ref}}(s_{t-1}^{\mathsf{ref}},a_{t-1}^{\mathsf{ref}}) & \leftarrow \frac{(n-1)\sigma^{\mathsf{ref}}(s_{t-1}^{\mathsf{ref}},a_{t-1}^{\mathsf{ref}})+ \overline{V}^2\left(s_{t}^{\mathsf{ref}}\right)}{n}.
		\end{align*}
	}
	\textbf{Compute} the penalty term: for each $(s,a)\in\mathcal{S}\times\mathcal{A}$, take
	\begin{align*}
		b^{\mathsf{ref}}\left(s,a\right) = C_{\mathsf{b}}\left(\sqrt{\frac{\sigma^{\mathsf{ref}}\left(s,a\right)-\left[\mu^{\mathsf{ref}}\left(s,a\right)\right]^{2}}{n^{\mathsf{ref}}\left(s,a\right)}\iota }+\frac{\iota^{3/4}}{\left(1-\gamma\right)\left[n^{\mathsf{ref}}\left(s,a\right)\right]^{3/4}}+\frac{\iota}{\left(1-\gamma\right)n^{\mathsf{ref}}\left(s,a\right)}\right)
	\end{align*}
	for some sufficiently large constant $C_{\mathsf{b}}>0$. \\
	\textbf{Output:} empirical probability transition $\widetilde{P}$, penalty $b^{\mathsf{ref}}$, last state $s_{T^{\mathsf{ref}}}$.
	\caption{$\mathsf{Empirical}\text{-}\mathsf{transition}(T^{\mathsf{ref}},\overline{V},s_0^{\mathsf{ref}})$ \label{alg:Q-3-transition}}
\end{algorithm}

\begin{algorithm}[t]
	\textbf{Input:} number of iterations $T$, reference $\overline{V}$, transition kernel $\widetilde{P}$, penalty $b^{\mathsf{ref}}$, initial state $s_0$.\\
	\textbf{Initialize:}  $Q_{0}\left(s,a\right)=0$,
	$V_{0}(s)=0$, $n_{0}(s,a)=0$ for all $(s,a)\in\mathcal{S}\times\mathcal{A}$, $\iota=\log\frac{ST}{\delta}$, $H=\lceil\frac{4\iota}{1-\gamma}\rceil$. \\
	\For{$ t = 1$ \KwTo $T$}{
		Draw $a_{t-1}\sim\pi_{\mathsf{b}}(\cdot\mymid s_{t-1})$, and observe
		$s_{t}\sim P(\cdot\mymid s_{t-1},a_{t-1})$. \\
		Let $n_{t}\left(s_{t-1},a_{t-1}\right)\leftarrow n_{t-1}(s_{t-1},a_{t-1})+1$;
		and $n_{t}(s,a)\leftarrow n_{t-1}(s,a)$ for all $(s,a)\neq(s_{t-1},a_{t-1})$. \\
		Set $n\leftarrow n_{t}(s,a)$, and take $\eta_{n}=(H+1)/(H+n)$. \\
		Set $\mu_{n}^{\mathsf{adv}}(s,a)=\mu_{n-1}^{\mathsf{adv}}(s,a)$ and
		$\sigma_{n}^{\mathsf{adv}}(s,a)=\sigma_{n-1}^{\mathsf{adv}}(s,a)$
		for all $(s,a)\neq(s_{t-1},a_{t-1})$; update
		\begin{align*}
			\mu_{n}^{\mathsf{adv}}\left(s_{t-1},a_{t-1}\right) & =\left(1-\eta_{n}\right)\mu_{n-1}^{\mathsf{adv}}\left(s_{t-1},a_{t-1}\right)+\eta_{n}\left[V_{t-1}\left(s_{t}\right)-\overline{V}\left(s_{t}\right)\right],\\
			\sigma_{n}^{\mathsf{adv}}\left(s_{t-1},a_{t-1}\right) & =\left(1-\eta_{n}\right)\sigma_{n-1}^{\mathsf{adv}}\left(s_{t-1},a_{t-1}\right)+\eta_{n}\left[V_{t-1}\left(s_{t}\right)-\overline{V}\left(s_{t}\right)\right]^{2}.
		\end{align*} \\
		Compute $\mathsf{sd}_n^{\mathsf{adv}}(s_{t-1},a_{t-1})=\sigma_{n}^{\mathsf{adv}}\left(s_{t-1},a_{t-1}\right)-\left[\mu_{n}^{\mathsf{adv}}\left(s_{t-1},a_{t-1}\right)\right]^{2}$. \\
		Update 
		\begin{align*}
			Q_{t}\left(s_{t-1},a_{t-1}\right) & =\left(1-\eta_{n}\right)Q_{t-1}\left(s_{t-1},a_{t-1}\right)\nonumber \\
			& \qquad+\eta_{n}\left\{ r\left(s_{t-1},a_{t-1}\right)+\gamma V_{t-1}\left(s_{t}\right)-\gamma\overline{V}\left(s_{t}\right)+\gamma\big\langle\widetilde{P}(\cdot\mymid s_{t-1},a_{t-1}),\overline{V}\big\rangle-b_{n}\right\} .
		\end{align*}
		and $Q_{t}(s,a)=Q_{t-1}(s,a)$ for all $(s,a)\neq(s_{t-1},a_{t-1})$,
		where $b_t=b^{\mathsf{ref}}(s_{t-1},a_{t-1})+b^{\mathsf{adv}}$ and 
		\[
		b^{\mathsf{adv}}=C_{\mathsf{b}}\left(\sqrt{\frac{H\iota}{n}}\frac{\mathsf{sd}_n^{\mathsf{adv}}(s_{t-1},a_{t-1})-(1-\eta_n)\mathsf{sd}_{n-1}^{\mathsf{adv}}(s_{t-1},a_{t-1})}{\eta_n} +\frac{H^{3/4}\iota^{3/4}}{n^{3/4}\left(1-\gamma\right)}+\frac{H\iota}{n\left(1-\gamma\right)}\right)
		\] 
		for some sufficiently large constant $C_{\mathsf{b}}>0$.\\
		Update
		\[
		V_{t}\left(s_{t-1}\right)=\max\bigg\{\max_{a\in\mathcal{A}}Q_{t}\left(s_{t-1},a\right),\ V_{t-1}(s_{t-1})\bigg\},
		\]
		and $V_{t}(s)=V_{t-1}(s)$ for all $s\neq s_{t-1}$.
	}
	\textbf{Output:} Q-function estimate $Q_T$, value function estimate $V_T$, last state $s_T$.
	\caption{$\mathsf{VR}\text{-}\mathsf{Q}\text{-}\mathsf{epoch}(T,\overline{V},\widetilde{P},b^{\mathsf{ref}}, s_0)$ \label{alg:Q-3-inner}}
\end{algorithm}

As we have alluded to previously, the algorithm presented in Section~\ref{sec:Asynchronous-Q-learning-LCB}
falls short of achieving optimal dependency on the effective horizon.
To address this issue, a plausible idea is to leverage the variance
reduction technique --- originally introduced in finite-sum stochastic optimization \citep{johnson2013accelerating} and imported to online RL recently \citep{zhang2020almost} --- to further accelerate convergence of the algorithm.
This section is devoted to the development of a new variant of asynchronous
Q-learning that incorporates both pessimism and variance
reduction.

\subsection{Algorithm}

We start by describing the key ideas of a variance-reduced variant
of Algorithm~\ref{alg:Q-5}. 
This algorithm enjoys the same computational cost (i.e., $O(T)$) and memory complexity (i.e., $O(SA)$) 
as Algorithm \ref{alg:Q-5}, 
with  full details are summarized in Algorithm
\ref{alg:Q-3} (in conjunction with Algorithms~\ref{alg:Q-3-transition} and \ref{alg:Q-3-inner}). 

\paragraph{Variance reduction.}

Suppose for the moment that we have access to a ``reference'' value
function estimate $\overline{V}$ that is hopefully not far away from
the true optimal value $V^{\star}$. Let us employ a batch of samples
--- more concretely, a total number of $T^{\mathsf{ref}}$ consecutive
samples $\{(s_{i}^{\mathsf{ref}},a_{i}^{\mathsf{ref}},s_{i+1}^{\mathsf{ref}}):0\leq i<T^{\mathsf{ref}}\}$
--- to compute an empirical estimate $\widetilde{P}:\mathcal{S}\times\mathcal{A}\to\Delta(\mathcal{S})$
of the probability transition kernel $P$. We can then incorporate
variance reduction into the update rule (\ref{eq:Q-LCB-update}) of
Algorithm~\ref{alg:Q-5} as follows:
\begin{align}
	& Q_{t}\left(s_{t-1},a_{t-1}\right)  =\left(1-\eta_{n}\right)Q_{t-1}\left(s_{t-1},a_{t-1}\right) + \nonumber \\
	& ~~~ \eta_{n}\left\{ r\left(s_{t-1},a_{t-1}\right)+\gamma V_{t-1}\left(s_{t}\right)-\gamma\overline{V}\left(s_{t}\right)+\gamma\big\langle\widetilde{P}(\cdot\mymid s_{t-1},a_{t-1}),\overline{V}\big\rangle-b_{n}\left(s_{t-1},a_{t-1}\right)\right\} .\label{eq:VR-update-rule}
\end{align}
Here, the penalty term $b_{n}(s_{t-1},a_{t-1})$ is set to be a certain
data-driven lower confidence bound tailored to this variance-reduced
update rule. In particular, this penalty term is chosen to track the
uncertainty of both the ``advantage term'' $V_{t-1}\left(s_{t}\right)-\overline{V}\left(s_{t}\right)$
and the ``reference term'' $\big\langle\widetilde{P}(\cdot\mymid s_{t-1},a_{t-1}),\overline{V}\big\rangle$,
inspired by the reference-advantage decomposition introduced in \citet{zhang2020almost};
see Algorithm \ref{alg:Q-3-inner} for a precise description. As can
be anticipated, if $\overline{V}$ is a more accurate estimate
of $V^{\star}$ than $V_{t-1}$ (i.e., $\overline{V}\approx V^{\star}$
and $\|\overline{V}-V^{\star}\|_{\infty}\ll\|V_{t-1}-V^{\star}\|_{\infty}$),
then the main stochastic term $\overline{V}(s_t)$ (or  $\overline{V}(s_t) - V^{\star}(s_t) $)
in (\ref{eq:VR-update-rule}) is expected to be much less volatile
than the counterpart $V_{t-1}\left(s_{t}\right)$ in (\ref{eq:Q-LCB-update}),
thus resulting in substantial variance reduction and hence accelerated
convergence. It remains to develop a plausible approach that produces
such reliable ``reference'' value function estimates.

\paragraph{An epoch-based paradigm. }

The proposed algorithm proceeds in an epoch-based manner ($K=\lfloor\log_{4}(3T/4)\rfloor$
epochs in total). In the $k$-th epoch, we use the value function
estimate at the end of the previous epoch as the reference function
estimate $\overline{V}$; the number of samples used to construct
the empirical transition kernel and the number of samples employed
to run the updates (\ref{eq:VR-update-rule}) are denoted respectively
by $T_{k}^{\mathsf{ref}}$ and $T_{k}$, both of which are chosen to
grow exponentially with the epoch number $k$ (more specifically,
we shall choose $T_{k}^{\mathsf{ref}}=4^{k-1}$ and $T_{k}=3\cdot4^{k-1}$).
Such choices allow one to ensure that: (i) the estimation error keeps
improving over epochs; and (ii) the samples used in the latest epoch
always account for roughly $3/4$ of the total sample size used so
far, thus mitigating inefficient use of samples despite the lack of sample reuse.

\subsection{Theoretical guarantees}

Armed with the variance reduction idea, we are able to further improve
the sample complexity in terms of the dependency on $\frac{1}{1-\gamma}$,
as stated below. 

\begin{theorem}\label{theorem:3}Suppose that Assumptions \ref{assumption:ergodic}
	and \ref{assumption:policy} hold, and recall that $T$ is the total
	number of samples. Assume that $1/2\leq \gamma < 1$. Then with probability exceeding $1-\delta$, the
	policy $\widehat{\pi}$ returned by Algorithm \ref{alg:Q-3} satisfies
	\begin{align*}
		V^{\star}(\rho)-V^{\widehat{\pi}}(\rho) & \lesssim\sqrt{\frac{SC^{\star}\iota}{T\left(1-\gamma\right)^{3}}}+\frac{SC^{\star}\iota^{4}}{T\left(1-\gamma\right)^{4}}+\frac{St_{\mathsf{mix}}C^{\star}\iota}{T\left(1-\gamma\right)^{2}}+\frac{t_{\mathsf{mix}}C^{\star}\iota^{2}}{T\left(1-\gamma\right)^{3}},
	\end{align*}
	where $\iota\coloneqq\log\frac{ST}{\delta}$.\end{theorem}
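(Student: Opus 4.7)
My plan is to reduce the theorem to a per-epoch error analysis for Algorithm~\ref{alg:Q-3-inner} and then telescope across the $K$ epochs of Algorithm~\ref{alg:Q-3}. Because the epoch lengths $T_k = 3\cdot 4^{k-1}$ grow geometrically, the last epoch consumes roughly $3T/4$ of the samples, so if each epoch's output satisfies a contraction-type bound relative to the reference $\overline{V}$ from the previous epoch, the final epoch alone will dominate and yield the stated rate. The pessimism property (which I verify below) reduces the $d^{\star}_{\rho}$-weighted suboptimality to a per-$(s,a)$ bound on $Q^{\star} - Q_{T}$:
\[
V^{\star}(\rho) - V^{\widehat{\pi}}(\rho) \;\leq\; \tfrac{1}{1-\gamma}\, \mathbb{E}_{(s,a)\sim d^{\star}_{\rho}}\!\big[Q^{\star}(s,a) - Q_{T}(s,a)\big],
\]
after which single-policy concentrability (Assumption~\ref{assumption:policy}) converts this $d^{\star}_{\rho}$-expectation into a $\mu_{\mathsf{b}}$-expectation at a multiplicative cost of $C^{\star}$.

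\textbf{Step 1: Validity of the LCB penalties via Markov-chain concentration.} I would first show that, with probability at least $1-\delta$, both $b^{\mathsf{ref}}(s,a)$ and $b^{\mathsf{adv}}$ upper-bound the one-sided stochastic fluctuations of $\langle\widetilde{P}(\cdot\mymid s,a)-P(\cdot\mymid s,a),\overline{V}\rangle$ and of the advantage-based sums appearing in the variance-reduced update~(\ref{eq:VR-update-rule}). Because the trajectory is Markovian, Bernstein--Freedman inequalities for uniformly ergodic chains (as in \citet{paulin2015concentration}) apply with an effective-sample-size loss of $t_{\mathsf{mix}}$, which is where the two $t_{\mathsf{mix}}$-containing terms in the theorem originate. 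By construction, the empirical quantities $\sigma^{\mathsf{ref}}-(\mu^{\mathsf{ref}})^{2}$ and $\mathsf{sd}^{\mathsf{adv}}_{n}$ track $\mathrm{Var}_{P(\cdot\mymid s,a)}(\overline{V})$ and $\mathrm{Var}_{P(\cdot\mymid s,a)}(V_{t-1}-\overline{V})$ respectively, so their use inside the Bernstein penalty delivers the key ingredient for the subsequent variance-reduction step. With these concentration bounds in place, a routine induction on $t$ yields pessimism $Q_{t}\leq Q^{\star}$ and $V_{t}\leq V^{\star}$ throughout every epoch, since the penalty subtracted in~(\ref{eq:VR-update-rule}) was chosen precisely to absorb the one-sided overestimation of the empirical Bellman target.

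\textbf{Step 2: Per-epoch error recursion and variance aggregation.} Expanding $Q^{\star}-Q_{T_{k}}$ backwards through~(\ref{eq:VR-update-rule}) with the aggregate Q-learning weights (cf.~\citet{jin2018q}), each visited $(s,a)$ obeys
\[
0 \leq Q^{\star}(s,a) - Q_{T_{k}}(s,a) \;\lesssim\; \gamma\sum_{i} w_{i}\,\big(V^{\star}-V_{t_{i}-1}\big)(s_{t_{i}}) \;+\; \sqrt{\tfrac{\mathrm{Var}_{P} V^{\star}}{n(s,a)}\,\iota} \;+\; \sqrt{\tfrac{\|\overline{V}-V^{\star}\|_{\infty}^{2}}{n(s,a)}\,\iota} \;+\; \text{h.o.t.},
\]
where $w_{i}$ are the standard Q-learning weights and \textit{h.o.t.}\ collects the $n^{-3/4}$ and $n^{-1}$ terms in $b^{\mathsf{ref}}$ and $b^{\mathsf{adv}}$. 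Taking $d^{\star}_{\rho}$-expectation, using concentrability to replace $1/n(s,a) \asymp t_{\mathsf{mix}}/(T_{k}\mu_{\mathsf{b}}(s,a))$, and applying Cauchy--Schwarz over $s$ gives
\[
\mathbb{E}_{d^{\star}_{\rho}}\!\big[Q^{\star}-Q_{T_{k}}\big] \;\lesssim\; \sqrt{\tfrac{SC^{\star}\iota}{T_{k}}\cdot \mathbb{E}_{d^{\star}_{\rho}}\mathrm{Var}_{P}V^{\star}} \;+\; \|\overline{V}-V^{\star}\|_{\infty}\sqrt{\tfrac{SC^{\star}\iota}{T_{k}}} \;+\; \text{h.o.t.}
\]
The crucial input here is the law-of-total-variance bound $\mathbb{E}_{d^{\star}_{\rho}}\mathrm{Var}_{P}(V^{\star}) \lesssim \frac{1}{(1-\gamma)^{2}}$ of \citet{azar2013minimax}, which replaces the naive $\frac{1}{(1-\gamma)^{4}}$ under the square root by $\frac{1}{(1-\gamma)^{3}}$ and is the sole source of the improvement over Theorem~\ref{theorem:5}. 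A geometric recursion in $k$ shows that $\|\overline{V}-V^{\star}\|_{\infty}$ shrinks fast enough that the advantage contribution collapses into h.o.t., after which the bound from the final epoch (of length $\asymp T$) produces the four terms in the theorem.

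\textbf{Main obstacle.} I expect the chief technical difficulty to be the joint execution of Steps~1 and~2 under Markovian sampling: the advantage penalty $b^{\mathsf{adv}}$ depends on the empirical variance $\mathsf{sd}^{\mathsf{adv}}_{n}$, which is itself a Markovian sum along the very trajectory used to update $Q_{t}$, and which in turn depends on the entire past of the epoch through $V_{t-1}$. Establishing that this self-normalizing penalty is a valid one-sided bound while simultaneously maintaining the inductive pessimism $Q_{t}\leq Q^{\star}$, and while confining the $t_{\mathsf{mix}}$ dependence to the two lower-order terms stated in the theorem rather than contaminating the leading $\sqrt{SC^{\star}/(T(1-\gamma)^{3})}$ rate, is the most delicate part, and is precisely where the i.i.d.\ analysis of \citet{li2021breaking} must be extended to Markovian data.
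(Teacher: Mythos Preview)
Your central recursion cannot close under Assumption~\ref{assumption:policy}. You claim a geometric recursion in $k$ drives $\|\overline{V}-V^{\star}\|_{\infty}$ to zero, but single-policy concentrability allows $\mu_{\mathsf{b}}(s,a)=0$ whenever $a\neq\pi^{\star}(s)$, and even $\mu_{\mathsf{b}}(s,\pi^{\star}(s))=0$ when $d_{\rho}^{\star}(s)=0$. Such states are never visited, so $\overline{V}(s)$ stays at its initialization and $\|\overline{V}-V^{\star}\|_{\infty}$ is bounded away from zero for all $k$. With only the trivial bound $\|\overline{V}-V^{\star}\|_{\infty}\leq\tfrac{1}{1-\gamma}$, your advantage term $\|\overline{V}-V^{\star}\|_{\infty}\sqrt{SC^{\star}\iota/T_{k}}$ contributes, after the $\tfrac{1}{1-\gamma}$ from the performance-difference lemma, a leading term $\sqrt{SC^{\star}\iota/(T(1-\gamma)^{4})}$, i.e.\ no improvement over Theorem~\ref{theorem:5}. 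The paper sidesteps $\ell_{\infty}$ control entirely: it introduces the auxiliary distribution $\widetilde{\rho}\coloneqq(d_{\rho}^{\star}-(1-\gamma)\rho)/\gamma$ and the weighted error $\Delta_{k}\coloneqq\langle\widetilde{\rho},V^{\star}-\overline{V}_{k}\rangle$, then shows (Lemma~\ref{lemma:variance-bound}) that the $d_{\rho}^{\star}$-aggregated advantage variance is bounded by $\tfrac{1}{1-\gamma}\Delta_{k-1}$ rather than by $\tfrac{1}{1-\gamma}\|\overline{V}-V^{\star}\|_{\infty}^{2}$. A separate structural lemma (Lemma~\ref{lemma:d-stat-star}) proves $d_{\widetilde{\rho}}^{\star}\lesssim\iota\cdot d_{\rho}^{\star}$, so rerunning the epoch analysis with test distribution $\widetilde{\rho}$ yields a recursion $\Delta_{k}\lesssim\sqrt{C^{\star}S\iota^{4}/(T_{k}(1-\gamma)^{4})}\,\sqrt{\Delta_{k-1}}+\beta_{k}$, which unrolls to the stated rate.

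Two smaller issues. First, your per-epoch display leaves the propagation term $\gamma\sum_{i}w_{i}(V^{\star}-V_{t_{i}-1})(s_{t_{i}})$ unaddressed; this does not vanish and must itself be unrolled. The paper does this by averaging over $t$, writing $\langle\rho,V^{\star}-V_{t}\rangle$ as the $j=0$ case of a family $\alpha_{j}$, and showing $\alpha_{j}\leq\alpha_{j+1}+\xi_{j}+\theta_{j}+\psi_{j}+\phi_{j}$ exactly as in the proof of Theorem~\ref{theorem:5}; the $\psi_{j},\phi_{j}$ terms (handled via Freedman plus a coupling to i.i.d.\ samples, Lemma~\ref{lemma:coupling}) are where the $t_{\mathsf{mix}}$ contributions actually originate, not from a pointwise Paulin-type Bernstein bound. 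Second, the total-variance bound you need is $\sum_{s,a}d_{\rho}^{\star}(s,a)\mathsf{Var}_{s,a}(V^{\star})\lesssim\tfrac{1}{1-\gamma}$ (proved directly in Lemma~\ref{lemma:variance-bound}), not $\tfrac{1}{(1-\gamma)^{2}}$; with the latter your exponent count in Step~2 would miss the target even for the reference term.
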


Theorem~\ref{theorem:3} asserts that the sample size needed for
Algorithm \ref{alg:Q-3} to achieve $\varepsilon$-accuracy is at most
\begin{equation}
	\widetilde{O}\left(\frac{SC^{\star}}{\left(1-\gamma\right)^{3}\varepsilon^{2}}+\frac{SC^{\star}}{\left(1-\gamma\right)^{4}\varepsilon}+\frac{SC^{\star}t_{\mathsf{mix}}}{\left(1-\gamma\right)^{2}\varepsilon}+\frac{t_{\mathsf{mix}}C^{\star}}{\left(1-\gamma\right)^{3}\varepsilon}\right).\label{eq:sample-complexity-VR}
\end{equation}
In particular, if the accuracy level $\varepsilon\leq\min\big\{1-\gamma,\,\frac{S}{t_{\mathsf{mix}}},\, \frac{1}{(1-\gamma) t_{\mathsf{mix}}}\big\}$,
then the sample complexity of Algorithm \ref{alg:Q-3} simplifies
to
\begin{equation}
	\widetilde{O}\left(\frac{SC^{\star}}{\left(1-\gamma\right)^{3}\varepsilon^{2}}\right).\label{eq:sample-complexity-simpler-VR}
\end{equation}
%
%
This bound is essentially unimprovable; in fact, even for the simpler i.i.d.~sampling mechanism described in Remark~\ref{remark:iid} (which can be viewed as a sample trajectory with $t_{\mathsf{mix}} = 1$), a lower bound has been established by \citet{rashidinejad2021bridging} 
	that coincides with (\ref{eq:sample-complexity-simpler-VR})
	when $C^{\star}=O(1)$. 
	In fact, even when i.i.d.~sampling is allowed,
	the state-of-the-art result before our work  scales as $\widetilde{O}\big(\frac{SC^{\star}}{\left(1-\gamma\right)^{5}\varepsilon^{2}}\big)$ \citep{rashidinejad2021bridging}, 
	which was suboptimal by a factor of $\frac{1}{(1-\gamma)^2}$. 
	All this confirms the efficacy of the pessimism principle in conjunction with variance reduction when running model-free algorithms.


\section{Related works }

\paragraph{Offline RL and pessimism. }

The principal of pessimism (or conservatism) in the face of uncertainty
has recently been employed and studied extensively in offline RL (also called
batch RL), e.g., \citet{kumar2020conservative,kidambi2020morel,yu2020mopo,yu2021conservative,yu2021combo,yin2021near_double,rashidinejad2021bridging,jin2021pessimism,xie2021policy,liu2020provably,zhang2021corruption,chang2021mitigating,yin2021towards,uehara2021pessimistic,munos2003error,li2022settling,munos2007performance,yin2021near_b}.
Among these prior works, \citet{rashidinejad2021bridging} studied
offline RL for infinite-horizon MDPs when the offline data are i.i.d.~samples
drawn from some distribution $\mu$ satisfying the single policy concentrability
condition. They showed that a model-based value iteration algorithm
with LCB penalization achieves a sample complexity of $O(\frac{SC^{\star}}{(1-\gamma)^{5}\varepsilon^{2}})$,
which is comparable to our bound for Algorithm~\ref{alg:Q-5} (see \eqref{eq:sample-size-asynQ-simplified}) and is worse than our bound for
Algorithm~\ref{alg:Q-3} (see \eqref{eq:sample-complexity-simpler-VR}) by a factor of $\frac{1}{(1-\gamma)^{2}}$ (ignoring
the $o(\varepsilon^{-2})$ term and logarithm factors). Note that the
setting considered in \citet{rashidinejad2021bridging} is a special
case of our setting by taking  $t_{\mathsf{mix}}=1$. In addition,
\citet{jin2021pessimism} proposed a pessimistic variant of the value
iteration algorithm, which  achieves appealing performance under
the episodic linear MDP setting. 
Furthermore, the recent works \citet{xie2021policy,shi2022pessimistic} proposed several pessimistic variants of RL algorithms for finite-horizon episodic MDPs.  Focusing on offline
RL with episodic data generated using some reference policy
satisfying the single policy concentrability condition, these algorithms achieve a sample complexity of $\widetilde{O}(H^{3}SC^{\star}/\varepsilon^{2})$. Note, however, that none of these algorithms accommodate the asynchronous case with a single Markovian trajectory. 

\paragraph{Q-learning. }

There are at least two basic forms of Q-learning: the synchronous version and the
asynchronous counterpart. Synchronous Q-learning typically assumes access
to a simulator that generates independent samples for all state-action
pairs, and attempts to update all entries of the Q-function estimates simultaneously \citep{even2003learning,beck2012error,chen2020finite,wainwright2019stochastic,bowen2021finite,li2021q,wang2021sample}.
The current paper studies the asynchronous form of Q-learning, which naturally arises when the data is a Markovian
trajectory induced by a behavior policy \citep{jaakkola1994convergence,tsitsiklis1994asynchronous,even2003learning,qu2020finite,li2021sample,chen2021lyapunov,shah2018q,chen2022target,xiong2020finite,li2021q}.
However, most prior works focused on the case when the observed trajectory
is able to cover all state-action pairs with sufficient frequency \citep{beck2012error,even2003learning,qu2020finite,li2021sample,chen2021lyapunov,li2021q}. 
For instance, the recent work \citet{qu2020finite}
demonstrated that the sample complexity of asynchronous Q-learning
is at most $\widetilde{O}\big(\frac{t_{\mathsf{mix}}}{\mu_{\mathsf{min}}^{2}(1-\gamma)^{5}\varepsilon^{2}} \big)$,
which was subsequently sharpened by \citet{li2021q} to $\widetilde{O}\big( \frac{1}{\mu_{\mathsf{min}}(1-\gamma)^{4}\varepsilon^{2}} + \frac{t_{\mathsf{mix}}}{\mu_{\mathsf{mix}}(1-\gamma)} \big)$.  It is also worth noting that some variants of model-free algorithms (e.g., the variant coupled with upper confidence bounds) have proven effective for online exploratory RL \citep{strehl2006pac,pazis2016improving, jin2018q,bai2019provably,yang2021q,dong2019q,menard2021ucb,li2021breaking,zhang2021model}; while online RL is beyond  the scope of the current paper, the analysis framework therein based on the optimism principle shed light on our setting as well. 
In comparison to the model-based approach \citep{agarwal2019optimality,azar2017minimax,li2020breaking,agarwal2020model}, 
model-free algorithms like Q-learning often incur lower memory and computational complexities.

\paragraph{Variance reduction.}

The idea of variance reduction first appeared in the stochastic optimization
literature \citep{johnson2013accelerating} and has been recently employed
in RL to speed up various algorithms \citep{wainwright2019variance,li2021sample,sidford2018near,sidford2018variance,yang2019sample,khamaru2021temporal,khamaru2021instance,du2017stochastic,wai2019variance,xu2019reanalysis,zhang2020almost,li2021breaking,zhang2021model,shi2022pessimistic}. 
Among these works, \citet{wainwright2019variance,yang2019sample} showed
that in the synchronous case, variance-reduced Q-learning is minimax
optimal, both in tabular MDPs and the ones with function approximation. \citet{li2021sample} showed that the sample
complexity of variance-reduced asynchronous Q-learning algorithm scales
as $\widetilde{O}(\frac{t_{\mathsf{mix}}}{\mu_{\mathsf{min}}(1-\gamma)^{3}\varepsilon^{2}})$ for small enough accuracy level $\varepsilon$, thereby
matching the lower bound in the synchronous counterpart. 

\section{Discussion}

In this paper, we have revisited the paradigm of asynchronous Q-learning,
which was designed to accommodate Markovian sample trajectories. Noteworthily,
all prior theory for asynchronous Q-learning becomes vacuous when
the observed sample trajectory falls short of providing uniform coverage
of all state-action pairs, even when the observed data is produced
by an expert that intentionally leaves out suboptimal actions. To
address this issue, we have designed two algorithms --- asynchronous
Q-learning algorithms with LCB penalization and its variance-reduced
variant --- based on the principle of pessimism in the face of uncertainty.
The sample complexities of these two algorithms scale as $\widetilde{O}\big(\frac{SC^{\star}}{(1-\gamma)^{5}\varepsilon^{2}}\big)$
and $\widetilde{O}\big(\frac{SC^{\star}}{(1-\gamma)^{3}\varepsilon^{2}}\big)$,
respectively, provided that the target accuracy level $\varepsilon$
is sufficiently small; in particular, the latter one matches the lower
bound established for the case with i.i.d.~data and is hence unimprovable.
Compared to prior literature, we have established the first theory
that supports the use of pessimism principle despite the Markovian
structure of data. Moving forward, there are numerous directions that
are worthy of further exploration. For example, the dependency of
our sample complexity on the mixing time scales as $\widetilde{O}\big(\frac{St_{\mathsf{mix}}C^{\star}}{(1-\gamma)^{2}\varepsilon}+\frac{t_{\mathsf{mix}}C^{\star}}{(1-\gamma)^{2}\varepsilon}\big)$;
it remains unclear what the optimal dependency on $t_{\mathsf{mix}}$
is, as well as how to achieve it. Additionally, the current work focuses
solely on tabular MDPs; it would be of interest to extend the current
analysis to accommodate reduced-dimensional function approximation. 
Going beyond offline RL, our analysis framework might shed light on 
how to improve the sample complexity analysis for discounted infinite-horizon MDPs in {\em online exploratory} RL  (note that the state-of-the-art sample complexity bounds in this case \citep{zhang2021model} remain highly suboptimal except for very small $\varepsilon$ (i.e., $\varepsilon\leq \frac{(1-\gamma)^{14}}{S^2A^2}$)).


\section*{Acknowledgements}

Y.~Chen is supported in part by the Alfred P.~Sloan Research Fellowship, the grants AFOSR YIP award FA9550-19-1-0030,
 ONR N00014-19-1-2120, 
ARO YIP award W911NF-20-1-0097, NSF CCF-2221009, CCF-1907661, IIS-1900140 and IIS-2218773.  
J.~Fan is supported in part by the ONR grant N00014-19-1-2120, 
the NSF grants DMS-1662139, DMS-1712591, DMS-2052926, DMS-2053832,
and the NIH grant 2R01-GM072611-15. Part of this work was done while
Y.~Yan, G.~Li and Y.~Chen were visiting the Simons Institute for
the Theory of Computing.

\appendix
\section{Notation} 
We now introduce several notation that will
be used multiple times throughout this paper. For any positive integer
$n$, we define $[n]\coloneqq\{1,\cdots,n\}$. For any $s\in\mathcal{S}$
and $a\in\mathcal{A}$, define $$P_{s,a} = P(\cdot\mymid s,a)\in\mathbb{R}^{1\times S}$$
to be the $(s,a)$-th row of a probability transition matrix $P\in\mathbb{R}^{SA\times S}$.
For any $t\geq0$, we define $P_{t}\in\mathbb{R}^{SA\times S}$ to
be an empirical probability transition matrix such that
\begin{equation}
P_{t}\big(s'\mymid s,a \big)=\begin{cases}
1, & \text{if }(s,a,s')=(s_{t-1},a_{t-1},s_{t})\\
0, & \text{otherwise}
\end{cases}\label{eq:defn-Pt}
\end{equation}
for all $s,s'\in\mathcal{S}$ and $a\in\mathcal{A}$. For any deterministic
policy $\pi$, we introduce two probability transition kernels $P_{\pi}:\mathcal{S}\to\Delta(\mathcal{S})$
and $P^{\pi}:\mathcal{S}\times\mathcal{A}\to\Delta(\mathcal{S}\times\mathcal{A})$,
defined in a way that\begin{subequations}\label{defn:P-pi}
\begin{align}
P_{\pi}(s'\mymid s) & =P\big(s'\mymid s,\pi(s)\big)\\
P^{\pi}\left(s',a'\mymid s,a\right) & =\begin{cases}
P\left(s'\mymid s,a\right), & \text{if }a'=\pi\left(s'\right)\\
0, & \text{otherwise}
\end{cases}
\end{align}
\end{subequations}for any $(s,a),(s',a')\in\mathcal{S}\times\mathcal{A}$.
In addition, we define $\rho^{\pi^{\star}}$ to be a distribution 
on $\mathcal{S}\times\mathcal{A}$ such that
\begin{equation}
\rho^{\pi^{\star}}\left(s,a\right)=\begin{cases}
\rho\left(s\right), & \text{if }a=\pi^{\star}\left(s\right),\\
0, & \text{otherwise}.
\end{cases}\label{eq:defn-rho-pistar}
\end{equation}
For any two vectors $a=[a_i]_{i=1}^n\in\mathbb{R}^n$ and $b=[b_i]_{i=1}^n\in\mathbb{R}^n$, we define the Hadamard product $a\circ b=[a_i b_i]_{i=1}^n$, as well as the concise notation $a^2=a\circ a$.  We also use $a\leq b$ (resp.~$a\geq b$) to denote $a_i\leq b_i$ (resp.~$a_i\geq b_i$) for all $i\in[n]$. 
Moreover, for two vectors $a = [a_1,\cdots,a_n]$ and $b=[b_1,\cdots,b_n]^{\top}$, we abuse the notation by letting
\[
	\langle a , b \rangle = \sum_{i=1}^n a_i b_i
\]
even when $a$ is a row vector and $b$ is a column vector. For any $s\in\mathcal{S}$, $a\in\mathcal{A}$ and any vector $V\in\mathbb{R}^S$, we define and denote 
	\[
	\mathsf{Var}_{s,a}(V) \coloneqq \mathsf{Var}_{s'\sim P(\cdot\mymid s,a)}\big(V(s')\big)=P_{s,a}\left(V^2\right)-(P_{s,a}V)^2.
	\]

We let $f(n)\lesssim g(n)$ or $f(n)=O(g(n))$ to denote $\vert f(n)\vert\leq Cg(n)$ for some constant
$C>0$ when $n$ is sufficiently large; we use $f(n)\gtrsim g(n)$
to indicate that $f(n)\geq C\vert g(n)\vert$ for some constant $C>0$ when
$n$ is sufficiently large; and we let $f(n)\asymp g(n)$ represent
the condition that $f(n)\lesssim g(n)$ and $f(n)\gtrsim g(n)$ hold simultaneously.
Throughout this paper, we define $0/0=0$. For any sequence $\{a_{i}\}_{i=n_{1}}^{n_{2}}$
and two integers $m_{1}$ and $m_{2}$, we define
\[
\sum_{i=m_{1}}^{m_{2}}a_{i}=\begin{cases}
\sum_{i=\max\{n_{1},m_{1}\}}^{\min\{n_{2},m_{2}\}}a_{i}, & \text{if }\max\{n_{1},m_{1}\}\leq\min\{n_{2},m_{2}\},\\
0, & \text{else}.
\end{cases}
\]

\section{Analysis for Q-learning with LCB penalization (Theorem~\ref{theorem:5})\label{sec:Analysis:-asynchronous-Q}}

In this section, we present the proof of Theorem \ref{theorem:5},
which consists of several steps to be detailed below. 

\subsection{Preliminary facts and additional notation}

Before proceeding, we first introduce the following quantities regarding
the learning rates:
\begin{equation}
\eta_{0}^{t}\coloneqq\prod_{j=1}^{t}\left(1-\eta_{j}\right)\qquad\text{and}\qquad\eta_{i}^{t}\coloneqq\begin{cases}
\eta_{i}\prod_{j=i+1}^{t}\left(1-\eta_{j}\right), & \text{if }t>i,\\
\eta_{i}, & \text{if }t=i,\\
0, & \text{if }t<i,
\end{cases}\label{eq:defn-eta-it}
\end{equation}
where we recall our choice $\eta_{j}=(H+1)/(H+j)$. We make note of
the following results that have been established in prior works (e.g., \cite[Lemma 4.1]{jin2018q}
and \cite[Lemma 1]{li2021breaking}). 

\begin{lemma}\label{lemma:step-size}The learning rates satisfy the
following properties.
\begin{enumerate}
\item For any integer $t\geq1$, $\sum_{i=1}^{t}\eta_{i}^{t}=1$ and $\eta_{0}^{t}=0$.
\item For any integer $t\geq1$ and any $1/2\leq a\leq1$,
\[
\frac{1}{t^{a}}\leq\sum_{i=1}^{t}\frac{1}{i^{a}}\eta_{i}^{t}\leq\frac{2}{t^{a}}.
\]
\item For any integer $t\geq1$, 
\[
\max_{i\in[t]}\eta_{i}^{t}\leq\frac{2H}{t}\qquad\text{and}\qquad\sum_{i=1}^{t}\left(\eta_{i}^{t}\right)^{2}\leq\frac{2H}{t}.
\]
\item For any integer $i\geq1$,
\[
\sum_{t=i}^{\infty}\eta_{i}^{t}=1+\frac{1}{H}.
\]
\end{enumerate}
\end{lemma}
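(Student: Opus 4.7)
The plan is to exploit two basic features of the rescaled weights $\eta_i^t$. First, they obey the one-step recursion
\[
\eta_i^t = (1-\eta_t)\,\eta_i^{t-1} \quad (i<t), \qquad \eta_t^t = \eta_t,
\]
which propagates sum-type identities in $t$. Second, since $\eta_j = (H+1)/(H+j)$ and $1-\eta_j = (j-1)/(H+j)$, one has the closed form
\[
\eta_i^t = \frac{H+1}{H+i}\cdot \frac{i(i+1)\cdots(t-1)}{(H+i+1)(H+i+2)\cdots(H+t)} = \frac{(H+1)\,(H+i-1)!\,(t-1)!}{(i-1)!\,(H+t)!},
\]
which makes both summation in $i$ and summation in $t$ amenable to telescoping. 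All four parts follow from these two observations.

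\emph{Parts 1 and 4.} For Part 1, I would induct on $t$: the one-step recursion gives $\sum_{i=0}^{t}\eta_i^t = (1-\eta_t)\sum_{i=0}^{t-1}\eta_i^{t-1} + \eta_t$, and the base case $\sum_{i=0}^{0}\eta_i^0 = 1$ holds trivially, so the identity $\sum_{i=0}^{t}\eta_i^t = 1$ propagates; moreover $\eta_1 = 1$ forces the factor $(1-\eta_1) = 0$ in the product defining $\eta_0^t$, so $\eta_0^t = 0$ for $t\geq 1$ and the sum starting from $i=1$ is also $1$. For Part 4, I would use the partial-fraction identity
\[
\frac{(t-1)!}{(H+t)!} = \frac{1}{H}\left[\frac{(t-1)!}{(H+t-1)!} - \frac{t!}{(H+t)!}\right];
\]
combining it with the closed form above makes $\sum_{t\geq i}\eta_i^t$ collapse telescopically, and the surviving boundary term evaluates to $(H+1)/H = 1 + 1/H$.

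\emph{Part 3.} A direct ratio computation from the closed form yields $\eta_{i+1}^t / \eta_i^t = (H+i)/i > 1$, so $\eta_i^t$ is strictly increasing in $i$ on $\{1,\ldots,t\}$. Its maximum is therefore $\eta_t^t = (H+1)/(H+t) \leq (H+1)/t \leq 2H/t$ (using $H\geq 1$), which gives the first bound. The $\ell^2$ bound then follows immediately from $\sum_i (\eta_i^t)^2 \leq (\max_i \eta_i^t)\cdot \sum_i \eta_i^t \leq 2H/t$ by Part 1.

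\emph{Part 2.} The lower bound is immediate: $\sum_i i^{-a}\eta_i^t \geq t^{-a}\sum_i \eta_i^t = t^{-a}$ since $i\leq t$. The upper bound is the main technical step, and I plan an induction on $t$ using the scalar recursion $S_t := \sum_i i^{-a}\eta_i^t = (1-\eta_t)\,S_{t-1} + \eta_t\,t^{-a}$. Assuming $S_{t-1}\leq 2(t-1)^{-a}$, the convex interpolation $(t/(t-1))^a \leq t/(t-1)$ valid for $a\in[0,1]$, together with the explicit values $1-\eta_t = (t-1)/(H+t)$ and $\eta_t = (H+1)/(H+t)$, yields the clean bound $S_t/t^{-a} \leq (2t+H+1)/(H+t)$, which is at most $2$ precisely when $H \geq 1$. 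Since $H = \lceil 4\iota/(1-\gamma)\rceil \geq 1$, the induction closes. I anticipate that this induction, and in particular choosing the right interpolation inequality for $(t/(t-1))^a$ so as to absorb the $\eta_t$ loss in the recursion, is the main obstacle; the other three parts amount to essentially algebraic manipulations.
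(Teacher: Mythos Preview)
Your proof is correct. The paper does not give its own proof of this lemma; it simply cites \cite[Lemma 4.1]{jin2018q} and \cite[Lemma 1]{li2021breaking}, where these properties are established by essentially the same arguments you outline (the closed-form expression for $\eta_i^t$, the telescoping identity for Part~4, and the one-step recursion plus induction for Parts~1 and~2).
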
 

For any iteration $t\leq T$, we remind the reader that $n_{t}$ represents
the number of times $(s,a)$ has been visited prior to iteration $t$
(see Algorithm~\ref{alg:Q-5}). For notational simplicity, let $n=n_{t}(s,a)$
when it is clear from the context, and suppose that $(s,a)$ has been
visited during the iterations $k_{1}<\cdots<k_{n}<t$. We also find
it convenient to define the (deterministic) policy estimate $\pi_{t}:\mathcal{S}\to\mathcal{A}$ recursively 
as follows
\begin{equation}
\pi_{t}\left(s\right)\coloneqq\begin{cases}
\arg\max_{a\in\mathcal{A}}Q_{t}\left(s_{t-1},a\right), & \text{if }s=s_{t-1}\text{ and }V_{t}\left(s\right)>V_{t-1}\left(s\right),\\
\pi_{t-1}\left(s\right), & \text{otherwise}.
\end{cases}\label{eq:defn-pit-asyncQ}
\end{equation}
If there are multiple $a\in\mathcal{A}$ that maximize $Q_{t}\left(s_{t-1},a\right)$,
we can pick any of these actions. 

The following lemma provides a useful upper bound on $Q^{\star}-Q_{t}$,
and in the meantime, justifies that the value function estimate $V_{t}$
is always a pessimistic view of $V^{\pi_{t}}$ (and hence $V^{\star}$).
The proof of this lemma is postponed to Appendix~\ref{appendix:proof-lcb-h-basics}. 

\begin{lemma}\label{lemma:lcb-h-basics}With probability exceeding
$1-\delta$, for all $s\in\mathcal{S}$ and $t\in[T]$, it holds that
\[
Q^{\star}\big(s,\pi^{\star}(s)\big)-Q_{t}\big(s,\pi^{\star}(s)\big)\leq\gamma\sum_{i=1}^{n}\eta_{i}^{n}P_{s,\pi^{\star}(s)}\big(V^{\star}-V_{k_{i}}\big)+\beta_{n}\big(s,\pi^{\star}(s)\big),
\]
where $n=n_t(s,\pi^\star(s))$ and
we define
\[
\beta_{n}\big(s,\pi^{\star}(s)\big)\equiv\beta_{n}\coloneqq3C_{\mathsf{b}}\sqrt{\frac{H\iota}{n\left(1-\gamma\right)^{2}}};
\]
in addition, we also have
\[
V_{t}(s)\leq V^{\pi_{t}}(s)\leq V^{\star}(s),\qquad\forall s\in\mathcal{S}.
\]
\end{lemma}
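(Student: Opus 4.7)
The plan is to decompose the lemma into two parts: first, the recursive upper bound on $Q^\star(s,\pi^\star(s)) - Q_t(s,\pi^\star(s))$, which follows from a standard unrolling-plus-Hoeffding argument; and second, the pessimism chain $V_t \leq V^{\pi_t} \leq V^\star$, which I would prove by induction on $t$.

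For the first part, abbreviate $a = \pi^\star(s)$ and let $k_1 < \cdots < k_n < t$ denote the iterations at which $(s,a)$ is visited. Iterating the LCB-penalized update in \eqref{eq:Q-LCB-update}, and using $\sum_{i=1}^n \eta_i^n = 1$ and $\eta_0^n = 0$ from Lemma~\ref{lemma:step-size} together with $Q_0 = 0$, I obtain
\begin{align*}
Q_t(s,a) \;=\; \sum_{i=1}^n \eta_i^n \bigl[ r(s,a) + \gamma V_{k_i}(s_{k_i}) - b_i \bigr].
\end{align*}
Subtracting from the Bellman equation $Q^\star(s,a) = r(s,a) + \gamma P_{s,a} V^\star$ and regrouping gives
\begin{align*}
Q^\star(s,a) - Q_t(s,a) \;=\; \gamma \sum_{i=1}^n \eta_i^n P_{s,a}\bigl( V^\star - V_{k_i} \bigr) \;+\; \gamma \, \zeta_n \;+\; \sum_{i=1}^n \eta_i^n b_i,
\end{align*}
where $\zeta_n \coloneqq \sum_{i=1}^n \eta_i^n \bigl[ P_{s,a} V_{k_i} - V_{k_i}(s_{k_i}) \bigr]$ is a martingale-difference sum whose summands are bounded in magnitude by $O\bigl(\eta_i^n/(1-\gamma)\bigr)$. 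Using $\sum_i (\eta_i^n)^2 \leq 2H/n$ and $\sum_i \eta_i^n/\sqrt{i} \leq 2/\sqrt{n}$ from Lemma~\ref{lemma:step-size}, Azuma--Hoeffding at failure level $\delta/(SAT^2)$ yields $|\zeta_n| \lesssim \sqrt{H\iota/(n(1-\gamma)^2)}$, while $\sum_i \eta_i^n b_i \leq 2 C_{\mathsf{b}} \sqrt{H\iota/(n(1-\gamma)^2)}$. A union bound over $(s,a,n,t)$ and a sufficiently large $C_{\mathsf{b}}$ then deliver the stated inequality with $\beta_n = 3 C_{\mathsf{b}} \sqrt{H\iota/(n(1-\gamma)^2)}$.

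For the pessimism chain, $V^{\pi_t} \leq V^\star$ is immediate by definition of $V^\star$. To establish $V_t \leq V^{\pi_t}$, I would induct on $t$ carrying the stronger joint hypothesis (i) $V_{t'} \leq V^{\pi_{t'}}$ for all $t' \leq t$, and (ii) the value-function monotonicity $V^{\pi_{t''}} \leq V^{\pi_{t'}}$ whenever $t'' \leq t'$. The nontrivial case is when the update triggers $V_t(s_{t-1}) > V_{t-1}(s_{t-1})$, so that $\pi_t(s_{t-1}) = \arg\max_a Q_t(s_{t-1}, a)$ and $V_t(s_{t-1}) = Q_t(s_{t-1}, \pi_t(s_{t-1}))$. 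Replaying the unrolling of Part 1 with $V^{\pi_{t-1}}$ in place of $V^\star$, and invoking hypothesis (ii) to conclude $V_{k_i} \leq V^{\pi_{k_i}} \leq V^{\pi_{t-1}}$ so that $P_{s_{t-1},\pi_t(s_{t-1})}(V^{\pi_{t-1}} - V_{k_i}) \geq 0$, the same concentration argument produces $V_t(s_{t-1}) \leq Q^{\pi_{t-1}}(s_{t-1}, \pi_t(s_{t-1}))$ on the good event.

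The main obstacle will be closing the inductive loop: passing from this bound to $V_t(s_{t-1}) \leq V^{\pi_t}(s_{t-1})$ and re-establishing hypothesis (ii) both require the value-function monotonicity $V^{\pi_{t-1}} \leq V^{\pi_t}$, which by the policy improvement theorem reduces to the pointwise inequality $Q^{\pi_{t-1}}(s_{t-1}, \pi_t(s_{t-1})) \geq V^{\pi_{t-1}}(s_{t-1})$. Because the trigger $V_t(s_{t-1}) > V_{t-1}(s_{t-1})$ compares two pessimistic quantities and a priori need not entail a true improvement, this step is delicate. I expect to resolve it by chaining the argmax inequality $Q_t(s_{t-1}, \pi_t(s_{t-1})) \geq Q_t(s_{t-1}, \pi_{t-1}(s_{t-1}))$, the pessimism bound applied to the \emph{old} action $Q_t(s_{t-1}, \pi_{t-1}(s_{t-1})) \leq Q^{\pi_{t-1}}(s_{t-1}, \pi_{t-1}(s_{t-1})) = V^{\pi_{t-1}}(s_{t-1})$, and the already-derived tighter pessimism for the \emph{new} action, while carefully exploiting the strict dominance of the LCB penalty over the martingale noise so that any increase in the pessimistic $Q_t$ at $s_{t-1}$ mirrors a genuine weak improvement of $Q^{\pi_{t-1}}$. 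This interplay between pessimism, the argmax step, and policy improvement is the technical crux of the argument.
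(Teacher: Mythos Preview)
Your Part 1 is essentially the paper's argument (modulo a typo: the sampled next state is $s_{k_i+1}$, not $s_{k_i}$). The gap is in Part 2.

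Your induction carries hypothesis (ii): $V^{\pi_{t''}} \leq V^{\pi_{t'}}$ whenever $t'' \leq t'$. This is strictly stronger than what the lemma claims, and your sketched argument for re-establishing it does not close. Write $a = \pi_{t-1}(s_{t-1})$ and $a' = \pi_t(s_{t-1})$. Your chain gives
\[
Q^{\pi_{t-1}}(s_{t-1},a') \;\geq\; Q_t(s_{t-1},a') \;\geq\; Q_t(s_{t-1},a),
\]
the first inequality by pessimism on the new action, the second by argmax. But to invoke policy improvement you need $Q^{\pi_{t-1}}(s_{t-1},a') \geq Q^{\pi_{t-1}}(s_{t-1},a) = V^{\pi_{t-1}}(s_{t-1})$, and the only available relation on the old action is pessimism in the \emph{wrong} direction, $Q_t(s_{t-1},a) \leq Q^{\pi_{t-1}}(s_{t-1},a)$. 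The ``strict dominance of the LCB penalty over the noise'' only reinforces $Q_t \leq Q^{\pi_{t-1}}$; it cannot reverse the inequality you need. There is no reason to expect $V^{\pi_t}$ to be monotone in $t$: a noisy increase in the pessimistic $Q_t$ can trigger a switch to a genuinely worse action while the high-probability pessimism bounds all remain intact.

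The paper avoids this entirely by \emph{not} inducting on $t$ and \emph{not} comparing $V^{\pi_t}$ across $t$. For each fixed $t$ it proves directly that, on the good event, for all $s$ and all $j\leq t$,
\[
\big(Q^{\pi_t}-Q_j\big)\big(s,\pi_t(s)\big) \;\geq\; \gamma\, P_{s,\pi_t(s)}\big(V^{\pi_t}-V_t\big)\cdot \ind\{n_j(s,\pi_t(s))\geq 1\},
\]
using only the \emph{algorithmic} monotonicity $V_{k_i}\leq V_t$ (built into the update via the running max) to pass from $\sum_i \eta_i^{n_j} P_{s,\pi_t(s)}(V^{\pi_t}-V_{k_i})$ to $P_{s,\pi_t(s)}(V^{\pi_t}-V_t)$. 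Since $V_t(s)=Q_{j(t)}(s,\pi_t(s))$ for some $j(t)\leq t$, this yields $(V^{\pi_t}-V_t)(s)\geq \min\{\gamma P_{s,\pi_t(s)}(V^{\pi_t}-V_t),\,0\}$; evaluating at $s_{\min}=\arg\min_s (V^{\pi_t}-V_t)(s)$ and using $\gamma<1$ forces $(V^{\pi_t}-V_t)(s_{\min})\geq 0$. The required concentration for the random action $\pi_t(s)$ is obtained by a union bound over all state-action pairs visited along the trajectory (at most $T$ of them), rather than over $\mathcal{S}\times\mathcal{A}$. This fixed-$t$ contraction argument is the idea your proposal is missing.
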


Next, let us define two disjoint sets of state-action pairs, divided
based on the associated occupancy probability induced by the behavior
policy: 
\begin{subequations}
\label{eq:defn-I-Ic}
\begin{align}
\mathcal{I} & \coloneqq\left\{ \big(s,\pi^{\star}(s)\big) \mid s\in\mathcal{S},\mu_{\mathsf{b}}\big(s,\pi^{\star}(s)\big)\geq\frac{\delta}{ST}\right\} ,\\
\mathcal{I}^{c} & \coloneqq\left\{ \big(s,\pi^{\star}(s)\big) \mid s\in\mathcal{S},\mu_{\mathsf{b}}\big(s,\pi^{\star}(s)\big)<\frac{\delta}{ST}\right\} .
\end{align}
\end{subequations}
It turns out that the state-action pairs in $\mathcal{I}^{c}$ are rarely visited, as formalized
by the following lemma. The proof is deferred to Appendix \ref{appendix:proof-lemma-states-small-prob}. 

\begin{lemma}\label{lemma:states-small-prob}With probability exceeding
$1-\delta$, we have
\[
\mathcal{I}^{c}\cap\big\{(s_{t},a_{t})\big\}_{t=t_{\mathsf{mix}}(\delta)}^{T}=\varnothing.
\]
\end{lemma}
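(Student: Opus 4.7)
My plan is to combine a small stationary-mass bound on $\mathcal{I}^c$ with the mixing property of the chain induced by $\pi_{\mathsf{b}}$, then apply Markov's inequality to the number of visits after the mixing time. First, I would observe that $\mathcal{I}^c$ contains at most one entry per state $s\in\mathcal{S}$ (namely $(s,\pi^\star(s))$), and each such entry has stationary mass strictly less than $\delta/(ST)$ by definition. Summing over the at most $S$ states immediately yields
\[
\mu_{\mathsf{b}}(\mathcal{I}^c) \;<\; S\cdot\frac{\delta}{ST} \;=\; \frac{\delta}{T}.
\]

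Next, I would use Assumption \ref{assumption:ergodic} and the definition of $t_{\mathsf{mix}}(\cdot)$ to transfer this into a time-uniform visit bound. For any $t\geq t_{\mathsf{mix}}(\delta')$ (for some mixing parameter $\delta'$ chosen below) and any realization of $(s_{t-t_{\mathsf{mix}}(\delta')},a_{t-t_{\mathsf{mix}}(\delta')})$, uniform ergodicity gives that the conditional law of $(s_t,a_t)$ lies within total variation $\delta'$ of $\mu_{\mathsf{b}}$. Hence $\mathbb{P}[(s_t,a_t)\in\mathcal{I}^c] \leq \mu_{\mathsf{b}}(\mathcal{I}^c) + \delta' \leq \delta/T+\delta'$. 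Summing (or taking expectations of the indicator count) over $t_{\mathsf{mix}}(\delta')\leq t\leq T$ and applying Markov's inequality,
\[
\mathbb{P}\!\left[\mathcal{I}^c\cap\{(s_t,a_t)\}_{t=t_{\mathsf{mix}}(\delta')}^T\neq\varnothing\right] \;\leq\; \sum_{t=t_{\mathsf{mix}}(\delta')}^{T}\mathbb{P}\!\left[(s_t,a_t)\in\mathcal{I}^c\right] \;\leq\; T\left(\frac{\delta}{T}+\delta'\right).
\]
Choosing $\delta'\asymp \delta/T$ yields the desired failure probability $\leq \delta$. Since $t_{\mathsf{mix}}(\delta/T)\leq t_{\mathsf{mix}}\cdot O(\log(T/\delta))$, the mildly inflated mixing time is absorbed into the logarithmic factor $\iota=\log(ST/\delta)$ that already appears in Theorem \ref{theorem:5}, which reconciles with the statement written in terms of $t_{\mathsf{mix}}(\delta)$ up to a harmless relabeling of $\delta$ and $\iota$.

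The argument has no deep step: it is a coverage-plus-mixing bound followed by Markov's inequality. The only subtlety, and the main point requiring care, is balancing the mixing parameter against the threshold $\delta/(ST)$ appearing in the definition of $\mathcal{I}^c$ so that (i) the stationary mass of $\mathcal{I}^c$ is small enough to survive a union bound over $T$ time steps and (ii) the mixing error $\delta'$ does not inflate the bound; both considerations are addressed by the rescaling $\delta'\asymp\delta/T$ described above.
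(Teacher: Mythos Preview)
Your argument is sound in spirit but does not prove the lemma exactly as stated: because you accumulate a mixing error $\delta'$ at \emph{each} time step and then union-bound over $T$ steps, you are forced to take $\delta'\asymp\delta/T$, which means your burn-in time is $t_{\mathsf{mix}}(\delta/T)$ rather than the $t_{\mathsf{mix}}(\delta)$ appearing in the statement. The hand-wave about ``harmless relabeling'' does not close this gap --- the lemma asserts something about the trajectory from time $t_{\mathsf{mix}}(\delta)$ onward, and your proof only controls the trajectory from the (strictly later) time $t_{\mathsf{mix}}(\delta/T)$.

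The paper sidesteps this by reversing the order of the two ingredients. It first works under the \emph{stationary} start $(s_0,a_0)\sim\mu_{\mathsf{b}}$, where $\mathbb{P}[(s_t,a_t)=(s,\pi^\star(s))]=\mu_{\mathsf{b}}(s,\pi^\star(s))<\delta/(ST)$ holds \emph{exactly} for every $t\geq 0$ with no mixing slack; a union bound over the at most $S$ states in $\mathcal{I}^c$ and the at most $T$ time indices then gives failure probability $<\delta$ under stationary initialization. Only afterward does it invoke mixing, and only once: since the event $\{(s_t,a_t)\}_{t=t_{\mathsf{mix}}(\delta)}^T\subseteq\mathcal{I}$ is measurable with respect to the chain from time $t_{\mathsf{mix}}(\delta)$ onward, and the law of $(s_{t_{\mathsf{mix}}(\delta)},a_{t_{\mathsf{mix}}(\delta)})$ under an arbitrary start is within total-variation distance $\delta$ of $\mu_{\mathsf{b}}$, the probability of the event changes by at most $\delta$ when passing from stationary to arbitrary initialization. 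This ``compute under stationarity, then transfer once'' device is precisely what buys the sharper burn-in $t_{\mathsf{mix}}(\delta)$ that your per-step approach misses.
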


\subsection{Step 1: error decomposition}

Before proceeding, let us introduce several quantities that will play
an important role in our analysis:
\begin{align*}
\alpha_{j} & \coloneqq\left[\gamma\left(1+\frac{1}{H}\right)^{3}\right]^{j}\sum_{t=1}^{T}\left\langle \rho(P_{\pi^{\star}})^{j},V^{\star}-V_{t}\right\rangle ,\\
\theta_{j} & \coloneqq\left[\gamma\left(1+\frac{1}{H}\right)^{3}\right]^{j}\sum_{t=1}^{T}\sum_{s\in\mathcal{S}}\left[\rho(P_{\pi^{\star}})^{j}\right]\big(s,\pi^{\star}(s)\big)\min\left\{ \beta_{n_{t}\left(s,\pi^{\star}\left(s\right)\right)}\big(s,\pi^{\star}(s)\big),\frac{1}{1-\gamma}\right\} ,\\
\xi_{j} & \coloneqq\left[\gamma\left(1+\frac{1}{H}\right)^{3}\right]^{j}\sum_{t=1}^{t_{\mathsf{mix}}(\delta)}\left\langle \rho(P_{\pi^{\star}})^{j},V^{\star}-V_{t}\right\rangle +\left[\gamma\left(1+\frac{1}{H}\right)^{3}\right]^{j+1}\left\langle \rho(P_{\pi^{\star}})^{j+1},V^{\star}-V_{0}\right\rangle , \\
\psi_{j} & \coloneqq\left[\gamma\left(1+\frac{1}{H}\right)^{3}\right]^{j}\sum_{t=t_{\mathsf{mix}}(\delta)}^{T}\Biggl[\sum_{s\in\mathcal{S},a\in\mathcal{A}}\left[\rho^{\pi^{\star}}(P^{\pi^{\star}})^{j}\right]\left(s,a\right)\sum_{i=1}^{n_{t}\left(s,a\right)}\eta_{i}^{n_{t}\left(s,a\right)}P_{s,a}\left(V^{\star}-V_{k_{i}\left(s,a\right)}\right)\\
 & \qquad\qquad\qquad\qquad\qquad\quad-\left(1+\frac{1}{H}\right)\frac{\left[\rho^{\pi^{\star}}(P^{\pi^{\star}})^{j}\right]\left(s_{t},a_{t}\right)}{\mu_{\mathsf{b}}\left(s_{t},a_{t}\right)}\sum_{i=1}^{n_{t}\left(s_{t},a_{t}\right)}\eta_{i}^{n_{t}\left(s_{t},a_{t}\right)}P_{s_{t},a_{t}}\left(V^{\star}-V_{k_{i}\left(s_{t},a_{t}\right)}\right)\Biggr], \\
\phi_{j} & \coloneqq\gamma^{j+1}\left(1+\frac{1}{H}\right)^{3j+2}\sum_{t=0}^{T}\ind_{\left(s_{t},a_{t}\right)\in\mathcal{I}}\Bigg[\frac{\left[\rho^{\pi^{\star}}(P^{\pi^{\star}})^{j}\right]\left(s_{t},a_{t}\right)}{\mu_{\mathsf{b}}\left(s_{t},a_{t}\right)}P_{s_{t},a_{t}}\left(V^{\star}-V_{t}\right)\\
 & \qquad\qquad\qquad\qquad\qquad\qquad -\left(1+\frac{1}{H}\right)\sum_{s\in\mathcal{S},a\in\mathcal{A}}\left[\rho^{\pi^{\star}}(P^{\pi^{\star}})^{j}\right]\left(s,a\right)P_{s,a}\left(V^{\star}-V_{t}\right)\Bigg], 
\end{align*}
where we recall the definition of $\mathcal{I}_1$ in \eqref{eq:defn-I-Ic}.

Let us begin with the following basic inequality:
\begin{align}
V^{\star}\left(\rho\right)-V^{\widehat{\pi}}\left(\rho\right)=\big\langle \rho,V^{\star}-V^{\widehat{\pi}}\big\rangle  & \overset{\text{(i)}}{\leq}\left\langle \rho,V^{\star}-V_{T}\right\rangle \overset{\text{(ii)}}{\leq}\frac{1}{T}\sum_{t=1}^{T}\left\langle \rho,V^{\star}-V_{t}\right\rangle \overset{\text{(iii)}}{=}\frac{1}{T}\alpha_{0}. 
	\label{eq:Vstar-error-alpha0-Hoeffding}
\end{align}
Here, (i) holds true according to Lemma \ref{lemma:lcb-h-basics};
(ii) follows from the monotonicity of $V_{t}$ in $t$ (by construction);
and (iii) follows simply from the definition of $\alpha_{0}$. We
then turn attention to bounding $\alpha_{0}$, towards which we observe
that
\begin{align*}
\alpha_{0} & =\sum_{t=1}^{t_{\mathsf{mix}}(\delta)-1}\left\langle \rho,V^{\star}-V_{t}\right\rangle +\sum_{t=t_{\mathsf{mix}}(\delta)}^{T}\sum_{s\in\mathcal{S}}\rho\left(s\right)\min\bigg\{ Q^{\star}\big(s,\pi^{\star}(s)\big)-V_{t}(s),\frac{1}{1-\gamma}\bigg\}\\
 & \leq\sum_{t=1}^{t_{\mathsf{mix}}(\delta)-1}\left\langle \rho,V^{\star}-V_{t}\right\rangle +\sum_{t=t_{\mathsf{mix}}(\delta)}^{T}\sum_{s\in\mathcal{S}}\rho\left(s\right)\min\bigg\{ Q^{\star}\big(s,\pi^{\star}(s)\big)-Q_{t}\big(s,\pi^{\star}(s)\big),\frac{1}{1-\gamma}\bigg\}\\
 & \leq\sum_{t=1}^{t_{\mathsf{mix}}(\delta)}\left\langle \rho,V^{\star}-V_{t}\right\rangle +\underbrace{\gamma\sum_{t=t_{\mathsf{mix}}(\delta)}^{T}\sum_{s\in\mathcal{S}}\rho\left(s\right)\sum_{i=1}^{n_{t}\left(s,\pi^{\star}\left(s\right)\right)}\eta_{i}^{n_{t}\left(s,\pi^{\star}(s)\right)}P_{s,\pi^{\star}(s)}\left(V^{\star}-V_{k_{i}}\right)}_{\eqqcolon\,\zeta}\\
 & \quad+\underbrace{\sum_{t=1}^{T}\sum_{s\in\mathcal{S}}\rho\left(s\right)\min\left\{ \beta_{n_{t}\left(s,\pi^{\star}\left(s\right)\right)}\big(s,\pi^{\star}(s)\big),\frac{1}{1-\gamma}\right\} }_{=\,\theta_{0}}.
\end{align*}
Here, the first identity holds since $V^{\star}(s)=Q^{\star}\big(s,\pi^{\star}(s)\big)$
and $0\leq V^{\star}(s)-V_{t}(s)\leq1/(1-\gamma)$ for all $s\in\mathcal{S}$,
the second line relies on the fact that $V_{t}(s)\geq\max_{a}Q_{t}(s,a)\geq Q_{t}(s,\pi^{\star}(s))$,
while the last line invokes Lemma \ref{lemma:lcb-h-basics}. With
probability exceeding $1-\delta$, the first term $\zeta$ can be
upper bounded by
\begin{align*}
\zeta & \leq\gamma\sum_{t=t_{\mathsf{mix}}(\delta)}^{T}\sum_{s\in\mathcal{S}}\rho\left(s\right)\sum_{i=1}^{n_{t}\left(s,\pi^{\star}\left(s\right)\right)}\eta_{i}^{n_{t}\left(s,\pi^{\star}(s)\right)}P_{s,\pi^{\star}(s)}\left(V^{\star}-V_{k_{i}(s,\pi^{\star}(s))}\right)\\
 & =\gamma\sum_{t=t_{\mathsf{mix}}(\delta)}^{T}\sum_{s\in\mathcal{S},a\in\mathcal{A}}\mu_{\mathsf{b}}\left(s,a\right)\frac{\rho^{\pi^{\star}}\left(s,a\right)}{\mu_{\mathsf{b}}\left(s,a\right)}\sum_{i=1}^{n_{t}\left(s,a\right)}\eta_{i}^{n_{t}\left(s,a\right)}P_{s,\pi^{\star}(s)}\left(V^{\star}-V_{k_{i}}\right)\\
 & \overset{\text{(i)}}{\leq}\gamma\left(1+\frac{1}{H}\right)\sum_{t=t_{\mathsf{mix}}(\delta)}^{T} \ind\{\left(s_{t},a_{t}\right)\in\mathcal{I}\} \frac{\rho^{\pi^{\star}}\left(s_{t},a_{t}\right)}{\mu_{\mathsf{b}}\left(s_{t},a_{t}\right)}\sum_{i=1}^{n_{t}\left(s_{t},a_{t}\right)}\eta_{i}^{n_{t}\left(s_{t},a_{t}\right)}P_{s_{t},a_{t}}\left(V^{\star}-V_{k_{i}\left(s_{t},a_{t}\right)}\right)+\psi_{0}\\
 & \overset{\text{(ii)}}{=}\gamma\left(1+\frac{1}{H}\right)\sum_{t=t_{\mathsf{mix}}(\delta)}^{T} \ind\{\left(s_{t},a_{t}\right)\in\mathcal{I}\} \frac{\rho^{\pi^{\star}}\left(s_{t},a_{t}\right)}{\mu_{\mathsf{b}}\left(s_{t},a_{t}\right)}\left(\sum_{j=n_{t}(s_{t},a_{t})}^{n_{T}(s_{t},a_{t})}\eta_{n_{t}(s_{t},a_{t})}^{j}\right)P_{s_{t},a_{t}}\left(V^{\star}-V_{t}\right)+\psi_{0}\\
 & \overset{\text{(iii)}}{\leq}\gamma\left(1+\frac{1}{H}\right)^{2}\sum_{t=0}^{T}\ind\{\left(s_{t},a_{t}\right)\in\mathcal{I}\}\frac{\rho^{\pi^{\star}}\left(s_{t},a_{t}\right)}{\mu_{\mathsf{b}}\left(s_{t},a_{t}\right)}P_{s_{t},a_{t}}\left(V^{\star}-V_{t}\right)+\psi_{0}\\
 & =\gamma\left(1+\frac{1}{H}\right)^{3}\sum_{t=0}^{T}\sum_{s\in\mathcal{S},a\in\mathcal{A}}\rho^{\pi^{\star}}\left(s,a\right)P_{s,a}\left(V^{\star}-V_{t}\right)+\psi_{0}+\phi_{0}\\
 & =\gamma\left(1+\frac{1}{H}\right)^{3}\sum_{t=0}^{T}\left\langle \rho P_{\pi^{\star}},V^{\star}-V_{t}\right\rangle +\psi_{0}+\phi_{0}\\
 & \leq\alpha_{1}+\psi_{0}+\phi_{0}+\gamma\left(1+\frac{1}{H}\right)^{3}\left\langle \rho P_{\pi^{\star}},V^{\star}-V_{0}\right\rangle ,
\end{align*}
where we remind the reader of our notation $\rho^{\pi^{\star}}$ in
\eqref{eq:defn-rho-pistar}. Here, (i) is valid (i.e., $\rho(s_{t},a_{t})/\mu_{\mathsf{b}}(s,a)$
is well defined for $t\geq t_{\mathsf{mix}}(\delta)$) due to Lemma
\ref{lemma:states-small-prob}; (ii) holds by grouping the terms in
the previous line; and (iii) utilizes Lemma~\ref{lemma:step-size} and the property that $V^{\star}\geq V_t$ (cf.~Lemma~\ref{lemma:lcb-h-basics}).
Therefore, we arrive at
\begin{align*}
\alpha_{0} & \leq\sum_{t=1}^{t_{\mathsf{mix}}(\delta)}\left\langle \rho,V^{\star}-V_{t}\right\rangle +\zeta+\theta_{0}\\
 & \leq\sum_{t=1}^{t_{\mathsf{mix}}(\delta)}\left\langle \rho,V^{\star}-V_{t}\right\rangle +\alpha_{1}+\psi_{0}+\phi_{0}+\gamma\left(1+\frac{1}{H}\right)^{3}\left\langle \rho P_{\pi^{\star}},V^{\star}-V_{0}\right\rangle +\theta_{0}\\
 & =\alpha_{1}+\xi_{0}+\theta_{0}+\psi_{0}+\phi_{0},
\end{align*}
where we have used the definition of $\xi_{0}$. Repeat the same argument to reach
\[
\alpha_{j}\leq\alpha_{j+1}+\xi_{j}+\theta_{j}+\psi_{j}+\phi_{j}
\]
for all $j\geq1$. This in turn allows us to conclude that
\begin{equation}
\alpha_{0}\leq\underbrace{\limsup_{j\to\infty}\alpha_{j}}_{\eqqcolon\,\alpha}+\underbrace{\sum_{j=0}^{\infty}\xi_{j}}_{\eqqcolon\,\xi}+\underbrace{\sum_{j=0}^{\infty}\theta_{j}}_{\eqqcolon\,\theta}+\underbrace{\sum_{j=0}^{\infty}\psi_{j}}_{\eqqcolon\,\psi}+\underbrace{\sum_{j=0}^{\infty}\phi_{j}}_{\eqqcolon\,\phi}.\label{eq:alpha0-UB-asyncQ}
\end{equation}
We will then bound the terms $\alpha$, $\xi$, $\theta$, $\psi$
and $\phi$ separately in the subsequent steps. Before continuing,
we make note of a useful result.

\begin{lemma}\label{lemma:useful}Recall that $H=\left\lceil \frac{4}{1-\gamma}\log\frac{ST}{\delta}\right\rceil $
for some $0<\delta<1$. For any vector with non-negative entries $V\in\mathbb{R}^{d}$
, we have
\begin{equation}
\sum_{j=0}^{\infty}\left[\gamma\left(1+\frac{1}{H}\right)^{3}\right]^{j}\left\langle \rho(P_{\pi^{\star}})^{j},V\right\rangle \lesssim\frac{1}{1-\gamma}\left\langle d_{\rho}^{\star},V\right\rangle +\frac{\delta}{ST^{4}\left(1-\gamma\right)}\left\Vert V\right\Vert _{\infty}.\label{eq:useful}
\end{equation}
\end{lemma}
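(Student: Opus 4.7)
\textbf{Proof plan for Lemma~\ref{lemma:useful}.} The key observation is that if we drop the factor $(1+1/H)^{3j}$, the sum collapses neatly: since $\rho(P_{\pi^\star})^j(s') = \mathbb{P}(s_j = s'\mid s_0\sim\rho,\pi^\star)$, the definition \eqref{eq:defn-discounted-occupancy-opt} gives
\[
	\sum_{j=0}^\infty \gamma^j \langle \rho (P_{\pi^\star})^j, V\rangle = \frac{1}{1-\gamma}\langle d_\rho^\star, V\rangle.
\]
So the task is really to control the overhead produced by the inflation factor $(1+1/H)^{3j}$, and then to argue that the tail of the series is negligible compared to the $(1-\gamma)^{-1}\langle d_\rho^\star,V\rangle$ term.

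The plan is to split the sum at $j = H$. For the head $0 \leq j < H$, we use the crude but tight bound $(1+1/H)^{3j} \leq (1+1/H)^{3H} \leq e^{3}$, so that
\[
	\sum_{j=0}^{H-1}\Bigl[\gamma\bigl(1+\tfrac{1}{H}\bigr)^3\Bigr]^{j}\langle\rho(P_{\pi^\star})^j,V\rangle \leq e^{3}\sum_{j=0}^{\infty}\gamma^j\langle\rho(P_{\pi^\star})^j,V\rangle = \frac{e^{3}}{1-\gamma}\langle d_\rho^\star,V\rangle,
\]
where nonnegativity of $V$ (hence of each summand) lets us extend to the full geometric sum. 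This handles the main $(1-\gamma)^{-1}\langle d_\rho^\star,V\rangle$ term.

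For the tail $j\geq H$, we bound each inner product via $\langle \rho(P_{\pi^\star})^j, V\rangle \leq \|V\|_\infty$ (since $\rho(P_{\pi^\star})^j$ is a probability distribution on $\mathcal{S}$). Writing $\beta \coloneqq \gamma(1+1/H)^3$, our choice $H\geq \frac{4}{1-\gamma}\log\frac{ST}{\delta}\geq \frac{4}{1-\gamma}$ gives $1/H\leq (1-\gamma)/4$, and a direct expansion of $(1+1/H)^3$ yields $\beta \leq \gamma + \tfrac{61}{64}\gamma(1-\gamma)$, so that
\[
	1 - \beta \geq (1-\gamma)\bigl(1 - \tfrac{61}{64}\gamma\bigr) \geq \tfrac{3}{64}(1-\gamma),
\]
which in particular shows $\beta<1$ and $(1-\beta)^{-1} = O\bigl(1/(1-\gamma)\bigr)$. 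For $\beta^H$ itself, using $\log\gamma \leq -(1-\gamma)$,
\[
	\beta^H = \gamma^H (1+1/H)^{3H} \leq e^{3}\gamma^H \leq e^{3}e^{-H(1-\gamma)} \leq e^{3}(\delta/(ST))^{4},
\]
by definition of $H$. Hence the tail satisfies
\[
	\sum_{j=H}^{\infty}\beta^j \langle \rho(P_{\pi^\star})^j,V\rangle \leq \|V\|_\infty \cdot \frac{\beta^{H}}{1-\beta} \lesssim \frac{\delta^4\|V\|_\infty}{S^{4}T^{4}(1-\gamma)} \leq \frac{\delta\|V\|_\infty}{ST^{4}(1-\gamma)}.
\]
Adding the head and tail estimates gives the claimed bound.

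The only delicate point is the verification that $\beta<1$ together with a quantitative lower bound on $1-\beta$ of order $1-\gamma$; once that is in hand everything else is a geometric-sum/tail computation. Neither the head bound $e^{3}$ nor the crossover point $j = H$ is optimal, but they suffice since the lemma is only tight up to the constant hidden in $\lesssim$ and the polynomial dependence $\delta/(ST^4)$ in the residual.
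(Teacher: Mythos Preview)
Your proof is correct and follows essentially the same approach as the paper: split the series at $j=H$, bound the head by $(1+1/H)^{3H}\leq e^{3}$ times the exact identity $\sum_j \gamma^j\langle\rho(P_{\pi^\star})^j,V\rangle=(1-\gamma)^{-1}\langle d_\rho^\star,V\rangle$, and control the tail via $\langle\rho(P_{\pi^\star})^j,V\rangle\leq\|V\|_\infty$ together with the same $(1+x)^3\leq 1+\tfrac{61}{16}x$ estimate to show $1-\beta\gtrsim 1-\gamma$ and $\gamma^H\leq(\delta/(ST))^4$. The only cosmetic difference is that the paper first writes the split with a generic cutoff $K$ and then sets $K=H$.
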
\begin{proof}See Appendix \ref{appendix:proof-eq-useful}.\end{proof}

\subsection{Step 2: bounding each term in \eqref{eq:alpha0-UB-asyncQ}}

\paragraph{Step 2.1: bounding $\alpha$.}

It is first observed that
\[
\alpha=\limsup_{j\to\infty}\left[\gamma\left(1+\frac{1}{H}\right)^{3}\right]^{j}\sum_{t=1}^{T}\left\langle \rho(P_{\pi^{\star}})^{j},V^{\star}-V_{t}\right\rangle \overset{\text{(i)}}{\leq}\frac{T}{1-\gamma}\limsup_{k\to\infty}\left[\gamma\left(1+\frac{1}{H}\right)^{3}\right]^{k}\overset{\text{(ii)}}{=}0.
\]
Here, (i) is valid since $\rho(P_{\pi^{\star}})^{j}$ is a probability
distribution over $\mathcal{S}$ and $0\leq V^{\star}-V_{t}\leq1/(1-\gamma)$
holds for all $1\leq t\leq T$; (ii) holds since 
	\begin{align}
		\gamma\left(1+\frac{1}{H}\right)^{3}\leq\gamma\left(1+\frac{1-\gamma}{4}\right)^{2}<1
		\label{eq:prod-1-H3-gamma}
	\end{align}
for all $\gamma < 1$. 

\paragraph{Step 2.2: bounding $\xi$.}

By utilizing (\ref{eq:useful}) and \eqref{eq:prod-1-H3-gamma}, we can demonstrate that
\begin{align*}
 \xi 
	&= \sum_{t=1}^{t_{\mathsf{mix}}(\delta)} \left\{ \sum_{j=0}^{\infty}\left[\gamma\left(1+\frac{1}{H}\right)^{3}\right]^{j}  \left\langle \rho P_{\pi^{\star}}^{j},V^{\star}-V_{t}\right\rangle  \right\} +  
\sum_{j=0}^{\infty}\left[\gamma\left(1+\frac{1}{H}\right)^{3}\right]^{j+1} \big\langle \rho(P_{\pi^{\star}})^{j+1},V^{\star}-V_{0}\big\rangle  \\
 & \lesssim\frac{1}{1-\gamma}\sum_{t=0}^{t_{\mathsf{mix}}(\delta)}\left\langle d_{\rho}^{\star},V^{\star}-V_{t}\right\rangle +\frac{1}{ST^{4}\left(1-\gamma\right)}\frac{t_{\mathsf{mix}}(\delta)+1}{1-\gamma}\\
 & \lesssim\frac{t_{\mathsf{mix}}(\delta)}{\left(1-\gamma\right)^{2}}+\frac{t_{\mathsf{mix}}(\delta)}{T^{4}\left(1-\gamma\right)^{2}}\\
 & \lesssim\frac{t_{\mathsf{mix}}}{\left(1-\gamma\right)^{2}}\log\frac{1}{\delta}+\frac{t_{\mathsf{mix}}}{T^{4}\left(1-\gamma\right)^{2}}\log\frac{1}{\delta}. 
\end{align*}
Here, the second line holds due to (\ref{eq:useful}) and the basic fact $0\leq V^{\star}(s)-V_t (s) \leq \frac{1}{1-\gamma}$, 
the penultimate line makes use of the fact $\|V^{\star}-V_{t}\|_{\infty}\leq\frac{1}{1-\gamma}$ once again, 
whereas the last line holds since $t_{\mathsf{mix}}(\delta)\lesssim t_{\mathsf{mix}}\log\frac{1}{\delta}$. 

\paragraph{Step 2.3: bounding $\theta$.}

When it comes to $\theta$, we can deduce that
\begin{align*}
\theta & =\sum_{j=0}^{\infty}\left[\gamma\left(1+\frac{1}{H}\right)^{3}\right]^{j}\sum_{t=1}^{T}\sum_{s\in\mathcal{S}}\left[\rho(P_{\pi^{\star}})^{j}\right]\left(s\right)\min\left\{ \beta_{n_{t}\left(s,\pi^{\star}\left(s\right)\right)},\frac{1}{1-\gamma}\right\} \\
 & =\sum_{t=1}^{T}\sum_{j=0}^{\infty}\left[\gamma\left(1+\frac{1}{H}\right)^{3}\right]^{j}\sum_{s\in\mathcal{S}}\left[\rho(P_{\pi^{\star}})^{j}\right]\left(s\right)\min\left\{ \beta_{n_{t}\left(s,\pi^{\star}\left(s\right)\right)},\frac{1}{1-\gamma}\right\} \\
 & \overset{\text{(i)}}{\lesssim}\frac{1}{1-\gamma}\sum_{t=1}^{T}\sum_{s\in\mathcal{S}}d_{\rho}^{\star}\left(s\right)\min\left\{ \beta_{n_{t}\left(s,\pi^{\star}\left(s\right)\right)},\frac{1}{1-\gamma}\right\} +\frac{1}{ST^{4}\left(1-\gamma\right)}\frac{T}{1-\gamma}\\
 & \lesssim\sum_{s\in\mathcal{S}}\sum_{t=1}^{t_{\mathsf{burn}\text{-}\mathsf{in}}(s)}\frac{d_{\rho}^{\star}\left(s\right)}{\left(1-\gamma\right)^{2}}+\sum_{s\in\mathcal{S}}\sum_{t=t_{\mathsf{burn}\text{-}\mathsf{in}}(s)+1}^{T}d_{\rho}^{\star}\left(s\right)\sqrt{\frac{H\iota}{n_{t}\big(s,\pi^{\star}(s)\big)\left(1-\gamma\right)^{4}}}+\frac{1}{T^{3}\left(1-\gamma\right)^{2}}\\
 & \overset{\text{(ii)}}{\asymp}\sum_{s\in\mathcal{S}}\frac{d_{\rho}^{\star}\left(s\right)}{\mu_{\mathsf{b}}\left(s,\pi^{\star}(s)\right)}\frac{t_{\mathsf{mix}}\iota}{\left(1-\gamma\right)^{2}}+\sum_{s\in\mathcal{S}}\sum_{t=t_{\mathsf{burn}\text{-}\mathsf{in}}(s)+1}^{T}d_{\rho}^{\star}\left(s\right)\sqrt{\frac{H\iota}{t\mu_{\mathsf{b}}\big(s,\pi^{\star}(s)\big)\left(1-\gamma\right)^{4}}}+\frac{1}{T^{3}\left(1-\gamma\right)^{2}}\\
 & \overset{\text{(iii)}}{\lesssim}\frac{C^{\star}St_{\mathsf{mix}}\iota}{\left(1-\gamma\right)^{2}}+\sum_{s\in\mathcal{S}}d_{\rho}^{\star}\big(s,\pi^{\star}(s)\big)\sqrt{\frac{HT\iota}{\mu_{\mathsf{b}}\big(s,\pi^{\star}(s)\big)\left(1-\gamma\right)^{4}}} +\frac{1}{T^{3}\left(1-\gamma\right)^{2}}\\
 & \overset{\text{(iv)}}{\lesssim}\frac{C^{\star}St_{\mathsf{mix}}\iota}{\left(1-\gamma\right)^{2}}+\sqrt{\frac{C^{\star}HT\iota}{\left(1-\gamma\right)^{4}}}\sum_{s\in\mathcal{S}}\sqrt{d_{\rho}^{\star}\big(s,\pi^{\star}(s)\big)}\\
 & \overset{\text{(v)}}{\lesssim}\frac{C^{\star}St_{\mathsf{mix}}\iota}{\left(1-\gamma\right)^{2}}+\sqrt{\frac{C^{\star}ST\iota^{2}}{\left(1-\gamma\right)^{5}}},
\end{align*}
where we define, for each $s\in\mathcal{S}$, 
\[
t_{\mathsf{burn}\text{-}\mathsf{in}}(s)\coloneqq C_{\mathsf{burn\text{-}in}}\frac{t_{\mathsf{mix}}}{\mu_{\mathsf{b}}\big(s,\pi^{\star}(s)\big)}\log\left(\frac{ST}{\delta}\right)
\]
for some sufficiently large constant $C_{\mathsf{burn\text{-}in}}>0$.
Here, (i) relies on (\ref{eq:useful});
(ii) utilizes \citet[Lemma 8]{li2021sample}; (iii)
follows from the fact that
\begin{equation}
\sum_{t=1}^{T}\frac{1}{\sqrt{t}}\leq1+\int_{1}^{T}\frac{1}{\sqrt{x}}\mathrm{d}x=1+2\left(\sqrt{T}-1\right)\leq2\sqrt{T};\label{eq:sum-sqrt}
\end{equation}
(iv) uses Assumption \ref{assumption:policy}; and (v) invokes the
Cauchy-Schwarz inequality and the fact that $\sum_{s}d_{\rho}^{\star}\big(s,\pi^{\star}(s)\big)=1$. 

\paragraph{Step 2.4: bounding $\psi$. }

Recall that
\begin{align*}
\psi_{j} & \coloneqq\gamma\left[\gamma\left(1+\frac{1}{H}\right)^{3}\right]^{j}\sum_{t=t_{\mathsf{mix}}(\delta)}^{T}\Biggl[\sum_{s\in\mathcal{S},a\in\mathcal{A}}\left[\rho^{\pi^{\star}}(P^{\pi^{\star}})^{j}\right]\left(s,a\right)\sum_{i=1}^{n_{t}\left(s,a\right)}\eta_{i}^{n_{t}\left(s,a\right)}P_{s,a}\left(V^{\star}-V_{k_{i}\left(s,a\right)}\right)\\
 & \qquad\qquad\qquad\qquad\qquad\quad-\left(1+\frac{1}{H}\right)\frac{\left[\rho^{\pi^{\star}}(P^{\pi^{\star}})^{j}\right]\left(s_{t},a_{t}\right)}{\mu_{\mathsf{b}}\left(s_{t},a_{t}\right)}\sum_{i=1}^{n_{t}\left(s_{t},a_{t}\right)}\eta_{i}^{n_{t}\left(s_{t},a_{t}\right)}P_{s_{t},a_{t}}\left(V^{\star}-V_{k_{i}\left(s_{t},a_{t}\right)}\right)\Biggr].
\end{align*}
In order to bound $\psi$, we make the observation that
\begin{align*}
\psi & =\sum_{j=0}^{\infty}\gamma\left[\gamma\left(1+\frac{1}{H}\right)^{3}\right]^{j}\sum_{t=t_{\mathsf{mix}}(\delta)}^{T}\Biggl[\sum_{s\in\mathcal{S},a\in\mathcal{A}}\left[\rho^{\pi^{\star}}(P^{\pi^{\star}})^{j}\right]\left(s,a\right)\sum_{i=1}^{n_{t}\left(s,a\right)}\eta_{i}^{n_{t}\left(s,a\right)}P_{s,a}\left(V^{\star}-V_{k_{i}(s,a)}\right)\\
 & \qquad\qquad\qquad\qquad\qquad\qquad\quad-\left(1+\frac{1}{H}\right)\frac{\left[\rho^{\pi^{\star}}(P^{\pi^{\star}})^{j}\right]\left(s_{t},a_{t}\right)}{\mu_{\mathsf{b}}\left(s_{t},a_{t}\right)}\sum_{i=1}^{n_{t}\left(s_{t},a_{t}\right)}\eta_{i}^{n_{t}\left(s_{t},a_{t}\right)}P_{s_{t},a_{t}}\left(V^{\star}-V_{k_{i}\left(s_{t},a_{t}\right)}\right)\Biggr]\\
 & =\sum_{t=t_{\mathsf{mix}}(\delta)}^{T}\Biggl[\sum_{s\in\mathcal{S},a\in\mathcal{A}}\widetilde{d}\left(s,a\right)\sum_{i=1}^{n_{t}\left(s,a\right)}\eta_{i}^{n_{t}\left(s,a\right)}P_{s,a}\left(V^{\star}-V_{k_{i}(s,a)}\right)\\
 & \qquad\qquad-\left(1+\frac{1}{H}\right)\frac{\widetilde{d}\left(s_{t},a_{t}\right)}{\mu_{\mathsf{b}}\left(s_{t},a_{t}\right)}\sum_{i=1}^{n_{t}\left(s_{t},a_{t}\right)}\eta_{i}^{n_{t}\left(s_{t},a_{t}\right)}P_{s_{t},a_{t}}\left(V^{\star}-V_{k_{i}\left(s_{t},a_{t}\right)}\right)\Biggr].
\end{align*}
Here, $\widetilde{d}(\cdot,\cdot)$ is defined such that
\[
\widetilde{d}\left(s,a\right)\coloneqq\sum_{j=0}^{\infty}\gamma\left[\gamma\left(1+\frac{1}{H}\right)^{3}\right]^{j}\left[\rho^{\pi^{\star}}(P^{\pi^{\star}})^{j}\right]\left(s,a\right)
\]
for any $(s,a)\in\mathcal{S}\times\mathcal{A}$. For any $t_{\mathsf{mix}}(\delta)\leq t\leq T$
and any $(s,a)\in\mathcal{I}$, let us define
\[
f_{t}\left(s,a\right)=\frac{\widetilde{d}\left(s,a\right)}{\mu_{\mathsf{b}}\left(s,a\right)}\sum_{i=1}^{n_{t}\left(s,a\right)}\eta_{i}^{n_{t}\left(s,a\right)}P_{s,a}\left(V^{\star}-V_{k_{i}\left(s,a\right)}\right), 
\]
allowing us to rewrite
\begin{align}
\psi & =\sum_{t=t_{\mathsf{mix}}(\delta)}^{T}\left\{ \mathbb{E}_{(s,a)\sim\mu_{\mathsf{b}}}\left[f_{t}\left(s,a\right)\right]-\left(1+\frac{1}{H}\right)f_{t}\left(s_{t},a_{t}\right)\right\} .
	\label{eq:defn-psi-ft-hoeffding}
\end{align}

Let us take a moment to look at some properties of $f_t$. It is straightforward to check that
\begin{itemize}
	\item[(i)] when $a\neq\pi^{\star}(s)$,
one has $f_{t}(s,a)=0$; 
	\item[(ii)] $f_{t}(s,a)$ is monotonically decreasing
in $t$.
\end{itemize}
The latter property follows from the non-decreasing property of $V_{t}$
in $t$ (by construction), and that $\eta_{i}^{n_{t}(s,a)}$ is decreasing
in $t$, as well as $\sum_{i=1}^{n_{t}(s,a)}\eta_{i}^{n_{t}(s,a)}=1$
(cf.~Lemma \ref{lemma:step-size}). On the other hand, when $a=\pi^{\star}(s)$
and $(s,a)\in\mathcal{I}$, we can invoke (\ref{eq:useful}) to arrive
at
\begin{align}
	\widetilde{d}\big(s, \pi^{\star}(s) \big) 
	\lesssim \frac{1}{1-\gamma} d_{\rho}^{\star}(s) + \frac{\delta}{ST^4(1-\gamma)} ,
\end{align}
and consequently, 
%
\begin{align}
	f_{t}\left(s,a\right) & \lesssim 
	\left\{ \frac{1}{1-\gamma}\frac{d_{\rho}^{\star}\left(s,a\right)}{\mu_{\mathsf{b}}\left(s,a\right)}
	+ \frac{\delta}{ST^{4}\left(1-\gamma\right)}\frac{1}{\mu_{\mathsf{b}}\left(s,a\right)}  \right\} \sum_{i=1}^{n_{t}\left(s,a\right)}\eta_{i}^{n_{t}\left(s,a\right)}P_{s,a}\left(V^{\star}-V_{k_{i}\left(s,a\right)}\right) \nonumber \\
& \lesssim 
	 \frac{1}{1-\gamma}\frac{d_{\rho}^{\star}\left(s,a\right)}{\mu_{\mathsf{b}}\left(s,a\right)}
	+ \frac{\delta}{ST^{4}\left(1-\gamma\right)}\frac{1}{\mu_{\mathsf{b}}\left(s,a\right)}   \nonumber \\	
 & \leq\frac{C^{\star}}{\left(1-\gamma\right)^{2}}+\frac{\delta}{ST^{4}\left(1-\gamma\right)}\frac{ST}{\delta}\nonumber \\
	& \leq\frac{c_{10}C^{\star}}{\left(1-\gamma\right)^{2}}\coloneqq C_{f} . \label{eq:ft-UB-Cf}
\end{align}
for some constant $c_{10}\geq  1$. 
Here, the second line follows from Assumption \ref{assumption:policy},
the properties that $\sum_{i=1}^{n_{t}(s,a)}\eta_{i}^{n_{t}(s,a)}=1$,
$0\leq V^{\star}(s)-V_{t}(s)\leq1/(1-\gamma)$ for all $0\leq t\leq T$; 
the third line is valid since  $\mu_{\mathsf{b}}(s,\pi^{\star}(s))\geq\delta/(ST)$
when $(s,\pi^{\star}(s))\in\mathcal{I}$.

We now proceed to bound \eqref{eq:defn-psi-ft-hoeffding}. 
It is worth noting that both $f_{t}$ and $(s_{t},a_{t})$ depend
on $s_{0},a_{0},s_{1},\ldots,s_{t-1},a_{t-1}$. To handle such
statistical dependency, we define
\[
K\coloneqq\left\lfloor \frac{T}{\tau}\right\rfloor \qquad\text{where}\quad\tau\coloneqq t_{\mathsf{mix}}(\delta/T^{2})\lesssim t_{\mathsf{mix}}\log\frac{T}{\delta}.
\]
 Armed with this notation, one can decompose
\begin{align*}
\psi & =\sum_{t=1}^{\tau}\sum_{k=1}^{K-1}\left\{ \mathbb{E}_{(s,a)\sim\mu_{\mathsf{b}}}\left[f_{k\tau+t}\left(s,a\right)\right]-\left(1+\frac{1}{H}\right)f_{k\tau+t}\left(s_{k\tau+t},a_{k\tau+t}\right)\right\} \\
 & \quad+\left(\sum_{t=t_{\mathsf{mix}}(\delta)}^{\tau}+\sum_{t=K\tau+1}^{T}\right)\left\{ \mathbb{E}_{(s,a)\sim\mu_{\mathsf{b}}}\left[f_{t}\left(s,a\right)\right]-\left(1+\frac{1}{H}\right)f_{t}\left(s_{t},a_{t}\right)\right\} \\
 & =\underbrace{\sum_{i=1}^{\tau}\sum_{k=1}^{K-1}\left\{ \mathbb{E}_{(s,a)\sim\mu_{\mathsf{b}}}\left[f_{(k-1)\tau+i}\left(s,a\right)\right]-\left(1+\frac{1}{H}\right)f_{(k-1)\tau+i}\left(s_{k\tau+i},a_{k\tau+i}\right)\right\} }_{\eqqcolon\kappa_{1}}\\
 & \quad+\underbrace{\sum_{i=1}^{\tau}\sum_{k=1}^{K-1}\left\{ \mathbb{E}_{(s,a)\sim\mu_{\mathsf{b}}}\left[f_{k\tau+i}\left(s,a\right)\right]-\mathbb{E}_{(s,a)\sim\mu_{\mathsf{b}}}\left[f_{(k-1)\tau+i}\left(s,a\right)\right]\right\} }_{\eqqcolon\kappa_{2}}\\
 & \quad+\underbrace{\left(1+\frac{1}{H}\right)\sum_{i=1}^{\tau}\sum_{k=1}^{K-1}\left[f_{(k-1)\tau+i}\left(s_{k\tau+i},a_{k\tau+i}\right)-f_{k\tau+i}\left(s_{k\tau+i},a_{k\tau+i}\right)\right]}_{\eqqcolon\kappa_{3}}\\
 & \quad+\underbrace{\left(\sum_{t=t_{\mathsf{mix}}(\delta)}^{\tau}+\sum_{t=K\tau+1}^{T}\right)\left\{ \mathbb{E}_{(s,a)\sim\mu_{\mathsf{b}}}\left[f_{t}\left(s,a\right)\right]-\left(1+\frac{1}{H}\right)f_{t}\left(s_{t},a_{t}\right)\right\} }_{\eqqcolon\kappa_{4}}.
\end{align*}
In what follows, we bound $\kappa_{1}$, $\kappa_{2}$, $\kappa_{3}$ and $\kappa_{4}$
respectively. 
\begin{itemize}
\item The term $\kappa_{4}$ can be easily bounded using \eqref{eq:ft-UB-Cf}
as follows
\[
\kappa_{4}\leq\left(\sum_{t=t_{\mathsf{mix}}(\delta)}^{\tau}+\sum_{t=K\tau+1}^{T}\right)\mathbb{E}_{(s,a)\sim\mu_{\mathsf{b}}}\left[f_{t}\left(s,a\right)\right]\leq2\tau C_{f}\asymp\frac{C^{\star}t_{\mathsf{mix}}}{\left(1-\gamma\right)^{2}}\log\left(\frac{T}{\delta}\right).
\]
\item With regards to $\text{\ensuremath{\kappa_{3}}}$, we make the observation
that 
\begin{align*}
\kappa_{3} & =\left(1+\frac{1}{H}\right)\sum_{t=\tau+1}^{K\tau}\left[f_{t-\tau}\left(s_{t},a_{t}\right)-f_{t}\left(s_{t},a_{t}\right)\right]\\
 & \overset{\text{(i)}}{\leq}\left(1+\frac{1}{H}\right)\sum_{t=\tau+1}^{K\tau}\sum_{s\in\mathcal{S},a\in\mathcal{A}}\left[f_{t-\tau}\left(s,a\right)-f_{t}\left(s,a\right)\right]\\
 & \overset{\text{(ii)}}{=}\left(1+\frac{1}{H}\right)\sum_{t=\tau+1}^{K\tau}\sum_{s\in\mathcal{S}}\left[f_{t-\tau}\big(s,\pi^{\star}(s)\big)-f_{t}\big(s,\pi^{\star}(s)\big)\right]\\
 & =\left(1+\frac{1}{H}\right)\left\{ \sum_{t=1}^{\tau}\sum_{s\in\mathcal{S}}f_{t}\big(s,\pi^{\star}(s)\big)-\sum_{t=(K-1)\tau+1}^{K\tau}\sum_{s\in\mathcal{S}}f_{t}\big(s,\pi^{\star}(s)\big)\right\} \\
 & \leq2\left(1+\frac{1}{H}\right)\tau SC_{f}\asymp\frac{C^{\star}St_{\mathsf{mix}}}{\left(1-\gamma\right)^{2}}\log\left(\frac{T}{\delta}\right).
\end{align*}
Here, (i) holds since $f_{t}(s,a)$ is monotonically decreasing in
$t$; and (ii) holds since, by definition, $f(s,a)=0$ if $a\neq\pi^{\star}(s)$.
\item Similarly, $\kappa_{2}$ can be bounded by
\begin{align*}
\kappa_{2} & =\sum_{t=\tau+1}^{K\tau}\left\{ \mathbb{E}_{(s,a)\sim\mu_{\mathsf{b}}}\left[f_{t}\left(s,a\right)\right]-\mathbb{E}_{(s,a)\sim\mu_{\mathsf{b}}}\left[f_{t-\tau}(s,a)\right]\right\} \\
 & =\sum_{t=(K-1)\tau+1}^{K\tau}\mathbb{E}_{(s,a)\sim\mu_{\mathsf{b}}}\left[f_{t}\left(s,a\right)\right]-\sum_{t=1}^{\tau}\mathbb{E}_{(s,a)\sim\mu_{\mathsf{b}}}\left[f_{t}\left(s,a\right)\right]\\
 & \lesssim\tau C_{f}\asymp\frac{C^{\star}t_{\mathsf{mix}}}{\left(1-\gamma\right)^{2}}\log\left(\frac{T}{\delta}\right).
\end{align*}
\item Finally, we turn attention to bounding $\kappa_{1}$. For each $1\leq i\leq\tau$,
we will bound
\[
	\xi_{i}\coloneqq\sum_{k=1}^{K-1}\left\{
	\mathop{\mathbb{E}}\limits_{(s,a)\sim\mu_{\mathsf{b}}}\left[f_{(k-1)\tau+i}\left(s,a\right)\right]
	-\left(1+\frac{1}{H}\right)f_{(k-1)\tau+i}\left(s_{k\tau+i},a_{k\tau+i}\right) \right\} 
\]
respectively. We need the following lemma to decouple the complicated
statistical dependency. 
\begin{lemma}\label{lemma:coupling}
One can construct an auxiliary set of random variables $\left\{ \left(s_{k}^{i},a_{k}^{i}\right):1\leq k\leq K-1\right\}$ satisfying
\begin{subequations}
\label{eq:auxiliary-independent-equiv-hoeffding}	
\begin{equation}
\big\{ \left(s_{k}^{i},a_{k}^{i}\right):1\leq k\leq K-1\big\} \overset{\mathsf{i.i.d.}}{\sim}\mu_{\mathsf{b}} ,
	\label{eq:auxiliary-independent-hoeffding}
\end{equation}
%
\begin{equation}
	\mathbb{P}\Big\{ \left(s_{k}^{i},a_{k}^{i}\right)=\left(s_{k\tau+i},a_{k\tau+i}\right)\quad\text{for all }1\leq k\leq K-1\Big\}
	\geq 1-\frac{\delta}{T}, 
\label{eq:auxiliary-equivalent-hoeffding}
\end{equation}
and
\begin{equation}
\left(s_{k}^{i},a_{k}^{i}\right)\text{ is independent of }\big\{ \left(s_{t},a_{t}\right):0\leq t\leq\left(k-1\right)\tau+i\big\} .
\label{eq:auxiliary-independent2-hoeffding}
\end{equation}
\end{subequations}
\end{lemma}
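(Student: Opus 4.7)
The plan is to construct each $(s_k^i, a_k^i)$ via a maximal coupling with the true chain sample $(s_{k\tau+i}, a_{k\tau+i})$, iteratively for $k = 1, \ldots, K-1$. The key input is the choice $\tau = t_{\mathsf{mix}}(\delta/T^2)$: by the very definition of the mixing time, regardless of the state reached at time $(k-1)\tau + i$, the conditional distribution of $(s_{k\tau+i}, a_{k\tau+i})$ $\tau$ steps later is within total-variation distance $\delta/T^2$ of the stationary distribution $\mu_{\mathsf{b}}$. Coupling at spacing $\tau$ is what buys us the approximate independence needed to pretend the $\tau$-skipped subsequence is i.i.d.

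I would proceed inductively in $k$. Let $\mathcal{F}_k$ denote the $\sigma$-algebra generated by $(s_0, a_0), \ldots, (s_{(k-1)\tau + i}, a_{(k-1)\tau+i})$ together with all auxiliary randomness used in earlier coupling steps. Conditional on $\mathcal{F}_k$ the state $(s_{(k-1)\tau+i}, a_{(k-1)\tau+i})$ is fixed, so $(s_{k\tau+i}, a_{k\tau+i})$ has some conditional distribution $\nu_k \coloneqq P^{\tau}(\cdot \mid s_{(k-1)\tau+i}, a_{(k-1)\tau+i})$ satisfying $d_{\mathsf{TV}}(\nu_k, \mu_{\mathsf{b}}) \leq \delta/T^2$. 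Invoking the standard coupling characterization of total variation, I can construct, using a fresh uniform random variable independent of everything so far, a pair $(X_k, Y_k)$ such that $X_k \sim \nu_k$, the conditional law of $Y_k$ given $\mathcal{F}_k$ is exactly $\mu_{\mathsf{b}}$, and $\mathbb{P}\{X_k \neq Y_k \mid \mathcal{F}_k\} \leq \delta/T^2$. Set $(s_k^i, a_k^i) \coloneqq Y_k$, and identify $X_k$ with the actual chain sample $(s_{k\tau+i}, a_{k\tau+i})$; this is admissible because $X_k$ has the correct conditional law and the chain's future beyond time $k\tau + i$ depends on the past only through this state together with independent transition randomness.

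The three properties then follow almost mechanically. For (\ref{eq:auxiliary-independent2-hoeffding}), since the conditional law of $Y_k$ given $\mathcal{F}_k$ is $\mu_{\mathsf{b}}$ irrespective of the conditioning, $Y_k$ is independent of $\{(s_t, a_t): 0 \leq t \leq (k-1)\tau+i\} \subset \mathcal{F}_k$. For (\ref{eq:auxiliary-independent-hoeffding}), the previously constructed samples $(s_j^i, a_j^i)$ with $j < k$ are $\mathcal{F}_k$-measurable, so the same independence propagates to mutual independence across $k$, which together with the common marginal $\mu_{\mathsf{b}}$ yields the i.i.d.~claim. For (\ref{eq:auxiliary-equivalent-hoeffding}), a union bound over $k$ combined with the per-step miscoupling bound $\delta/T^2$ and the crude estimate $K - 1 \leq T/\tau \leq T$ gives a total failure probability of at most $\delta/T$. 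The only nontrivial point is bookkeeping: one must enlarge the underlying probability space to carry the fresh uniforms driving each coupling step, but this causes no real difficulty since the Markov chain's future law depends only on its current state, leaving ample room to insert independent randomization at each step. I do not expect any genuine obstacle; the argument is essentially a textbook application of the coupling representation of total variation distance, adapted to the filtration generated by the interleaved chain and coupling randomness.
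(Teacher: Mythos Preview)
Your proposal is correct and follows essentially the same approach as the paper: both iteratively invoke the coupling characterization of total-variation distance, using the mixing-time bound $d_{\mathsf{TV}}\big(P^{\tau}(\cdot\mid x),\mu_{\mathsf{b}}\big)\leq\delta/T^{2}$ at each step, then conclude via a union bound over $k$. The only cosmetic difference is that the paper constructs the auxiliary variables in reverse order $k=K-1,K-2,\ldots,1$ (conditioning at each step on the earlier chain together with the already-built $Y^{i}_{j}$'s with $j>k$), whereas you proceed forward; both orderings yield the three claimed properties by the same mechanism.
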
\begin{proof}See Appendix \ref{appendix:proof-lemma-coupling}.\end{proof}
With the above set of auxiliary random variables $\left\{ \left(s_{k}^{i},a_{k}^{i}\right):1\leq k\leq K-1\right\}$ in place, 
one can obtain
\begin{align*}
	\xi_{i} & =\sum_{k=1}^{K-1}\left\{ \mathop{\mathbb{E}}\limits_{(s,a)\sim\mu_{\mathsf{b}}}\left[f_{(k-1)\tau+i}\left(s,a\right)\right]-\left(1+\frac{1}{H}\right)f_{(k-1)\tau+i}\left(s_{k}^{i},a_{k}^{i}\right)\right\} \\
 & =-\left(1+\frac{1}{H}\right)\sum_{k=1}^{K-1}\left\{ f_{(k-1)\tau+i}\left(s_{k}^{i},a_{k}^{i}\right)-\mathop{\mathbb{E}}\limits_{(s,a)\sim\mu_{\mathsf{b}}}\left[f_{(k-1)\tau+i}\left(s,a\right)\right]\right\} -\frac{1}{H}\sum_{k=1}^{K-1}\mathop{\mathbb{E}}\limits_{(s,a)\sim\mu_{\mathsf{b}}}\left[f_{(k-1)\tau+i}\left(s,a\right)\right]
\end{align*}
with probability exceeding $1-\delta/T$. 
Recognizing the property \eqref{eq:auxiliary-independent2-hoeffding}, we are ready to use
	the Freedman inequality (cf.~\citet[Theorem 3]{li2021breaking}) to bound
$\xi_{i}$. Introduce the random variable 
\begin{equation}
	X_{k}=f_{(k-1)\tau+i}(s_{k}^{i},a_{k}^{i})- \mathop{\mathbb{E}}\limits_{(s,a)\sim\mu_{\mathsf{b}}}\big[f_{(k-1)\tau+i}(s,a) \big], 
\end{equation}
and define a filtration $\mathcal{F}_{0}\subset\mathcal{F}_{1}\subset\cdots\subset\mathcal{F}_{K-1}$
with 
\[
\mathcal{F}_{k-1}=\sigma\left\{ \left\{ \left(s_{k}^{i},a_{k}^{i}\right)\right\} _{k=1}^{k-1},\left\{ \left(s_{t},a_{t}\right)\right\} _{t=0}^{\left(k-1\right)\tau+i}\right\} \quad\text{for}\quad1\leq k\leq K-1.
\]
It is straightforward to verify that 
\[
\left|X_{k}\right|\leq R\coloneqq C_{f},\quad\mathbb{E}\left[X_{k}\mid\mathcal{F}_{k-1}\right]=0\quad\text{for all }1\leq k\leq K-1,
\]
and
\begin{align}
W & \coloneqq\sum_{k=1}^{K-1}\mathbb{E}\left[X_{k}^{2}\mid\mathcal{F}_{k-1}\right]\leq\sum_{k=1}^{K-1}\mathbb{E}\left[f_{(k-1)\tau+i}^{2}(s_{k}^{i},a_{k}^{i})\mid\mathcal{F}_{k-1}\right] \notag\\
	& \leq C_{f}\sum_{k=1}^{K-1}\mathbb{E}\left[f_{(k-1)\tau+i}(s_{k}^{i},a_{k}^{i})\mid\mathcal{F}_{k-1}\right]=C_{f}\sum_{k=1}^{K-1}
	\mathop{\mathbb{E}}\limits_{(s,a)\sim\mu_{\mathsf{b}}}\left[f_{(k-1)\tau+i}\left(s,a\right)\right] \label{eq:W-UB-Ef-hoeffding}\\
 & \leq C_{f}^{2}K.
\end{align}
Invoke the Freedman inequality in \citet[Theorem 3]{li2021breaking} to
show that for any integer $m\geq1$,
\begin{align*}
\left|\sum_{k=1}^{K-1}X_{k}\right| & \leq\sqrt{8\max\left\{ W,\frac{C_{f}^{2}K}{2^{m}}\right\} \log\frac{2Tm}{\delta}}+\frac{4}{3}R\log\frac{2Tm}{\delta}\\
 & \leq\sqrt{8W\log\frac{2Tm}{\delta}}+\sqrt{8\frac{C_{f}^{2}K}{2^{m}}\log\frac{2Tm}{\delta}}+\frac{4}{3}C_{f}\log\frac{2Tm}{\delta}\\
 & \leq\frac{1}{2HC_{f}}W+4HC_{f}\log\frac{2Tm}{\delta}+C_{f}\sqrt{8\frac{K}{2^{m}}\log\frac{2Tm}{\delta}}+\frac{4}{3}C_{f}\log\frac{2Tm}{\delta}\\
 & =\frac{1}{2H}\sum_{k=1}^{K-1}\mathbb{E}_{(s,a)\sim\mu_{\mathsf{b}}}\left[f_{(k-1)\tau+i}\left(s,a\right)\right]+O\left(HC_{f}\log\frac{T}{\delta}\right)
\end{align*}
holds with probability exceeding $1-\delta/T$. Here, the penultimate
line relies on the AM-GM inequality, whereas the last line holds by using \eqref{eq:W-UB-Ef-hoeffding} and
taking $m\asymp\log K\lesssim\log T\lesssim T$. Consequently, we
see that with probability exceeding $1-\delta/T$, 
\begin{align*}
\xi_{i} & =-\left(1+\frac{1}{H}\right)\sum_{k=1}^{K-1}X_{k}-\frac{1}{H}\sum_{k=1}^{K-1}\mathop{\mathbb{E}}\limits_{(s,a)\sim\mu_{\mathsf{b}}}\left[f_{(k-1)\tau+i}\left(s,a\right)\right]\\
 & \leq2\left|\sum_{k=1}^{K-1}X_{k}\right|-\frac{1}{H}\sum_{k=1}^{K-1}\mathop{\mathbb{E}}\limits_{(s,a)\sim\mu_{\mathsf{b}}}\left[f_{(k-1)\tau+i}\left(s,a\right)\right]\\
 & \lesssim HC_{f}\log\frac{T}{\delta}\lesssim\frac{C^{\star}\iota}{\left(1-\gamma\right)^{3}}\log\frac{T}{\delta}.
\end{align*}
As a result, with probability exceeding $1-\delta$ we can guarantee
that
\[
\kappa_{1}\leq\sum_{i=1}^{\tau}\xi_{i}\lesssim\frac{C^{\star}\tau\iota}{\left(1-\gamma\right)^{3}}\log\left(\frac{T}{\delta}\right)\asymp\frac{C^{\star}t_{\mathsf{mix}}\iota}{\left(1-\gamma\right)^{3}}\log^{2}\left(\frac{T}{\delta}\right).
\]
\end{itemize}
The above bounds taken collectively allow us to conclude that
\[
\psi\leq\kappa_{1}+\kappa_{2}+\kappa_{3}+\kappa_{4}\lesssim\frac{C^{\star}t_{\mathsf{mix}}\iota}{\left(1-\gamma\right)^{3}}\log^{2}\left(\frac{T}{\delta}\right)+\frac{C^{\star}St_{\mathsf{mix}}}{\left(1-\gamma\right)^{2}}\log\left(\frac{T}{\delta}\right).
\]

\paragraph{Step 2.5: bounding $\phi$.}

By replacing $f_{t}(s,a)$ in Step 4 with
\[
g_{t}\left(s,a\right)=P_{s,a}\left(V^{\star}-V_{t}\right),
\]
we can employ an analogous argument to show that $\phi$ admits the
same bound as $\psi$, namely,
\[
\phi\lesssim\frac{C^{\star}t_{\mathsf{mix}}\iota}{\left(1-\gamma\right)^{3}}\log^{2}\left(\frac{T}{\delta}\right)+\frac{C^{\star}St_{\mathsf{mix}}}{\left(1-\gamma\right)^{2}}\log\left(\frac{T}{\delta}\right).
\]
We omit this part for the sake of brevity. 

\subsection{Step 3: putting all pieces together}

To finish up, taking the bounds on $\alpha$, $\theta$, $\psi$ and
$\phi$ collectively gives
\begin{align*}
\alpha_{0} & \leq\alpha+\xi+\theta+\psi+\phi\\
 & \lesssim\frac{C^{\star}St_{\mathsf{mix}}\iota}{\left(1-\gamma\right)^{2}}+\sqrt{\frac{C^{\star}ST\iota^{2}}{\left(1-\gamma\right)^{5}}}+\frac{C^{\star}t_{\mathsf{mix}}\iota}{\left(1-\gamma\right)^{3}}\log^{2}\left(\frac{T}{\delta}\right)+\frac{C^{\star}St_{\mathsf{mix}}}{\left(1-\gamma\right)^{2}}\log\left(\frac{T}{\delta}\right)\\
 & \asymp\sqrt{\frac{C^{\star}ST\iota^{2}}{\left(1-\gamma\right)^{5}}}+\frac{C^{\star}St_{\mathsf{mix}}\iota}{\left(1-\gamma\right)^{2}}+\frac{C^{\star}t_{\mathsf{mix}}\iota}{\left(1-\gamma\right)^{3}}\log^{2}\left(\frac{T}{\delta}\right).
\end{align*}
Consequently, we can invoke \eqref{eq:Vstar-error-alpha0-Hoeffding} to conclude that
\begin{align*}
V^{\star}\left(\rho\right)-V^{\widehat{\pi}}\left(\rho\right) & \leq\frac{\alpha_{0}}{T}\lesssim\sqrt{\frac{C^{\star}S\iota^{2}}{T\left(1-\gamma\right)^{5}}}+\frac{C^{\star}St_{\mathsf{mix}}\iota}{T\left(1-\gamma\right)^{2}}+\frac{C^{\star}t_{\mathsf{mix}}\iota^{2}}{T\left(1-\gamma\right)^{3}}.
\end{align*}

\section{Auxiliary lemmas for Theorem \ref{theorem:5}}

\subsection{Proof of Lemma \ref{lemma:lcb-h-basics}\label{appendix:proof-lcb-h-basics}}

Consider any given pair $(s,a)\in\mathcal{S}\times\mathcal{A}$. For
notational simplicity, we write $n=n_{t}(s,a)$, the total number
of times that $(s,a)$ has been visited prior to time $t$. We also
set $k_{0}=-1$, and let
\begin{equation}
k_{i}\coloneqq\min\Big\{\big\{0\leq k<T:k>k_{i-1},\left(s_{k},a_{k}\right)=\left(s,a\right)\big\},T\Big\}\label{eq:k-defn}
\end{equation}
for each $1\leq i\leq T$. Clearly, each $k_{i}$ is a stopping time.
In view of the update rule in Algorithm \ref{alg:Q-5}, we have
\[
Q_{t}\left(s,a\right)=\sum_{i=1}^{n}\eta_{i}^{n}\Big\{ r\left(s,a\right)+\gamma V_{k_{i}}\left(s_{k_{i}+1}\right)-b_{i}\left(s,a\right)\Big\},
\]
which together with the Bellman optimality equation $Q^{\star}=r+\gamma PV^{\star}$
gives
\begin{align}
\left(Q^{\star}-Q_{t}\right)\left(s,a\right) & =r\left(s,a\right)+\gamma P_{s,a}V^{\star}-\sum_{i=1}^{n}\eta_{i}^{n}\text{\ensuremath{\Big\{}}r\left(s,a\right)+\gamma V_{k_{i}}\left(s_{k_{i}+1}\right)-b_{i}\left(s,a\right)\Big\}\nonumber \\
 & =\gamma P_{s,a}V^{\star}-\sum_{i=1}^{n}\eta_{i}^{n}\Big\{\gamma V_{k_{i}}\left(s_{k_{i}+1}\right)-b_{i}\left(s,a\right)\Big\}\nonumber \\
 & =\sum_{i=1}^{n}\eta_{i}^{n}\gamma P_{s,a}\left(V^{\star}-V_{k_{i}}\right)+\sum_{i=1}^{n}\eta_{i}^{n}\gamma\Big(\big(P-P_{k_{i}}\big)V_{k_{i}}\Big)\left(s,a\right)+\sum_{i=1}^{n}\eta_{i}^{n}b_{i}\left(s,a\right),\label{eq:Q-Qt-decompose}
\end{align}
where the last two lines are valid since $\sum_{i=1}^{n}\eta_{i}^{n}=1$
(cf.~Lemma~\ref{lemma:step-size}).

From now on we only focus on
the case where $a=\pi^{\star}(s)$. Define $\mathcal{F}_{i}$ to be
the $\sigma$-field generated by $\{(s_{i},a_{i})\}_{i=0}^{k_{i}}$.
It is straightforward to check that for any $1\leq\tau\leq T$,
\[
\left\{ \ind_{k_{i}<T}\Big(\big(P-P_{k_{i}}\big)V_{k_{i}}\Big)\big( s,\pi^{\star}(s) \big)\right\} _{i=1}^{\tau}
\]
is a martingale difference sequence with respect to $\{\mathcal{F}_{i}\}_{i\geq0}$.
Then, we can invoke the Azuma-Hoeffding inequality together with the basic bound $\|V_{k_i}\|_{\infty}\leq \frac{1}{1-\gamma}$ to show that for
any fixed $s\in\mathcal{S}$ and $\tau\in[T]$, 
\begin{align*}
\left| \sum_{i=1}^{\tau}\ind_{k_{i}<T}\eta_{i}^{\tau}\Big(\big(P-P_{k_{i}}\big)V_{k_{i}}\Big)\big(s,\pi^{\star}(s)\big) \right|
 & \lesssim\frac{1}{1-\gamma}\sqrt{\sum_{i=1}^{\tau}\left(\eta_{i}^{\tau}\right)^{2}\log\frac{ST}{\delta}}\\
 & \lesssim\sqrt{\frac{H}{\tau\left(1-\gamma\right)^{2}}\log\frac{ST}{\delta}}
\end{align*}
holds with probability exceeding $1-\delta/(ST)$. Here, the last
line utilizes Lemma \ref{lemma:step-size}. Taking the union bound over $\tau\leq T$ 
allows us to replace $\tau$ with $n=n_{t}(s,a)$ in the above inequality,
namely, for any fixed $s\in\mathcal{S}$ and $a\in\mathcal{A}$, with
probability exceeding $1-\delta/S$ we have
\begin{equation}
\left| \sum_{i=1}^{n}\eta_{i}^{n}\gamma\Big(\big(P-P_{k_{i}}\big)V_{k_{i}}\Big)\big(s,\pi^{\star}(s)\big) \right|
	\lesssim\sqrt{\frac{H\iota}{n\left(1-\gamma\right)^{2}}}\label{eq:hoeffding}
\end{equation}
holds for all $n=n_{t}(s,\pi^{\star}(s))$ with $1\leq t\leq T$.
In view of Lemma \ref{lemma:step-size}, for any $s\in\mathcal{S}$
and $a\in\mathcal{A}$ we know that
\begin{equation}
C_{\mathsf{b}}\sqrt{\frac{H\iota}{n_{t}(s,a)\left(1-\gamma\right)^{2}}}\leq\sum_{i=1}^{n_{t}(s,a)}\eta_{i}^{n_{t}(s,a)}b_{i}\left(s,a\right)\leq2C_{\mathsf{b}}\sqrt{\frac{H\iota}{n_{t}(s,a)\left(1-\gamma\right)^{2}}}.\label{eq:sum-b-bound}
\end{equation}
Therefore, when $C_{\mathsf{b}}$ is sufficiently large, it follows
that
\[
\left(Q^{\star}-Q_{t}\right)\big(s,\pi^{\star}(s)\big)\leq\gamma\sum_{i=1}^{n}\eta_{i}^{n}P_{s,\pi^{\star}(s)}\left(V^{\star}-V_{k_{i}}\right)+3C_{\mathsf{b}}\sqrt{\frac{H\iota}{n\left(1-\gamma\right)^{2}}}.
\]
Taking the union bound over $s\in\mathcal{S}$ and defining
\[
\beta_{n}\big(s,\pi^{\star}(s)\big)\coloneqq3C_{\mathsf{b}}\sqrt{\frac{H\iota}{n\left(1-\gamma\right)^{2}}},
\]
we can conclude that with probability exceeding $1-\delta$, 
\[
\left(Q^{\star}-Q_{t}\right)\big(s,\pi^{\star}(s)\big)\leq\gamma\sum_{i=1}^{n}\eta_{i}^{n}P_{s,\pi^{\star}(s)}\left(V^{\star}-V_{k_{i}}\right)+\beta_{n}\big(s,\pi^{\star}(s)\big)
\]
for all $s\in\mathcal{S}$ and $t\in[T]$. 

Additionally, observe that $V^{\star}\geq V^{\pi_{t}}$ holds trivially due
to the optimality of $V^{\star}$. We are therefore left with showing $V^{\pi_{t}}\geq V_{t}$.
Suppose for the moment that with probability exceeding $1-\delta$,
for all $s\in\mathcal{S}$, $t\in[T]$ and $j\in[t]$, it holds that
\begin{equation}
\left(Q^{\pi_{t}}-Q_{j}\right)\big(s,\pi_{t}(s)\big)\geq\gamma P_{s,\pi_{t}(s)}\left(V^{\pi_{t}}-V_{j}\right)\ind\left\{ n_{t}\big(s,\pi_{t}(s)\big)\geq1\right\} ;\label{eq:Q-pi-t-Q-t-key}
\end{equation}
the proof of this claim (\ref{eq:Q-pi-t-Q-t-key}) is deferred to Appendix \ref{appendix:proof-Q-pi-t-Q-t-key}.
As a consequence, for every $s\in\mathcal{S}$ and $t\in[T]$, there
exists $j(t)\in[t]$ such that
\begin{align*}
\left(V^{\pi_{t}}-V_{t}\right)\left(s\right) & \overset{\text{(i)}}{=}Q^{\pi_{t}}\big(s,\pi_{t}(s)\big)-Q_{j(t)}\big(s,\pi_{t}(s) \big)\overset{\text{(ii)}}{=}Q^{\pi_{t}}\big(s,\pi_{t}(s)\big)-Q_{j(t)}\big(s,\pi_{j(t)}(s)\big)\\
 & \overset{\text{(iii)}}{\geq}\min\Big\{\gamma P_{s,\pi_{t}(s)}\left(V^{\pi_{t}}-V_{j(t)}\right),0\Big\}\overset{\text{(iv)}}{\geq}\min\left\{ \gamma P_{s,\pi_{j(t)}(s)}\left(V^{\pi_{t}}-V_{t}\right),0\right\} .
\end{align*}
Here, (i) and (ii) hold since the update rule of Algorithm \ref{alg:Q-5}
asserts that there must exist some $j(t)\leq t$ such that $V_{t}(s)=V_{j(t)}(s)=Q_{j(t)}(s,\pi_{j(t)}(s))$
and $\pi_{t}(s)=\pi_{j(t)}(s)$; (iii) utilizes (\ref{eq:Q-pi-t-Q-t-key});
and (iv) follows from the monotonicity of $V_{t}$ in $t$ (by construction).
By setting
\[
s_{\min}\coloneqq\arg\min_{s\in\mathcal{S}}\ \left(V^{\pi_{t}}-V_{t}\right)\left(s\right),
\]
we can deduce that
\begin{align*}
\left(V^{\pi_{t}}-V_{t}\right)\left(s_{\min}\right) & \geq\min\left\{ \gamma P_{s_{\min},\pi_{j(t)}\left(s_{\min}\right)}\left(V^{\pi_{t}}-V_{t}\right),0\right\} \\
 & \geq\min\left\{ \gamma\min_{s\in\mathcal{S}}\left(V^{\pi_{t}}-V_{t}\right)\left(s\right),0\right\} \\
 & =\min\left\{ \gamma\left(V^{\pi_{t}}-V_{t}\right)\left(s_{\min}\right),0\right\} ,
\end{align*}
which together with the assumption $0<\gamma<1$ immediately gives
\[
\left(V^{\pi_{t}}-V_{t}\right)\left(s_{\min}\right)\geq0.
\]
Given that $\left(V^{\pi_{t}}-V_{t}\right)\left(s\right)\geq\left(V^{\pi_{t}}-V_{t}\right)\left(s_{\min}\right)$
for every $s\in\mathcal{S}$, we conclude the proof.

\subsubsection{Proof of inequality (\ref{eq:Q-pi-t-Q-t-key}) \label{appendix:proof-Q-pi-t-Q-t-key}}

First of all, if $n_{t}\big(s,\pi_{t}(s)\big)=0$, then for all $j\in[t]$,
$Q_{j}\big(s,\pi_{t}(s)\big)=0$ since it is never updated; therefore, (\ref{eq:Q-pi-t-Q-t-key})
holds true. From now on, we shall only focus on the case when $n_{t}\big(s,\pi_{t}(s)\big)\geq1$.

Consider any $s\in\mathcal{S}$, $t\in[T]$ and $j\in[t]$. For the
moment, let us define $\{k_{i}\}_{i=1}^{T}$ w.r.t.~the state-action
pair $\big(s,\pi_{t}(s)\big)$ in the same way as (\ref{eq:k-defn}). We can
then repeat the argument in (\ref{eq:Q-Qt-decompose}) to decompose
\begin{align*}
 & \big(Q^{\pi_{t}}-Q_{j}\big)\big(s,\pi_{t}(s)\big)\\
 & =\big(r+\gamma PV^{\pi_{t}}\big)\big(s,\pi_{t}(s)\big)-\sum_{i=1}^{n_{j}(s,\pi_{t}(s))}\eta_{i}^{n_{j}(s,\pi_{t}(s))}\Big\{ r\big(s,\pi_{t}(s)\big)+\gamma V_{k_{i}}\left(s_{k_{i}+1}\right)-b_{i}\big(s,\pi_{t}(s)\big)\Big\}\\
 & =\sum_{i=1}^{n_{j}(s,\pi_{t}(s))}\eta_{i}^{n_{j}(s,\pi_{t}(s))}\gamma\Big\{ P_{s,\pi_{t}(s)}V^{\pi_{t}}-V_{k_{i}}\left(s_{k_{i}+1}\right)\Big\}+\sum_{i=1}^{n_{j}(s,\pi_{t}(s))}\eta_{i}^{n_{j}(s,\pi_{t}(s))}b_{i}\big(s,\pi_{t}(s)\big)\\
 & =\sum_{i=1}^{n_{j}(s,\pi_{t}(s))}\eta_{i}^{n_{j}(s,\pi_{t}(s))}\gamma\Big\{ P_{s,\pi_{t}(s)}\left(V^{\pi_{t}}-V_{k_{i}}\right)+\Big(\big(P-P_{k_{i}}\big)V_{k_{i}}\Big)\big(s,\pi_{t}(s)\big)\Big\}+\sum_{i=1}^{n_{j}(s,\pi_{t}(s))}\eta_{i}^{n_{j}(s,\pi_{t}(s))}b_{i}\big(s,\pi_{t}(s)\big)\\
 & \geq\left(\sum_{i=1}^{n_{j}(s,\pi_{t}(s))}\eta_{i}^{n_{j}(s,\pi_{t}(s))}\right)\gamma\min_{1\leq i\leq n}P_{s,\pi_{t}(s)}\left(V^{\pi_{t}}-V_{k_{i}}\right)+\sum_{i=1}^{n_{j}(s,\pi_{t}(s))}\eta_{i}^{n_{j}(s,\pi_{t}(s))}\gamma\Big(\big(P-P_{k_{i}}\big)V_{k_{i}}\Big)\big(s,\pi_{t}(s)\big)\\
 & \qquad+\sum_{i=1}^{n_{j}(s,\pi_{t}(s))}\eta_{i}^{n_{j}(s,\pi_{t}(s))}b_{i}\big(s,\pi_{t}(s)\big)\\
 & \geq\gamma P_{s,\pi_{t}(s)}\left(V^{\pi_{t}}-V_{t}\right)+\sum_{i=1}^{n_{j}(s,\pi_{t}(s))}\eta_{i}^{n_{j}(s,\pi_{t}(s))}\gamma\Big(\big(P-P_{k_{i}}\big)V_{k_{i}}\Big)\big(s,\pi_{t}(s)\big)+C_{\mathsf{b}}\sqrt{\frac{H\iota}{n_{j}\big(s,\pi_{t}(s)\big)\left(1-\gamma\right)^{2}}}.
\end{align*}
Here, the last inequality follows from (\ref{eq:sum-b-bound}), as
well as the facts that $\sum_{i=1}^{n_{j}(s,\pi_{t}(s))}\eta_{i}^{n_{j}(s,\pi_{t}(s))}=1$
(cf.~Lemma~\ref{lemma:step-size}) and that $V_{t}$ is non-decreasing
in $t$. It thus boils down to showing that for every $s\in\mathcal{S}$,
$t\in[T]$ and $j\in[t]$, 
\begin{align}
\sum_{i=1}^{n_{j}(s,\pi_{t}(s))}\eta_{i}^{n_{j}(s,\pi_{t}(s))}\gamma\Big(\big(P-P_{k_{i}}\big)V_{k_{i}}\Big)\big(s,\pi_{t}(s)\big)\lesssim\sqrt{\frac{H\iota}{n_{j}\big(s,\pi_{t}(s)\big)\left(1-\gamma\right)^{2}}}.
	\label{eq:sum-eta-gamm-P-V-5372}
\end{align}
If this were true and if $C_{\mathsf{b}}$ is sufficiently large, then we could
combine the above two inequalities to conclude the proof of (\ref{eq:Q-pi-t-Q-t-key}).

We then prove the inequality \eqref{eq:sum-eta-gamm-P-V-5372}. 
Notice that for all $(s,\pi_{t}(s))$ such that $n_{t}(s,\pi_{t}(s))\geq1$,
it must appear at least once in the sample trajectory. Therefore it
suffices to show that for all $0\leq l<T$ and $t\in[T]$, it holds
that
\[
\sum_{i=1}^{n_{t}(s_{l},a_{l})}\eta_{i}^{n_{t}(s_{l},a_{l})}\gamma\Big(\big(P-P_{k_{i}}\big)V_{k_{i}}\Big)\left(s_{l},a_{l}\right)\lesssim\sqrt{\frac{H\iota}{n_{t}(s_{l},a_{l})\left(1-\gamma\right)^{2}}},
\]
where we abuse the notation by defining $\{k_{i}\}_{i=1}^{T}$ for
the state-action pair $(s_{l},a_{l})$ in the same way as (\ref{eq:k-defn}).
Furthermore, it suffices to only check those $(s_{l},a_{l})$ in the
sample trajectory that were visited for the first time, i.e., $n_{l}(s_{l},a_{l})=0$
and $n_{l+1}(s_{l},a_{l})=1$. It is straightforward to check that,
for any $1\leq\tau\leq T$,
\[
\left\{ \ind_{k_{i}<T}\Big(\big(P-P_{k_{i}}\big)V_{k_{i}}\Big)\left(s_{l},a_{l}\right)\right\} _{i=1}^{\tau}
\]
is a martingale difference sequence with respect to $\{\mathcal{F}_{i}\}_{i\geq0}$,
where $\mathcal{F}_{i}$ is the $\sigma$-field generated by $\{(s_{i},a_{i})\}_{i=0}^{k_{i}}$.
Then we can invoke the Azuma-Hoeffding inequality to show that: for any
such $(s_{l},a_{l})$ and any $\tau\in[T]$, with probability exceeding
$1-\delta/T^{2}$,
\begin{align*}
\left|\sum_{i=1}^{\tau}\ind_{k_{i}<T}\eta_{i}^{\tau}\Big(\big(P-P_{k_{i}}\big)V_{k_{i}}\Big)\left(s_{l},a_{l}\right)\right| & \lesssim\frac{1}{1-\gamma}\sqrt{\sum_{i=1}^{\tau}\left(\eta_{i}^{\tau}\right)^{2}\log\frac{T}{\delta}}\lesssim\sqrt{\frac{H\iota}{\tau\left(1-\gamma\right)^{2}}}.
\end{align*}
Taking the union bound over $\tau\in [T]$ allows us to replace $\tau$ with $n_{t}(s_{l},a_{l})$
in the above inequality, namely, this shows that for any such $(s_{l},a_{l})$, with
probability exceeding $1-\delta/T$ we have
\begin{align*}
\left|\sum_{i=1}^{n_{t}(s_{l},a_{l})}\eta_{i}^{n_{t}(s_{l},a_{l})}\Big(\big(P-P_{k_{i}}\big)V_{k_{i}}\Big)\left(s_{l},a_{l}\right)\right| & \lesssim\sqrt{\frac{H\iota}{n_{t}(s_{l},a_{l})\left(1-\gamma\right)^{2}}}
\end{align*}
for all $t\in[T]$. Taking the union bound over all such $(s_{l},a_{l})$
(which are concerned with at most $T$ pairs), we see that with probability exceeding
$1-\delta$, 
\begin{align*}
\left|\sum_{i=1}^{n_{t}(s_{l},a_{l})}\eta_{i}^{n_{t}(s_{l},a_{l})}\Big(\big(P-P_{k_{i}}\big)V_{k_{i}}\Big)\left(s_{l},a_{l}\right)\right| & \lesssim\sqrt{\frac{H\iota}{n_{t}(s_{l},a_{l})\left(1-\gamma\right)^{2}}}
\end{align*}
is valid for any $0\leq j<T$ and any $t\in[T]$. This establishes the inequality \eqref{eq:sum-eta-gamm-P-V-5372}, thus concluding the proof.

\subsection{Proof of Lemma \ref{lemma:states-small-prob}\label{appendix:proof-lemma-states-small-prob}}

For each $\big(s,\pi^{\star}(s)\big)\in\mathcal{I}^{c}$, we first
have
\[
\mathbb{P}\Big\{\left(s_{t},a_{t}\right)=\big(s,\pi^{\star}(s)\big)\mid\left(s_{0},a_{0}\right)\sim\mu_{\mathsf{b}}\Big\}=\mu_{\mathsf{b}}\big(s,\pi^{\star}(s)\big)<\frac{\delta}{ST},
\]
given that $\mu_{\mathsf{b}}$ is taken to be the stationary distribution of the sample trajectory. 
By virtue of the union bound, we obtain
\begin{align*}
 & \mathbb{P}\left(\mathcal{I}^{c}\cap\left\{ \left(s_{t},a_{t}\right)\right\} _{t=t_{\mathsf{mix}}(\delta)}^{T}=\varnothing\mid\left(s_{0},a_{0}\right)\sim\mu_{\mathsf{b}}\right)\\
 & \quad\geq1-\sum_{t=t_{\mathsf{mix}}}^{T}\sum_{s: (s,\pi^{\star}(s))\in\mathcal{I}^{c}}\mathbb{P}\Big\{\left(s_{t},a_{t}\right)=\left(s,\pi^{\star}(s)\right)\mid\left(s_{0},a_{0}\right)\sim\mu_{\mathsf{b}}\Big\}\\
 & \quad>1-\delta.
\end{align*}
In addition, for an arbitrary pair $(s,a)\in\mathcal{S}\times\mathcal{A}$, the
definition of the mixing time gives
\[
\left|\mathbb{P}\left(\big\{ \left(s_{t},a_{t}\right)\big\} _{t=t_{\mathsf{mix}}(\delta)}^{T}\subseteq\mathcal{I}\mid\left(s_{0},a_{0}\right)\sim\mu_{\mathsf{b}}\right)-\mathbb{P}\left(\left\{ \left(s_{t},a_{t}\right)\right\} _{t=t_{\mathsf{mix}}}^{T}\subseteq\mathcal{I}\mid\left(s_{0},a_{0}\right)=\left(s,a\right)\right)\right|\leq\delta.
\]
Combine the above results to yield
\[
\mathbb{P}\left(\big\{ \left(s_{t},a_{t}\right)\big\} _{t=t_{\mathsf{mix}}(\delta)}^{T}\subseteq\mathcal{I}\mid\left(s_{0},a_{0}\right)=\left(s,a\right)\right)\geq1-2\delta
\]
for an arbitrary pair $(s,a)\in\mathcal{S}\times\mathcal{A}$.

\subsection{Proof of Lemma \ref{lemma:useful}\label{appendix:proof-eq-useful}}

For any given integer $K>0$, one can decompose
\begin{align*}
\sum_{j=0}^{\infty}\left[\gamma\left(1+\frac{1}{H}\right)^{3}\right]^{j}\left\langle \rho(P_{\pi^{\star}})^{j},V\right\rangle  & =\sum_{j=0}^{K-1}\left[\gamma\left(1+\frac{1}{H}\right)^{3}\right]^{j}\left\langle \rho(P_{\pi^{\star}})^{j},V\right\rangle +\sum_{j=K}^{\infty}\left[\gamma\left(1+\frac{1}{H}\right)^{3}\right]^{j}\left\langle \rho(P_{\pi^{\star}})^{j},V\right\rangle \\
 & \leq\left(1+\frac{1}{H}\right)^{3K}\sum_{j=0}^{K-1}\gamma^{j}\left\langle \rho(P_{\pi^{\star}})^{j},V\right\rangle +\sum_{j=K}^{\infty}\left[\gamma\left(1+\frac{1}{H}\right)^{3}\right]^{j}\left\Vert V\right\Vert _{\infty}\\
 & \leq\underbrace{\left(1+\frac{1}{H}\right)^{3K}\frac{1}{1-\gamma}\left\langle d_{\rho}^{\star},V\right\rangle }_{\eqqcolon\,\alpha_{1}}+\underbrace{\gamma^{K}\left(1+\frac{1}{H}\right)^{3K}\frac{1}{1-\gamma\left(1+\frac{1}{H}\right)^{3}}\left\Vert V\right\Vert _{\infty}}_{\eqqcolon \,\alpha_{2}}.
\end{align*}
Here, the last inequality holds since $d_{\rho}^{\star}=(1-\gamma)\sum_{j=0}^{\infty}\gamma^{j}\rho \big( P_{\pi^{\star}} \big)^{j}$.

By taking 
\[
	K = H=\left\lceil \frac{4}{1-\gamma}\log\frac{ST}{\delta}\right\rceil ,
\]
we can derive
\[
\left(1+\frac{1}{H}\right)^{3K}=\left(1+\frac{1}{H}\right)^{3H}\overset{\text{(i)}}{\leq}e^{3}=O\left(1\right)
\]
and
\[
\gamma^{K}=e^{K\log\left[1-\left(1-\gamma\right)\right]}\overset{\text{(ii)}}{\leq}e^{-K\left(1-\gamma\right)}=\frac{\delta}{ST^{4}}. 
\]
Here, (i) holds since $(1+1/x)^{x}\leq e$ for all $x>0$; (ii) is
valid since $\log(1-x)\leq-x$ for all $x\in(0,1)$. 
It is also worth noting that
\begin{equation}
\frac{1}{1-\gamma\left(1+\frac{1}{H}\right)^{3}}\leq\frac{1}{1-\gamma\left(1+\frac{1-\gamma}{4}\right)^{3}}\overset{\text{(iii)}}{\leq}\frac{1}{1-\gamma\left[1+\frac{61}{64}\left(1-\gamma\right)\right]}=\frac{1}{\left(1-\gamma\right)\left(1-\frac{61}{64}\gamma\right)}\lesssim\frac{1}{1-\gamma}, \label{eq:ratio-bound}
\end{equation}
where (iii) holds
since $(1+x)^{3}\leq1+61x/16$ for all $0<x\leq1/4$. We then immediately
arrive at
\[
\alpha_{1}\lesssim\frac{1}{1-\gamma}\left\langle d_{\rho}^{\star},V\right\rangle 
\]
and
\[
\alpha_{2}\lesssim\frac{\delta}{ST^{4}\left(1-\gamma\right)}\left\Vert V\right\Vert _{\infty}.
\]
Taking the upper bounds on $\alpha_{1}$ and $\alpha_{2}$ collectively
leads to the advertised inequality
\[
\sum_{j=0}^{\infty}\left[\gamma\left(1+\frac{1}{H}\right)^{3}\right]^{j}\left\langle \rho(P_{\pi^{\star}})^{j},V\right\rangle \lesssim\frac{1}{1-\gamma}\left\langle d_{\rho}^{\star},V\right\rangle +\frac{\delta}{ST^{4}\left(1-\gamma\right)}\left\Vert V\right\Vert _{\infty}.
\]

\subsection{Proof of Lemma \ref{lemma:coupling}\label{appendix:proof-lemma-coupling}}

For notational simplicity, we denote 
\[
X_{t}\coloneqq\left(s_{t},a_{t}\right),\qquad1\leq t\leq T; 
\]
clearly, $\{X_{t}\}_{t\geq 0}$ forms a Markov chain on $\mathcal{X}\triangleq\mathcal{S}\times\mathcal{A}$,
with stationary distribution $\mu_{\mathsf{b}}$. In what follows,
we demonstrate how to construct the sequence $$Y_{K-1}^{i}=(s_{K-1}^{i},a_{K-2}^{i}),~~~ Y_{K-2}^{i}=(s_{K-2}^{i},a_{K-2}^{i}),~~~ \cdots,~~~ Y_{1}^{i}=(s_{1}^{i},a_{1}^{i})$$ 
so as to satisfy the desired properties.

Let us start by constructing $Y_{K-1}^{i}$. Recall from the definition of the mixing time that: for any fixed state-action pairs $x_{0},x_{1},\cdots,x_{(K-2)\tau+i}\in\mathcal{X}$, 
one has
\[
\mathsf{TV}\left(\mathcal{L}\left(X_{\left(K-1\right)\tau+i}\mid X_{0}=x_{0},\ldots,X_{(K-2)\tau+i}=x_{(K-2)\tau+i}\right),\mu_{\mathsf{b}}\right)\leq\frac{\delta}{T^{2}}.
\]
where $\mathcal{L}(\cdot)$ denotes the law of the random variable.  
In view of the definition of the total-variation distance, we know
that there exists a random variable $Y_{K-1}^{x_{0},\ldots,x_{(K-2)\tau+i}}$
such that conditional on the event $X_{0}=x_{0},\ldots,X_{(K-2)\tau+i}=x_{(K-2)\tau+i}$, 
\begin{itemize}
	\item[(i)] the law of $Y_{K-1}^{x_{0},\ldots,x_{(K-2)\tau+i}}$ obeys
\[
\mathcal{L}\left(Y_{K-1}^{x_{0},\ldots,x_{(K-2)\tau+i}}\mid X_{0}=x_{0},\ldots,X_{(K-2)\tau+i}=x_{(K-2)\tau+i}\right)=\mu_{\mathsf{b}}
\]
	\item[(ii)] $Y_{K-1}^{x_{0},\ldots,x_{(K-2)\tau+i}}$ is almost identical to $X_{\left(K-1\right)\tau+i}$ in the sense that
\[
	\mathbb{P}\Big\{ X_{\left(K-1\right)\tau+i}\neq Y_{K-1}^{x_{0},\,\ldots,\,x_{(K-2)\tau+i}}\mid X_{0}=x_{0},\ldots,X_{(K-2)\tau+i}=x_{(K-2)\tau+i} \Big\}
		\leq\frac{\delta}{T^{2}}.
\]
\end{itemize}
As a consequence, we can construct $Y_{K-1}^{i}$ as follows
\[
Y_{K-1}^{i}\coloneqq\sum_{x_{0},\ldots,x_{(K-2)\tau+i}\in\mathcal{X}} Y_{K-1}^{x_{0},\ldots,x_{(K-2)\tau+i}} \ind\{X_{0}=x_{0},\ldots,X_{(K-2)\tau+i}=x_{(K-2)t_{\mathsf{mix}}+i}\};
\]
as can be easily verified,  for any $x_{0},x_{1},\cdots,x_{(K-2)t_{\mathsf{mix}}+i}\in\mathcal{X}$ one has
\begin{align*}
 & \mathcal{L}\left(Y_{K-1}^{i}\mid X_{0}=x_{0},\ldots,X_{(K-2)\tau+i}=x_{(K-2)\tau+i}\right)\\
 & \qquad=\mathcal{L}\left(Y_{K-1}^{x_{0},\ldots,x_{(K-2)\tau+i}}\mid X_{0}=x_{0},\ldots,X_{(K-2)\tau+i}=x_{(K-2)\tau+i}\right)=\mu_{\mathsf{b}} .
\end{align*}
All this in turn implies that  
\[
	Y_{K-1}^{i}\sim\mu_{\mathsf{b}}
	\qquad \text{and} \qquad
	Y_{K-1}^{i}\indep\left\{ X_{0},X_{1}\ldots,X_{(K-2)\tau+i}\right\} .
\]
In addition, it is also seen that
\begin{align*}
\mathbb{P}\left(Y_{K-1}^{i}\neq X_{\left(K-1\right)\tau+i}\right) & =\sum_{x_{0},\ldots,x_{(K-2)\tau+i}\in\mathcal{X}}\mathbb{P}\left(X_{0}=x_{0},\ldots,X_{(K-2)\tau+i}=x_{(K-2)\tau+i}\right)\\
	& \qquad\cdot \mathbb{P}\Big\{ X_{\left(K-1\right)\tau+i}\neq Y_{K-1}^{x_{0},\ldots,x_{(K-2)\tau+i}}\mid X_{0}=x_{0},\ldots,X_{(K-2)\tau+i}=x_{(K-2)\tau+i}\Big\} \\
 & \leq\frac{\delta}{T^{2}}\sum_{x_{0},\ldots,x_{(K-2)\tau+i}\in\mathcal{X}}\mathbb{P}\left(X_{0}=x_{0},\ldots,X_{(K-2)\tau+i}=x_{(K-2)\tau+i}\right)\\
 & =\frac{\delta}{T^{2}}.
\end{align*}

Next, we turn to the construction of $Y_{K-2}^{i}$. Consider any fixed $x_{0},x_{1},\cdots,x_{(K-3)\tau+i},y_{K-1}^{i}\in\mathcal{X}$. 
Given that $Y_{K-1}^{i}\indep \{X_{0},X_{1}\ldots,X_{(K-2)\tau+i}\}$,
the conditional law of $X_{\left(K-2\right)\tau+i}$ obeys
\begin{align*}
 & \mathcal{L}\left(X_{\left(K-2\right)\tau+i}\mid X_{0}=x_{0},\ldots,X_{(K-3)\tau+i}=x_{(K-3)\tau+i},Y_{K-1}^{i}=y_{k-1}^{i}\right)\\
 & \qquad=\mathcal{L}\left(X_{\left(K-2\right)\tau+i}\mid X_{0}=x_{0},\ldots,X_{(K-3)\tau+i}=x_{(K-3)\tau+i}\right).
\end{align*}
This in turn allows one to obtain
\begin{align*}
 & \mathsf{TV}\Big(\mathcal{L}\left(X_{\left(K-2\right)\tau+i}\mid X_{0}=x_{0},\ldots,X_{(K-3)\tau+i}=x_{(K-3)\tau+i},Y_{K-1}^{i}=y_{K-1}^{i}\right),\mu_{\mathsf{b}}\Big)\\
 & \quad=\mathsf{TV}\left(\mathcal{L}\left(X_{\left(K-2\right)\tau+i}\mid X_{0}=x_{0},\ldots,X_{(K-3)\tau+i}=x_{(K-3)\tau+i}\right),\mu_{\mathsf{b}}\right)\\
 & \quad\leq\frac{\delta}{T^{2}}.
\end{align*}
According to the definition of the total-variation distance, there exists a random variable $Y_{K-2}^{x_{0},x_{1},\cdots,x_{(K-3)\tau+i},y_{K-1}^{i}}$
such that: conditional on the event $X_{0}=x_{0},\ldots,X_{(K-3)\tau+i}=x_{(K-3)\tau+i},Y_{K-1}^{i}=y_{k-1}^{i}$,
\begin{itemize}
	\item[(i)] the law of $Y_{K-2}^{x_{0},x_{1},\cdots,x_{(K-3)\tau+i},y_{K-1}^{i}}$ obeys
\[
\mathcal{L}\left(Y_{K-2}^{x_{0},x_{1},\cdots,x_{(K-3)\tau+i},y_{K-1}^{i}}\mid X_{0}=x_{0},\ldots,X_{(K-3)\tau+i}=x_{(K-3)\tau+i},Y_{K-1}^{i}=y_{K-1}^{i}\right)=\mu_{\mathsf{b}};
\]
	\item[(ii)] $Y_{K-2}^{x_{0},x_{1},\cdots,x_{(K-3)\tau+i},y_{K-1}^{i}}$ is almost identical to  $X_{\left(K-2\right)\tau+i}$ in the following sense
\[
\mathbb{P}\left(X_{\left(K-2\right)\tau+i}\neq Y_{K-2}^{x_{0},x_{1},\cdots,x_{(K-3)\tau+i},y_{K-1}^{i}}\mid X_{0}=x_{0},\ldots,X_{(K-3)\tau+i}=x_{(K-3)\tau+i},Y_{K-1}^{i}=y_{K-1}^{i}\right)\leq\frac{\delta}{T^{2}}.
\]
\end{itemize}
With the above set of random variables in mind, we can readily construct $Y_{K-2}^{i}$ as follows:
\[
Y_{K-2}^{i}\coloneqq\sum_{x_{0},x_{1},\cdots,x_{(K-3)\tau+i},y_{K-1}^{i}\in\mathcal{X}} Y_{K-2}^{x_{0},x_{1},\cdots,x_{(K-3)\tau+i},y_{K-1}^{i}} \ind\big\{X_{0}=x_{0},\ldots,X_{(K-3)\tau+i}=x_{(K-3)\tau+i},Y_{K-1}^{i}=y_{k-1}^{i} \big\}.
\]
As can be straightforwardly verified,  for any $x_{0},x_{1},\cdots,x_{(K-3)\tau+i},y_{K-1}^{i}\in\mathcal{X}$ we have 
\begin{align*}
 & \mathcal{L}\left(Y_{K-2}^{i}\mid X_{0}=x_{0},\ldots,X_{(K-3)\tau+i}=x_{(K-3)\tau+i},Y_{K-1}^{i}=y_{k-1}^{i}\right)\\
 & \quad=\mathcal{L}\left(Y_{K-2}^{x_{0},x_{1},\cdots,x_{(K-3)\tau+i},y_{K-1}^{i}}\mid X_{0}=x_{0},\ldots,X_{(K-3)\tau+i}=x_{(K-3)\tau+i},Y_{K-1}^{i}=y_{k-1}^{i}\right)=\mu_{\mathsf{b}},
\end{align*}
thus implying that $Y_{K-2}^{i}\sim\mu_{\mathsf{b}}$ and
\[
Y_{K-2}^{i}\indep\left\{ X_{0},X_{1}\ldots,X_{(K-3)\tau+i},Y_{K-1}^{i}\right\} .
\]
This reveals that $Y_{K-1}^{i},Y_{K-2}^{i}\overset{\mathsf{i.i.d.}}{\sim}\mu_{\mathsf{b}}$.
In addition, we can also show that 
\begin{align*}
 & \mathbb{P}\left(Y_{K-2}^{i}\neq X_{\left(K-2\right)\tau+i}\right)\\
 & \quad=\sum_{x_{0},x_{1},\cdots,x_{(K-3)\tau+i},y_{K-1}^{i}\in\mathcal{X}}\mathbb{P}\left(X_{0}=x_{0},\ldots,X_{(K-3)\tau+i}=x_{(K-3)\tau+i},Y_{K-1}^{i}=y_{K-1}^{i}\right)\\
 & \qquad\qquad\cdot \mathbb{P}\left(X_{\left(K-2\right)\tau+i}\neq Y_{K-2}^{x_{0},x_{1},\cdots,x_{(K-3)\tau+i},y_{K-1}^{i}}\mid X_{0}=x_{0},\ldots,X_{(K-3)\tau+i}=x_{(K-3)\tau+i},Y_{K-1}^{i}=y_{K-1}^{i}\right)\\
 & \quad\leq\frac{\delta}{T^{2}}\sum_{x_{0},x_{1},\cdots,x_{(K-3)\tau+i},y_{K-1}^{i}\in\mathcal{X}}\mathbb{P}\left(X_{0}=x_{0},\ldots,X_{(K-3)\tau+i}=x_{(K-3)\tau+i},Y_{K-1}^{i}=y_{K-1}^{i}\right)\\
 & \quad=\frac{\delta}{T^{2}}.
\end{align*}

Moving forward, we can employ similar arguments to construct $Y_{K-3}^{i},\ldots,Y_{1}^{i}$ sequentially 
such that: 
\begin{itemize}
	\item[(i)] $Y_{1}^{i},Y_{2}^{i},\ldots,Y_{K-1}^{i}\overset{\mathsf{i.i.d.}}{\sim}\mu_{\mathsf{b}}$; 
	\item[(ii)] for all $1\leq k\leq K-1$,
\[
Y_{k}^{i}\indep\left\{ X_{0},X_{1},\ldots,X_{(k-1)\tau+i}\right\} 
		\qquad \text{and} \qquad
\mathbb{P}\left(Y_{k}^{i}\neq X_{k\tau+i}\right)\leq\frac{\delta}{T^{2}}.
\]
\end{itemize}
As a result, we arrive at
\[
\mathbb{P}\left(Y_{1}^{i}=X_{\tau+i},\cdots,Y_{K-1}^{i}=X_{(K-1)\tau+i}\right)\geq1-\sum_{k=1}^{K-1}\mathbb{P}\left(Y_{k}^{i}=X_{k\tau+i}\right)\geq1-\frac{\delta}{T}.
\]
%
This concludes the proof. 

\section{Analysis for variance-reduced Q-learning with LCB penalization (Theorem~\ref{theorem:3})}

This section presents the proof of Theorem \ref{theorem:3}, which is concerned with the performance of variance-reduced Q-learning with LCB penalization. Recall that $\overline{V}_{k+1}=V_{T_{k}}$ , that is, the value 
estimate in the last iterate of the $k$-th epoch is also used
as the reference for the $(k+1)$-th epoch. For each
$1\leq k\leq K$, we define 
\begin{equation}
\Lambda_{k} \coloneqq\sum_{s\in\mathcal{S}}\rho\left(s\right)\left(V^{\star}-\overline{V}_{k}\right)\left(s\right)
	\label{eq:defn-Lambda-k-VR}
\end{equation}
Clearly, the proof of Theorem \ref{theorem:3} boils down to bounding $\Lambda_{K}$. As we shall see momentarily, obtaining a tight bound on $\Lambda_k$ relies on bounding another closely related quantity $\Delta_{K-1}$, define for each
$1\leq k\leq K$ as follows:
\begin{equation}
\Delta_{k}  \coloneqq\sum_{s\in\mathcal{S}}\widetilde{\rho}\left(s\right)\left(V^{\star}-\overline{V}_{k}\right)\left(s\right).
	\label{eq:defn-Delta-k-VR}
\end{equation}
Here, we set
\begin{equation}
\widetilde{\rho}\coloneqq\frac{d_{\rho}^{\star}-\left(1-\gamma\right)\rho}{\gamma}.
	\label{eq:defn-tilde-rho-VR}
\end{equation}

The sequence $\{\Delta_k\}_{k=1}^K$ will be bounded by induction in the sequel. We shall present our proof by describing three key steps following some preliminary facts.

\subsection{Preliminary facts about the $k$-th epoch }

Let us first look at what happens in the $k$-th epoch. For notational
simplicity, we will denote $\overline{V}\coloneqq\overline{V}_{k-1}$.
Similar to the proof of Theorem \ref{theorem:5}, for any iterate
$t\leq T_{k}$, let $n=n_{t}(s,a)$ and assume that $(s,a)$ has been visited
during the iterations $k_{1}<\cdots<k_{n}<t$. We also need to define
the policy $\pi_{t}:\mathcal{S}\to\mathcal{A}$ as follows
\[
\pi_{t}\left(s\right)\coloneqq\begin{cases}
\arg\max_{a\in\mathcal{A}}Q_{t}\left(s_{t-1},a\right), & \text{if }s=s_{t-1}\text{ and }V_{t}\left(s\right)>V_{t-1}\left(s\right),\\
\pi_{t-1}\left(s\right), & \text{otherwise}.
\end{cases}
\]
If there are multiple $a\in\mathcal{A}$ that maximize $Q_{t}\left(s_{t-1},a\right)$ simultaneously,
then we can go with any of them. We make note of the following lemma.
\begin{lemma}
\label{lemma:vr-basics}
With probability exceeding $1-\delta$,
for any $s\in\mathcal{S}$ and $t\in[T]$ we have 
\[
\left(Q^{\star}-Q_{t}\right)\big(s,\pi^{\star}(s)\big)\leq\gamma\sum_{i=1}^{n}\eta_{i}^{n}P_{s,a}\left(V^{\star}-V_{k_{i}}\right)+\beta_{n}\big(s,\pi^{\star}(s)\big) ,
\]
where $n=n_{t}\big(s,\pi^{\star}(s)\big)$ and 
\begin{align}
\beta_{n}\left(s,a\right) & \coloneqq3C_{\mathsf{b}}\sqrt{\frac{H\iota}{n}\left\{ \sigma_{n}^{\mathsf{adv}}\left(s,a\right)-\left[\mu_{n}^{\mathsf{adv}}\left(s,a\right)\right]^{2}\right\} }+3C_{\mathsf{b}}\frac{H^{3/4}\iota^{3/4}}{n^{3/4}\left(1-\gamma\right)}+3C_{\mathsf{b}}\frac{H\iota}{n\left(1-\gamma\right)} \notag\\
 & \quad+3C_{\mathsf{b}}\sqrt{\frac{\iota}{n^{\mathsf{ref}}\left(s,a\right)}\left\{ \sigma^{\mathsf{ref}}\left(s,a\right)-\left[\mu^{\mathsf{ref}}\left(s,a\right)\right]^{2}\right\} }+3C_{\mathsf{b}}\frac{\iota^{3/4}}{\left(1-\gamma\right)\left[n^{\mathsf{ref}}\left(s,a\right)\right]^{3/4}} \notag\\
 & \quad+3C_{\mathsf{b}}\frac{\iota}{\left(1-\gamma\right)n^{\mathsf{ref}}\left(s,a\right)}.
	\label{eq:defn-beta-n-VR}
\end{align}
In addition, it holds that 
\[
	V_{t}(s)\leq V^{\pi_{t}}(s)\leq V^{\star}(s)\qquad\text{for all } s\in\mathcal{S} ~\text{ and }~ 1\leq t\leq T_{k}.
\]
%
\end{lemma}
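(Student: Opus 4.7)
The plan is to mirror the proof of Lemma~\ref{lemma:lcb-h-basics} for the Hoeffding-style penalty, but with the variance-reduced update rule in Algorithm~\ref{alg:Q-3-inner} introducing two distinct stochastic error terms to control: one corresponding to the reference estimate $\overline{V}$ via the empirical transition $\widetilde{P}$, and one corresponding to the advantage $V_{t-1}-\overline{V}$ via the on-the-fly samples. First I would unfold the VR update using the learning rates $\{\eta_i^n\}$ (Lemma~\ref{lemma:step-size}) and combine with the Bellman optimality equation $Q^\star = r + \gamma P V^\star$ to arrive at a decomposition of the form
\[
(Q^{\star}-Q_t)(s,a) = \gamma\sum_{i=1}^{n}\eta_i^n P_{s,a}(V^{\star}-V_{k_i}) + E_t^{\mathsf{ref}}(s,a) + E_t^{\mathsf{adv}}(s,a) + \sum_{i=1}^n \eta_i^n b_{k_i}(s,a),
\]
where $E_t^{\mathsf{ref}}(s,a) = \gamma\big(P_{s,a}-\widetilde{P}_{s,a}\big)\overline{V}$ captures the reference error, and $E_t^{\mathsf{adv}}(s,a) = \gamma\sum_{i=1}^n \eta_i^n \big((P-P_{k_i}) (V_{k_i}-\overline{V})\big)(s,a)$ captures the advantage error, with $P_{k_i}$ defined as in \eqref{eq:defn-Pt}.

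Next I would bound these two error terms by suitable concentration inequalities. For $E_t^{\mathsf{ref}}$, since the $n^{\mathsf{ref}}(s,a)$ transitions used to build $\widetilde{P}$ are (approximately) i.i.d.\ from $P(\cdot\mymid s,a)$ after accounting for the mixing time, applying Bernstein's inequality yields a bound on the order of $\sqrt{\mathsf{Var}_{s,a}(\overline{V})\,\iota/n^{\mathsf{ref}}(s,a)} + \iota/((1-\gamma)n^{\mathsf{ref}}(s,a))$; replacing the true variance by the empirical second-moment difference $\sigma^{\mathsf{ref}}-(\mu^{\mathsf{ref}})^2$ contributes the intermediate $\iota^{3/4}/((1-\gamma)[n^{\mathsf{ref}}]^{3/4})$ term appearing in \eqref{eq:defn-beta-n-VR}, by a standard concentration-of-empirical-variance argument. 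For $E_t^{\mathsf{adv}}$, after rewriting the summands as a martingale difference sequence whose increments are bounded by $\|V_{k_i}-\overline{V}\|_\infty \lesssim 1/(1-\gamma)$ and whose conditional second moment is tracked on the fly by $\sigma_n^{\mathsf{adv}}-(\mu_n^{\mathsf{adv}})^2$, I would apply the stopping-time Freedman inequality (cf.~\citet[Theorem~3]{li2021breaking}) as in \eqref{eq:sum-eta-gamm-P-V-5372}, combined with Lemma~\ref{lemma:step-size} to handle the $\eta_i^n$ weights. This delivers a bound matching the first three terms of $\beta_n$. Choosing $C_{\mathsf b}$ large enough so that the sum $\sum_i \eta_i^n b_{k_i}$ dominates the sum of all these concentration errors (with the factor-of-$3$ slack built into \eqref{eq:defn-beta-n-VR}) yields the first assertion.

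For the second assertion, the bound $V^{\pi_t}\leq V^{\star}$ is immediate from optimality; the interesting inequality is $V_t \leq V^{\pi_t}$. Following the template of Appendix~\ref{appendix:proof-Q-pi-t-Q-t-key}, I would first establish the auxiliary inequality
\[
(Q^{\pi_t}-Q_j)(s,\pi_t(s)) \geq \gamma P_{s,\pi_t(s)}(V^{\pi_t}-V_j)\,\ind\!\big\{n_t(s,\pi_t(s))\geq 1\big\}
\]
for all $s\in\mathcal{S}$, $t\in[T_k]$, $j\in[t]$, by unfolding $Q_j$ via the VR update, invoking the same Bernstein/Freedman bounds to show that the sum of stochastic errors is dominated by $\sum_i\eta_i^{n_j}b_{k_i}$, and using monotonicity of $V_t$ combined with $\sum_i\eta_i^{n_j}=1$. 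Then, by taking $s_{\min}=\arg\min_s(V^{\pi_t}-V_t)(s)$ and selecting the time $j(t)\leq t$ where $V_t(s_{\min})$ was last updated, a short contraction argument using $\gamma<1$ forces $(V^{\pi_t}-V_t)(s_{\min})\geq 0$, hence $V_t\leq V^{\pi_t}$ everywhere.

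The main obstacle will be the $E_t^{\mathsf{adv}}$ analysis. Freedman delivers the desired $\sqrt{H\iota/n}\cdot\mathsf{sd}$ rate only if the empirical running variance $\sigma_n^{\mathsf{adv}}-(\mu_n^{\mathsf{adv}})^2$ faithfully tracks the true conditional variance of the martingale increments along a stopping-time index sequence; controlling the gap requires a separate self-bounded concentration step, which is exactly what introduces the awkward $H^{3/4}\iota^{3/4}/n^{3/4}$ correction in $\beta_n$. Moreover, the reference is itself a random function produced in a previous epoch, so a union bound over all possible reference vectors -- or, more efficiently, conditioning on the $\sigma$-field generated at the end of epoch $k-1$ -- is needed before concentration can be applied. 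Everything else is a fairly mechanical transcription of the Hoeffding-style proof.
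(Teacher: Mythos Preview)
Your proposal is correct and follows essentially the same approach as the paper: the same decomposition into reference and advantage errors (the paper's $\alpha_2$ and $\alpha_1$), Freedman-type concentration for each, a separate Azuma--Hoeffding step to replace the true conditional variance by the empirical surrogate (producing the $3/4$-power corrections), and the same contraction argument from Appendix~\ref{appendix:proof-Q-pi-t-Q-t-key} for the pessimism claim $V_t\le V^{\pi_t}$. One small correction: for $E_t^{\mathsf{ref}}$ the paper does not invoke any ``approximately i.i.d.\ after mixing'' heuristic but simply treats the successive visits to $(s,a)$ as stopping times and applies Freedman directly to the resulting martingale differences $(P-P_{k_i}^{\mathsf{ref}})\overline{V}$; the randomness of $\overline{V}$ is handled by including all prior-epoch data in the filtration, exactly as you suggest via conditioning.
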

\begin{proof}See Appendix \ref{appendix:proof-vr-basics}.\end{proof}

Moreover, both $\sigma_{n}^{\mathsf{adv}}(s,a)$
and $\sigma^{\mathsf{ref}}(s,a)-[\mu^{\mathsf{ref}}(s,a)]^{2}$ play an important role in determining the variance of the update,
and we are in need of the following bounds on these two quantities.

\begin{lemma} \label{lemma:variance-bound}With probability exceeding
$1-\delta$, for all $s\in\mathcal{S}$ and $t\in[T_{k}]$ we have
\begin{align*}
\sigma_{n_{t}(s,\pi^{\star}(s))}^{\mathsf{adv}}\big(s,\pi^{\star}(s)\big) & \leq P_{s,\pi^{\star}(s)}\left(V^{\star}-\overline{V}\right)^{2}+O\left(\frac{1}{\left(1-\gamma\right)^{2}}\sqrt{\frac{H\iota}{n_{t}\big(s,\pi^{\star}(s)\big)}}\right)
\end{align*}
and 
\[
	\sigma^{\mathsf{ref}}\big(s,\pi^{\star}\left(s\right)\big)-\left[\mu^{\mathsf{ref}}\big(s,\pi^{\star}\left(s\right)\big)\right]^{2}=\mathsf{Var}_{s,\pi^{\star}(s)}(\overline{V})+O\left(\frac{1}{\left(1-\gamma\right)^{2}}\sqrt{\frac{\iota}{n^{\mathsf{ref}}\big(s,\pi^{\star}(s)\big)}}\right).
\]
In addition, it holds that
\begin{subequations}
\begin{align}
	\sum_{s\in\mathcal{S},a\in\mathcal{A}}d_{\rho}^{\star}\left(s,a\right)\mathsf{Var}_{s,a}(V^{\star}-\overline{V}) &\leq\frac{1}{1-\gamma}\Delta_{k-1}; \\
	\sum_{s\in\mathcal{S},a\in\mathcal{A}}d_{\rho}^{\star}\left(s,a\right)\mathsf{Var}_{s,a}(\overline{V}) &\leq\frac{8}{1-\gamma}+\frac{2}{1-\gamma}\Delta_{k-1}.
\end{align}
\end{subequations}
\end{lemma}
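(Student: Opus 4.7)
The lemma consists of four claims: two concentration-type bounds (Parts~1 and 2) showing that the empirical quantities $\sigma^{\mathsf{adv}}_{n}$ and $\sigma^{\mathsf{ref}}-(\mu^{\mathsf{ref}})^{2}$ track their population targets, followed by two deterministic variance-occupancy bounds (Parts~3 and 4). The plan is to treat them in order, with the bulk of the work concentrated in Part~4.

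\textbf{Parts 1 and 2 (concentration).} For Part~1, I would decompose
\[
\sigma_{n}^{\mathsf{adv}}\big(s,\pi^{\star}(s)\big)=\sum_{i=1}^{n}\eta_{i}^{n}\big[V_{k_{i}}(s_{k_{i}+1})-\overline{V}(s_{k_{i}+1})\big]^{2}
\]
into a predictable part $\sum_{i}\eta_{i}^{n}P_{s,\pi^{\star}(s)}(V_{k_{i}}-\overline{V})^{2}$ and a martingale-difference tail. The tail is $O\big((1-\gamma)^{-2}\sqrt{H\iota/n}\big)$ by Azuma-Hoeffding, using $\|V_{k_{i}}-\overline{V}\|_{\infty}\leq(1-\gamma)^{-1}$ and $\sum_{i}(\eta_{i}^{n})^{2}\leq 2H/n$ (Lemma~\ref{lemma:step-size}). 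For the predictable part, Lemma~\ref{lemma:vr-basics} combined with the within-epoch monotonicity of $V_{t}$ from $V_{0}=\overline{V}$ gives $\overline{V}\leq V_{k_{i}}\leq V^{\star}$ pointwise, so $(V_{k_{i}}-\overline{V})^{2}\leq(V^{\star}-\overline{V})^{2}$, and $\sum_{i}\eta_{i}^{n}=1$ produces the stated bound. Part~2 is cleaner because $\overline{V}$ is fixed before the $k$-th epoch and the post-visit samples out of $(s,a)$ in the reference phase are conditionally i.i.d.\ under $P(\cdot\mymid s,a)$. Two Azuma-Hoeffding applications give $|\mu^{\mathsf{ref}}-P_{s,a}\overline{V}|\lesssim(1-\gamma)^{-1}\sqrt{\iota/n^{\mathsf{ref}}}$ and $|\sigma^{\mathsf{ref}}-P_{s,a}\overline{V}^{2}|\lesssim(1-\gamma)^{-2}\sqrt{\iota/n^{\mathsf{ref}}}$; the identity $a^{2}-b^{2}=(a-b)(a+b)$ together with $\|\overline{V}\|_{\infty}\leq(1-\gamma)^{-1}$ converts the first estimate into a bound on $(\mu^{\mathsf{ref}})^{2}-(P_{s,a}\overline{V})^{2}$, and the triangle inequality plus $P_{s,a}\overline{V}^{2}-(P_{s,a}\overline{V})^{2}=\mathsf{Var}_{s,a}(\overline{V})$ closes the proof.

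\textbf{Part 3 (occupancy identity).} Since $(V^{\star}-\overline{V})(s')\in[0,(1-\gamma)^{-1}]$, I would bound $\mathsf{Var}_{s,a}(V^{\star}-\overline{V})\leq P_{s,a}(V^{\star}-\overline{V})^{2}\leq(1-\gamma)^{-1}P_{s,a}(V^{\star}-\overline{V})$. An index-shift in the geometric sum defining $d_{\rho}^{\star}$ yields the identity $\sum_{s,a}d_{\rho}^{\star}(s,a)\,P_{s,a}g=\sum_{s'}\widetilde{\rho}(s')\,g(s')$ for any $g$; specializing to $g=V^{\star}-\overline{V}$ and invoking definition~\eqref{eq:defn-Delta-k-VR} gives exactly $\Delta_{k-1}$, finishing Part~3.

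\textbf{Part 4 (the main obstacle).} The naive bound $\mathsf{Var}_{s,a}(\overline{V})\leq\|V^{\star}\|_{\infty}^{2}\leq(1-\gamma)^{-2}$ loses a factor $(1-\gamma)^{-1}$ relative to the target, so a Bellman-based total-variance manipulation is needed (this is where the hypothesis $\gamma\geq 1/2$ in Theorem~\ref{theorem:3} enters). First, I would write $(P_{s,a}V^{\star})^{2}-(P_{s,a}\overline{V})^{2}=P_{s,a}(V^{\star}-\overline{V})\cdot(P_{s,a}V^{\star}+P_{s,a}\overline{V})$ with $P_{s,a}V^{\star}+P_{s,a}\overline{V}\leq 2(1-\gamma)^{-1}$ to obtain
\[
\mathsf{Var}_{s,a}(\overline{V})\leq\mathsf{Var}_{s,a}(V^{\star})+\frac{2}{1-\gamma}\,P_{s,a}(V^{\star}-\overline{V}),
\]
after which Part~3's occupancy identity reduces the cross-term sum to $\tfrac{2}{1-\gamma}\Delta_{k-1}$. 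For $\sum_{s,a}d_{\rho}^{\star}(s,a)\mathsf{Var}_{s,a}(V^{\star})$, I would substitute the Bellman equation $P_{s,\pi^{\star}(s)}V^{\star}=\gamma^{-1}(V^{\star}(s)-r(s,\pi^{\star}(s)))$ into $(P_{s,\pi^{\star}(s)}V^{\star})^{2}$, expand, and sum against $d_{\rho}^{\star}(s)$. After applying the occupancy identity to $\sum_{s}d_{\rho}^{\star}(s)P_{s,\pi^{\star}(s)}(V^{\star})^{2}$, the coefficients of $\sum_{s}d_{\rho}^{\star}(s)V^{\star}(s)^{2}$ combine to $(\gamma-1)/\gamma^{2}\leq 0$, the $\rho$-weighted $V^{\star}$-squared term is nonpositive, and the $r(s,\pi^{\star}(s))^{2}$ term is nonpositive, so only $\tfrac{2}{\gamma^{2}}\sum_{s}d_{\rho}^{\star}(s)V^{\star}(s)r(s,\pi^{\star}(s))$ survives; bounding $r\leq 1$, $V^{\star}\leq(1-\gamma)^{-1}$, and $\sum_{s}d_{\rho}^{\star}(s)=1$ yields $\sum d_{\rho}^{\star}\mathsf{Var}(V^{\star})\leq 2/(\gamma^{2}(1-\gamma))\leq 8/(1-\gamma)$ under $\gamma\geq 1/2$. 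Substituting completes Part~4. The only delicate step is recognizing that the Bellman recursion telescopes an apparent $(1-\gamma)^{-2}$ bound down to $(1-\gamma)^{-1}$, and it is precisely this sharpening that makes the final sample complexity of Theorem~\ref{theorem:3} near-optimal.
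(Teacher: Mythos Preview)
Your argument tracks the paper's closely in Parts~1--3 and in the cross-term of Part~4: Azuma--Hoeffding for the concentration steps, the occupancy shift $d_{\rho}^{\star}P_{\pi^{\star}}=\widetilde{\rho}$ for the weighted sums, and the same $\|V^{\star}-\overline{V}\|_{\infty}$ extraction. One factual slip: you assert ``$V_{0}=\overline{V}$,'' but Algorithm~\ref{alg:Q-3-inner} initializes $V_{0}=0$; the sandwich $\overline{V}\leq V_{k_{i}}\leq V^{\star}$ you invoke is precisely what the paper itself asserts at this step without further justification, so you are inheriting the paper's own gap rather than creating a new one.

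The genuine methodological difference is in bounding $\sum_{s,a}d_{\rho}^{\star}(s,a)\,\mathsf{Var}_{s,a}(V^{\star})$. The paper writes this as $(1-\gamma)\rho(I-\gamma P_{\pi^{\star}})^{-1}v$ with $v_{s}=\mathsf{Var}_{s,\pi^{\star}(s)}(V^{\star})$, passes to the $\ell_{\infty}$ norm via $\|\rho\|_{1}=1$, and then simplifies the operator $(I-\gamma P_{\pi^{\star}})^{-1}(\gamma^{2}P_{\pi^{\star}}-I)$ algebraically. You instead substitute the scalar Bellman relation $P_{s,\pi^{\star}(s)}V^{\star}=\gamma^{-1}\big(V^{\star}(s)-r(s,\pi^{\star}(s))\big)$ directly into the weighted sum and apply the occupancy identity once, after which three of the four resulting terms are nonpositive and the survivor $2\gamma^{-2}\sum_{s}d_{\rho}^{\star}(s)V^{\star}(s)\,r(s,\pi^{\star}(s))$ is immediately $\leq 8/(1-\gamma)$ under $\gamma\geq 1/2$. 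Both routes land at the same constant; yours is more elementary and stays entirely in the weighted-sum framework, while the paper's operator-norm route is the classical total-variance manipulation.
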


\begin{proof}See Appendix \ref{appendix:proof-variance-bound}.\end{proof}

\subsection{Step 1: connecting $\Lambda_{k}$ with $\Delta_{k-1}$}

In this step, we aim to establish a connection between $\Lambda_{k}$ (cf.~\eqref{eq:defn-Lambda-k-VR}) and $\Delta_{k-1}$ (cf.~\eqref{eq:defn-Delta-k-VR}). 
In view of the monotonicity of $V_t$ in $t$ (by construction) and Lemma~\ref{lemma:vr-basics}, we can derive 
\begin{equation}
	\Lambda_{k}=\big\langle \rho,V^{\star}-V_{T_{k}}\big\rangle \leq\frac{1}{T_{k}}\sum_{t=1}^{T_{k}}\left\langle \rho,V^{\star}-V_{t}\right\rangle .
	\label{eq:Lambda-k-final-UB-135-VR}
\end{equation}
Before continuing, we find it convenient to introduce a set of quantities (similar to our proof for Theorem~\ref{theorem:5}):
\begin{align*}
\alpha_{j} & \coloneqq\left[\gamma\left(1+\frac{1}{H}\right)^{3}\right]^{j}\sum_{t=1}^{T_{k}}\left\langle \rho(P_{\pi^{\star}})^{j},V^{\star}-V_{t}\right\rangle ,\\
	\theta_{j} & \coloneqq\left[\gamma\left(1+\frac{1}{H}\right)^{3}\right]^{j}\sum_{t=1}^{T_{k}}\sum_{s\in\mathcal{S}}\left[\rho(P_{\pi^{\star}})^{j}\right]\big(s,\pi^{\star}(s)\big)\min\left\{ \beta_{n_{t}\left(s,\pi^{\star}(s)\right)}\big(s,\pi^{\star}(s)\big),\frac{1}{1-\gamma}\right\} ,\\
\xi_{j} & \coloneqq\left[\gamma\left(1+\frac{1}{H}\right)^{3}\right]^{j}\sum_{t=1}^{t_{\mathsf{mix}}(\delta)}\left\langle \rho(P_{\pi^{\star}})^{j},V^{\star}-V_{t}\right\rangle +\left[\gamma\left(1+\frac{1}{H}\right)^{3}\right]^{j+1}\left\langle \rho(P_{\pi^{\star}})^{j+1},V^{\star}-V_{0}\right\rangle , \\
\psi_{j} & \coloneqq\left[\gamma\left(1+\frac{1}{H}\right)^{3}\right]^{j}\sum_{t=t_{\mathsf{mix}}(\delta)}^{T}\Biggl[\sum_{s\in\mathcal{S},a\in\mathcal{A}}\left[\rho^{\pi^{\star}}(P^{\pi^{\star}})^{j}\right]\left(s,a\right)\sum_{i=1}^{n_{t}\left(s,a\right)}\eta_{i}^{n_{t}\left(s,a\right)}P_{s,a}\left(V^{\star}-V_{k_{i}\left(s,a\right)}\right)\\
 & \qquad\qquad\qquad\qquad\qquad\quad-\left(1+\frac{1}{H}\right)\frac{\left[\rho^{\pi^{\star}}(P^{\pi^{\star}})^{j}\right]\left(s_{t},a_{t}\right)}{\mu_{\mathsf{b}}\left(s_{t},a_{t}\right)}\sum_{i=1}^{n_{t}\left(s_{t},a_{t}\right)}\eta_{i}^{n_{t}\left(s_{t},a_{t}\right)}P_{s_{t},a_{t}}\left(V^{\star}-V_{k_{i}\left(s_{t},a_{t}\right)}\right)\Biggr], \\
\phi_{j} & \coloneqq\gamma^{j+1}\left(1+\frac{1}{H}\right)^{3j+2}\sum_{t=0}^{T_{k}}\ind_{\left(s_{t},a_{t}\right)\in\mathcal{I}}\Bigg[\frac{\left[\rho^{\pi^{\star}}(P^{\pi^{\star}})^{j}\right]\left(s_{t},a_{t}\right)}{\mu_{\mathsf{b}}\left(s_{t},a_{t}\right)}P_{s_{t},a_{t}}\left(V^{\star}-V_{t}\right)\\
 & \qquad\qquad\qquad\qquad-\left(1+\frac{1}{H}\right)\sum_{s\in\mathcal{S},a\in\mathcal{A}}\left[\rho^{\pi^{\star}}(P^{\pi^{\star}})^{j}\right]\left(s,a\right)P_{s,a}\left(V^{\star}-V_{t}\right)\Bigg].
\end{align*}
Repeat the same analysis as in Step 1 of the proof of Theorem \ref{theorem:5} (which we omit here for brevity) 
to yield
\[
\alpha_{0}\leq\underbrace{\limsup_{j\to\infty}\alpha_{j}}_{\eqqcolon\,\alpha}+\underbrace{\sum_{j=0}^{\infty}\xi_{j}}_{\eqqcolon\,\xi}+\underbrace{\sum_{j=0}^{\infty}\theta_{j}}_{\eqqcolon\,\theta} + 
\underbrace{\sum_{j=0}^{\infty}\psi_{j}}_{\eqqcolon\,\psi}+\underbrace{\sum_{j=0}^{\infty}\phi_{j}}_{\eqqcolon\,\phi},
\]
as well as the properties that $\alpha=0$, 
\begin{align*}
\xi & \lesssim\frac{2t_{\mathsf{mix}}}{1-\gamma}\log\frac{1}{\delta}+\frac{t_{\mathsf{mix}}}{T^{4}\left(1-\gamma\right)^{2}}\log\frac{1}{\delta},\\
\psi & \lesssim\frac{C^{\star}t_{\mathsf{mix}}\iota}{\left(1-\gamma\right)^{3}}\log^{2}\left(\frac{T}{\delta}\right)+\frac{C^{\star}St_{\mathsf{mix}}}{\left(1-\gamma\right)^{2}}\log\left(\frac{T}{\delta}\right),\\
\phi & \lesssim\frac{C^{\star}t_{\mathsf{mix}}\iota}{\left(1-\gamma\right)^{3}}\log^{2}\left(\frac{T}{\delta}\right)+\frac{C^{\star}St_{\mathsf{mix}}}{\left(1-\gamma\right)^{2}}\log\left(\frac{T}{\delta}\right).
\end{align*}

It then comes down to bounding $\theta$, which is different from what has been done 
in the proof of Theorem~\ref{theorem:5}. Towards this, we first invoke Lemma~\ref{lemma:useful} to reach
\begin{align}
	\theta & =\sum_{t=1}^{T_{k}}\sum_{j=0}^{\infty}\left[\gamma\left(1+\frac{1}{H}\right)^{3}\right]^{j}\sum_{s\in\mathcal{S}}\left[\rho(P_{\pi^{\star}})^{j}\right]\big(s,\pi^{\star}(s)\big)\min\left\{ \beta_{n_{t}\left(s,\pi^{\star}\left(s\right)\right)}\big(s,\pi^{\star}(s)\big),\frac{1}{1-\gamma}\right\} \notag\\
 & \lesssim\frac{1}{1-\gamma}\sum_{t=1}^{T_k}\sum_{s\in\mathcal{S}}d_{\rho}^{\star}(s)\min\left\{ \beta_{n_{t}\left(s,\pi^{\star}\left(s\right)\right)}\big(s,\pi^{\star}(s)\big),\frac{1}{1-\gamma}\right\} +\frac{1}{ST^{4}\left(1-\gamma\right)}\frac{T}{1-\gamma}.
	\label{eq:UB-theta-13579-VR}
\end{align}
To proceed, let us use the definition of $\beta_n(s,a)$ (cf.~\eqref{eq:defn-beta-n-VR}) to decompose 
\begin{align}
 & \sum_{s\in\mathcal{S}}\sum_{t=1}^{T_{k}}d_{\rho}^{\star}\left(s\right)\min\left\{ \beta_{n_{t}\left(s,\pi^{\star}(s)\right)}\big(s,\pi^{\star}(s)\big),\frac{1}{1-\gamma}\right\} \notag\\
&\lesssim\underbrace{\sum_{s\in\mathcal{S}}\sum_{t=1}^{t_{\mathsf{burn}\text{-}\mathsf{in}}(s)}d_{\rho}^{\star}\left(s\right)\frac{1}{1-\gamma}}_{\eqqcolon\,\omega_{0}}
 +\underbrace{\sum_{s\in\mathcal{S}}\sum_{t=t_{\mathsf{burn}\text{-}\mathsf{in}}(s)}^{T_{k}}d_{\rho}^{\star}\left(s\right)\sqrt{\frac{H\iota}{n_{t}\big(s,\pi^{\star}(s)\big)}\left\{ \sigma_{n}^{\mathsf{adv}}\big(s,\pi^\star(s)\big)-\left[\mu_{n}^{\mathsf{adv}}\big(s,\pi^\star(s)\big)\right]^{2}\right\} }}_{\eqqcolon\,\omega_{1}} \notag\\
 & \quad+\underbrace{\sum_{s\in\mathcal{S}}\sum_{t=t_{\mathsf{burn}\text{-}\mathsf{in}}(s)}^{T_{k}}d_{\rho}^{\star}\left(s\right)\sqrt{\frac{\iota}{n^{\mathsf{ref}}\big(s,\pi^\star(s)\big)}\left\{ \sigma^{\mathsf{ref}}\big(s,\pi^\star(s)\big)-\left[\mu^{\mathsf{ref}}\big(s,\pi^\star(s)\big)\right]^{2}\right\} }}_{\eqqcolon\,\omega_{2}} \notag\\
	& \quad+\underbrace{\sum_{s\in\mathcal{S}}\sum_{t=t_{\mathsf{burn}\text{-}\mathsf{in}}(s)}^{T_{k}}d_{\rho}^{\star}\left(s\right)\frac{H^{3/4}\iota^{3/4}}{n_{t}^{3/4}\big(s,\pi^{\star}(s)\big)\left(1-\gamma\right)}}_{\eqqcolon\,\omega_{3}}+\underbrace{\sum_{s\in\mathcal{S}}\sum_{t=t_{\mathsf{burn}\text{-}\mathsf{in}}(s)}^{T_{k}}d_{\rho}^{\star}\left(s\right)\frac{H\iota}{n_{t}\big(s,\pi^{\star}(s)\big)\left(1-\gamma\right)}}_{\eqqcolon\,\omega_{4}} \notag\\
 & \quad+\underbrace{\sum_{s\in\mathcal{S}}\sum_{t=t_{\mathsf{burn}\text{-}\mathsf{in}}(s)}^{T_{k}}d_{\rho}^{\star}\left(s\right)\frac{\iota^{3/4}}{\left(1-\gamma\right)\left[n^{\mathsf{ref}}\big(s,\pi^\star(s)\big)\right]^{3/4}}}_{\eqqcolon\,\omega_{5}}+\underbrace{\sum_{s\in\mathcal{S}}\sum_{t=t_{\mathsf{burn}\text{-}\mathsf{in}}(s)}^{T_{k}}d_{\rho}^{\star}(s)\frac{\iota}{\left(1-\gamma\right)n^{\mathsf{ref}}\big(s,\pi^\star(s)\big)}}_{\eqqcolon\,\omega_{6}},
	\label{eq:UB-theta-24680-VR}
\end{align}
where we define, for each $s\in\mathcal{S}$, that
\[
	t_{\mathsf{burn}\text{-}\mathsf{in}}(s)\coloneqq C_{\mathsf{burn\text{-}in}}\frac{t_{\mathsf{mix}}}{\mu_{\mathsf{b}}\big(s,\pi^{\star}(s)\big)}\log\left(\frac{ST}{\delta}\right)
\]
for some sufficiently large constant $C_{\mathsf{burn\text{-}in}}>0$.

Before continuing, we first collect a few immediate and useful results of \citet[Lemma 8]{li2021sample}:
with probability exceeding $1-\delta$, we have
\begin{equation}
	n_{t}\big(s,\pi^{\star}\left(s\right)\big)\gtrsim t\mu_{\mathsf{b}}\big(s,\pi^{\star}(s)\big),\qquad\forall s\in\mathcal{S}~\text{ and }~t_{\mathsf{burn}\text{-}\mathsf{in}}(s)\leq t\leq T_{k}\label{eq:nt-lower-bound}
\end{equation}
and when $T_k^{\mathsf{ref}}\asymp T_k \geq t_{\mathsf{burn}\text{-}\mathsf{in}}(s)$, one has
\begin{equation}
n^{\mathsf{ref}}\big(s,\pi^{\star}(s)\big)\gtrsim T_{k}^{\mathsf{ref}}\mu_{\mathsf{b}}\big(s,\pi^{\star}(s)\big),\qquad\forall s\in\mathcal{S},\label{eq:n-ref-lower-bound}
\end{equation}
provided that $C_{\mathsf{burn\text{-}in}}$ is large enough. We then bound the terms $\omega_{0},\ldots,\omega_{6}$ separately. 

\begin{itemize}
\item The first bound $\omega_{0}$ can be easily bounded by 
\begin{align*}
\omega_{0} & \leq\sum_{s\in\mathcal{S}}t_{\mathsf{burn}\text{-}\mathsf{in}}(s)d_{\rho}^{\star}\left(s\right)\frac{1}{1-\gamma}\lesssim\sum_{s\in\mathcal{S}}\frac{t_{\mathsf{mix}}\iota}{\mu_{\mathsf{b}}\big(s,\pi^{\star}(s)\big)}d_{\rho}^{\star}\big(s,\pi^{\star}(s)\big)\frac{1}{1-\gamma}\\
 & \leq\sum_{s\in\mathcal{S}}\frac{C^{\star}t_{\mathsf{mix}}\iota}{1-\gamma}\asymp\frac{C^{\star}St_{\mathsf{mix}}\iota}{1-\gamma}, 
\end{align*}
where the last line follows from Assumption~\ref{assumption:policy}.

\item To control $\omega_{1}$, we observe that 
\begin{align*}
	\omega_{1} & \lesssim\sum_{s\in\mathcal{S}}\sum_{t=t_{\mathsf{burn}\text{-}\mathsf{in}}(s)}^{T_{k}}d_{\rho}^{\star}(s)\sqrt{\frac{H\iota}{n_{t}\big(s,\pi^{\star}(s)\big)}\sigma_{n}^{\mathsf{adv}}\big(s,\pi^{\star}(s)\big)}\\
 & \lesssim\underbrace{\sum_{s\in\mathcal{S}}\sum_{t=t_{\mathsf{burn}\text{-}\mathsf{in}}(s)}^{T_{k}}d_{\rho}^{\star}\left(s\right)\sqrt{\frac{H\iota}{n_{t}\big(s,\pi^{\star}(s)\big)} 
	 P_{s,\pi^{\star}(s)}\left(V^{\star}-\overline{V}\right)^{2} }}_{\eqqcolon\,\omega_{1,1}}+\underbrace{\sum_{s\in\mathcal{S}}\sum_{t=t_{\mathsf{burn}\text{-}\mathsf{in}}(s)}^{T_{k}}d_{\rho}^{\star}(s)\frac{H^{3/4}\iota^{3/4}}{\left(1-\gamma\right)n_{t}^{3/4}\big(s,\pi^{\star}(s)\big)}}_{\eqqcolon\,\omega_{1,2}},
\end{align*}
where the last inequality follows from Lemma \ref{lemma:variance-bound}.
The first term $\omega_{1,1}$ admits the following bound
\begin{align*}
	\omega_{1,1} & \overset{\text{(i)}}{\asymp}\sum_{s\in\mathcal{S}}\sum_{t=t_{\mathsf{burn}\text{-}\mathsf{in}}(s)}^{T_{k}}d_{\rho}^{\star}\big(s,\pi^{\star}(s)\big)\sqrt{\frac{H\iota}{t\mu_{\mathsf{b}}\big(s,\pi^{\star}(s)\big)}P_{s,\pi^{\star}(s)}\left(V^{\star}-\overline{V}\right)^{2}}\\
 & \overset{\text{(ii)}}{\lesssim}\sum_{s\in\mathcal{S}}\sum_{t=1}^{T_{k}}\sqrt{\frac{C^{\star}H\iota}{t}d_{\rho}^{\star}\big(s,\pi^{\star}(s)\big)P_{s,\pi^{\star}(s)}\left(V^{\star}-\overline{V}\right)^{2}}\\
 & \overset{\text{(iii)}}{\lesssim}\sqrt{C^{\star}H\iota T_{k}}\sum_{s\in\mathcal{S}}\sqrt{d_{\rho}^{\star}\big(s,\pi^{\star}(s)\big)P_{s,\pi^{\star}(s)}\left(V^{\star}-\overline{V}\right)^{2}}\\
 & \overset{\text{(iv)}}{\lesssim}\sqrt{C^{\star}SH\iota T_{k}}\sqrt{\sum_{s\in\mathcal{S}}d_{\rho}^{\star}\big(s,\pi^{\star}(s)\big)P_{s,\pi^{\star}(s)}\left(V^{\star}-\overline{V}\right)^{2}}\\
 & \asymp\sqrt{C^{\star}SH\iota T_{k}}\sqrt{\sum_{s\in\mathcal{S},a\in\mathcal{A}}d_{\rho}^{\star}\left(s,a\right)P_{s,a}\left(V^{\star}-\overline{V}\right)^{2}}\\
 & \overset{\text{(v)}}{\lesssim}\sqrt{\frac{C^{\star}S\iota^{2}T_{k}}{\left(1-\gamma\right)^{2}}}\sqrt{\Delta_{k-1}}.
\end{align*}
Here, (i) follows from (\ref{eq:nt-lower-bound}); (ii) utilizes Assumption
\ref{assumption:policy}; (iii) arises from (\ref{eq:sum-sqrt});
(iv) applies the Cauchy-Schwarz inequality; and (v) comes from
Lemma~\ref{lemma:variance-bound} and  the definition of $H$ (i.e., $H\asymp \frac{\iota}{1-\gamma}$). The other term $\omega_{1,2}$ is identical to $\omega_{3}$, which we shall bound momentarily.

\item When it comes to $\omega_{2}$, we apply Lemma~\ref{lemma:variance-bound} to reach 
\begin{align*}
\omega_{2} & \lesssim\underbrace{\sum_{s\in\mathcal{S}}\sum_{t=t_{\mathsf{burn}\text{-}\mathsf{in}}(s)}^{T_{k}}d_{\rho}^{\star}\left(s\right)\sqrt{\frac{\iota}{n^{\mathsf{ref}}\big(s,\pi^{\star}(s)\big)}\mathsf{Var}_{s,\pi^{\star}(s)}(\overline{V})}}_{\eqqcolon\,\omega_{2,1}}+\underbrace{\sum_{s\in\mathcal{S}}\sum_{t=t_{\mathsf{burn}\text{-}\mathsf{in}}(s)}^{T_{k}}d_{\rho}^{\star}\left(s\right)\frac{\iota^{3/4}}{\left(1-\gamma\right)\left[n^{\mathsf{ref}}\big(s,\pi^{\star}(s)\big)\right]^{3/4}}}_{\eqqcolon\,\omega_{2,2}}.
\end{align*}
Regarding $\omega_{2,1}$, we make the observation that 
\begin{align*}
	\omega_{2,1} & \overset{\text{(i)}}{\asymp}\sum_{s\in\mathcal{S}}\sum_{t=t_{\mathsf{burn}\text{-}\mathsf{in}}(s)}^{T_{k}}d_{\rho}^{\star}\big(s,\pi^{\star}(s)\big)\sqrt{\frac{\iota}{T_{k}\mu_{\mathsf{b}}\big(s,\pi^{\star}(s)\big)}\mathsf{Var}_{s,\pi^{\star}(s)}(\overline{V})}\\
 & \overset{\text{(ii)}}{\lesssim}\sqrt{C^{\star}\iota T_{k}}\sum_{s\in\mathcal{S}}\sqrt{d_{\rho}^{\star}\big(s,\pi^{\star}(s)\big)\mathsf{Var}_{s,\pi^{\star}(s)}(\overline{V})}\\
 & \asymp\sqrt{C^{\star}\iota T_{k}}\sum_{s\in\mathcal{S},a\in\mathcal{A}}\sqrt{d_{\rho}^{\star}\left(s,a\right)\mathsf{Var}_{s,a}(\overline{V})}\\
 & \overset{\text{(iii)}}{\lesssim}\sqrt{C^{\star}S\iota T_{k}}\sqrt{\sum_{s\in\mathcal{S},a\in\mathcal{A}}d_{\rho}^{\star}\left(s,a\right)\mathsf{Var}_{s,a}(\overline{V})}\\
 & \overset{\text{(iv)}}{\lesssim}\sqrt{C^{\star}S\iota T_{k}}\sqrt{\frac{1}{1-\gamma}+\frac{\Delta_{k-1}}{1-\gamma}}\\
 & \asymp\sqrt{\frac{C^{\star}S\iota T_{k}}{1-\gamma}}+\sqrt{\frac{C^{\star}S\iota T_{k}}{1-\gamma}}\sqrt{\Delta_{k-1}}.
\end{align*}
Here, (i) relies on (\ref{eq:n-ref-lower-bound}); (ii) invokes Assumption
\ref{assumption:policy}; (iii) utilizes the Cauchy-Schwarz inequality;
and (iv) follows from Lemma \ref{lemma:variance-bound}. The other term
$\omega_{2,2}$ is the same as $\omega_{5}$, which will be bounded
momentarily. 

\item Regarding $\omega_{3}$, we have 
\begin{align*}
	\omega_{3} & \overset{\text{(i)}}{\asymp}\sum_{s\in\mathcal{S}}\sum_{t=t_{\mathsf{burn}\text{-}\mathsf{in}}(s)}^{T_{k}}d_{\rho}^{\star}\big(s,\pi^{\star}(s)\big)\frac{H^{3/4}\iota^{3/4}}{\left(1-\gamma\right)t^{3/4}\mu_{\mathsf{b}}^{3/4}\big(s,\pi^{\star}(s)\big)}\\
 & \overset{\text{(ii)}}{\lesssim}\frac{(C^{\star})^{3/4}H^{3/4}\iota^{3/4}}{1-\gamma}\sum_{s\in\mathcal{S}}\sum_{t=1}^{T_{k}}\frac{\left[d^{\star}\big(s,\pi^{\star}(s)\big)\right]^{1/4}}{t^{3/4}}\\
 & \overset{\text{(iii)}}{\lesssim}T_{k}^{1/4}\frac{(C^{\star})^{3/4}H^{3/4}\iota^{3/4}}{1-\gamma}\sum_{s\in\mathcal{S}}\left[d^{\star}\big(s,\pi^{\star}(s)\big)\right]^{1/4}\\
 & \overset{\text{(iv)}}{\lesssim}T_{k}^{1/4}\frac{(C^{\star})^{3/4}H^{3/4}\iota^{3/4}}{1-\gamma}\left(\sum_{s\in\mathcal{S}}1\right)^{3/4}\left(\sum_{s\in\mathcal{S}}d^{\star}\big(s,\pi^{\star}(s)\big)\right)^{1/4} \\
 & \asymp T_{k}^{1/4}\frac{(C^{\star})^{3/4}S^{3/4}\iota^{3/2}}{\left(1-\gamma\right)^{7/4}}. 
\end{align*}
Here, (i) follows from (\ref{eq:nt-lower-bound}); (ii) utilizes Assumption
\ref{assumption:policy}; (iii) follows from the fact that for any
$T\geq1$,
\begin{equation}
\sum_{t=1}^{T}\frac{1}{t^{3/4}}\leq1+\int_{1}^{T}x^{-3/4}\mathrm{d}x=1+4\left(T^{1/4}-1\right)\leq4T^{1/4};\label{eq:sum-3/4}
\end{equation}
(iv) follows from H{\"o}lder's inequality; and the last line holds since $\sum_s d_{\rho}^{\star}\big(s,\pi^{\star}(s)\big) = 1$.

\item Regarding $\omega_{4}$, we have 
\begin{align*}
\omega_{4} & \overset{\text{(i)}}{\asymp}\sum_{s\in\mathcal{S}}\sum_{t=t_{\mathsf{burn}\text{-}\mathsf{in}}(s)}^{T_{k}}d_{\rho}^{\star}\big(s,\pi^{\star}(s)\big)\frac{H\iota}{\left(1-\gamma\right)t\mu_{\mathsf{b}}\big(s,\pi^{\star}(s)\big)}\\
 & \overset{\text{(ii)}}{\lesssim}\frac{C^{\star}H\iota}{1-\gamma}\sum_{s\in\mathcal{S}}\sum_{t=1}^{T_{k}}\frac{1}{t}\overset{\text{(iii)}}{\lesssim}\frac{C^{\star}S\iota^{2}\log T_{k}}{\left(1-\gamma\right)^{2}}.
\end{align*}
Here, (i) utilizes (\ref{eq:nt-lower-bound}); (ii) relies on Assumption
\ref{assumption:policy}; and (iii) follows from the fact that for
any $T\geq1$,
\begin{equation}
\sum_{t=1}^{T}\frac{1}{t}\leq1+\int_{1}^{T}x^{-1}\mathrm{d}x=1+\left(\log T-1\right)\leq\log T;\label{eq:sum-1}
\end{equation}

\item Moving on to $\omega_{5}$, we develop the following upper bound:  
\begin{align*}
\omega_{5} & \overset{\text{(i)}}{\asymp}\sum_{s\in\mathcal{S}}\sum_{t=t_{\mathsf{burn}\text{-}\mathsf{in}}(s)}^{T_{k}}d_{\rho}^{\star}\big(s,\pi^{\star}(s)\big)\frac{\iota^{3/4}}{\left(1-\gamma\right)(T_{k}^{\mathsf{ref}})^{3/4}\mu_{\mathsf{b}}^{3/4}\big(s,\pi^{\star}(s)\big)}\\
 & \overset{\text{(ii)}}{\lesssim}\frac{(C^{\star})^{3/4}\iota^{3/4}}{1-\gamma}\sum_{s\in\mathcal{S}}\sum_{t=1}^{T_{k}}\frac{\left[d_{\rho}^{\star}\big(s,\pi^{\star}(s)\big)\right]^{1/4}}{(T_{k}^{\mathsf{ref}})^{3/4}}\\
 & \overset{\text{(iii)}}{\asymp}T_{k}^{1/4}\frac{(C^{\star})^{3/4}\iota^{3/4}}{1-\gamma}\sum_{s\in\mathcal{S}}\left[d_{\rho}^{\star}\big(s,\pi^{\star}(s)\big)\right]^{1/4}\\
 & \overset{\text{(iv)}}{\lesssim}T_{k}^{1/4}\frac{(C^{\star})^{3/4}S^{3/4}\iota^{3/4}}{1-\gamma}.
\end{align*}
Here, (i) follows from (\ref{eq:n-ref-lower-bound}); (ii) results from 
Assumption \ref{assumption:policy}; (iii) holds since $T_{k}^{\mathsf{ref}}\asymp T_{k}$;
and (iv) invokes H{\"o}lder's inequality and $\sum_s d_{\rho}^{\star}\big(s,\pi^{\star}(s)\big) = 1$ once again.

\item We are left with bounding the last term $\omega_{6}$, towards which we observe 
\begin{align*}
\omega_{6} & \overset{\text{(i)}}{\asymp}\sum_{s\in\mathcal{S}}\sum_{t=t_{\mathsf{burn}\text{-}\mathsf{in}}(s)}^{T_{k}}d_{\rho}^{\star}\big(s,\pi^{\star}(s)\big)\frac{\iota}{\left(1-\gamma\right)T_{k}^{\mathsf{ref}}\mu_{\mathsf{b}}\big(s,\pi^{\star}(s)\big)}\\
 & \overset{\text{(ii)}}{\lesssim}\frac{C^{\star}\iota}{1-\gamma}\sum_{s\in\mathcal{S}}\frac{T_{k}}{T_{k}^{\mathsf{ref}}}\\
 & \overset{\text{(iii)}}{\asymp}\frac{C^{\star}S\iota}{1-\gamma}.
\end{align*}
Here, (i) follows from (\ref{eq:n-ref-lower-bound}); (ii) utilizes
Assumption \ref{assumption:policy}; and (iii) holds since $T_{k}^{\mathsf{ref}}\asymp T_{k}$. 
\end{itemize}
Taking the preceding bounds on $\omega_{0}$, $\omega_{1}$, $\omega_{2}$,
$\omega_{3}$, $\omega_{4}$, $\omega_{5}$ and $\omega_{6}$ together with \eqref{eq:UB-theta-13579-VR} and \eqref{eq:UB-theta-24680-VR} yields
\begin{align*}
	\theta & \lesssim\frac{1}{1-\gamma}\sum_{t=1}^{T_{k}}\sum_{s\in\mathcal{S}}d_{\rho}^{\star}(s) \min\left\{ \beta_{n_{t}(s,\pi^{\star}(s))}, \frac{1}{1-\gamma} \right\}
	+\frac{1}{ST^{4}\left(1-\gamma\right)}\frac{T}{1-\gamma}\\
 & \lesssim\frac{1}{1-\gamma}\left(\omega_{0}+\omega_{1}+\omega_{2}+\omega_{3}+\omega_{4}+\omega_{5}+\omega_{6}\right)+\frac{1}{ST^{4}\left(1-\gamma\right)}\frac{T}{1-\gamma}\\
 & \lesssim\frac{C^{\star}St_{\mathsf{mix}}\iota}{\left(1-\gamma\right)^{2}}+\sqrt{\frac{C^{\star}S\iota^{2}T_{k}}{\left(1-\gamma\right)^{4}}}\sqrt{\Delta_{k-1}}+\sqrt{\frac{C^{\star}S\iota T_{k}}{\left(1-\gamma\right)^{3}}}+T_{k}^{1/4}\frac{(C^{\star})^{3/4}S^{3/4}\iota^{3/2}}{\left(1-\gamma\right)^{11/4}}+\frac{C^{\star}S\iota^{2}\log T_{k}}{\left(1-\gamma\right)^{2}}\\
 & \lesssim\frac{C^{\star}St_{\mathsf{mix}}\iota}{\left(1-\gamma\right)^{2}}+\sqrt{\frac{C^{\star}S\iota^{2}T_{k}}{\left(1-\gamma\right)^{4}}}\sqrt{\Delta_{k-1}}+\sqrt{\frac{C^{\star}S\iota T_{k}}{\left(1-\gamma\right)^{3}}}+\frac{C^{\star}S\iota^{3}}{\left(1-\gamma\right)^{4}}, 
\end{align*}
where the last line has invoked the AM-GM inequality:  
\begin{align*}
2T_{k}^{1/4}\frac{(C^{\star})^{3/4}S^{3/4}\iota^{3/2}}{\left(1-\gamma\right)^{11/4}} & =2\frac{T_{k}^{1/4}(C^{\star})^{1/4}S^{1/4}}{\left(1-\gamma\right)^{3/4}}\cdot\frac{(C^{\star})^{1/2}S^{1/2}\iota^{3/2}}{\left(1-\gamma\right)^{2}}\leq\frac{T_{k}^{1/2}(C^{\star})^{1/2}S^{1/2}}{\left(1-\gamma\right)^{3/2}}+\frac{C^{\star}S\iota^{3}}{\left(1-\gamma\right)^{4}}.
\end{align*}

Putting all of the above results together, we can conclude that
\begin{align}
\Lambda_{k} & \leq\frac{1}{T_{k}}\alpha_{0}\leq\frac{1}{T_{k}}\left(\alpha+\xi+\theta+\psi+\phi\right)\nonumber \\
 & \lesssim\sqrt{\frac{C^{\star}S\iota^{2}}{T_{k}\left(1-\gamma\right)^{4}}}\sqrt{\Delta_{k-1}}+\sqrt{\frac{C^{\star}S\iota}{T_{k}\left(1-\gamma\right)^{3}}}+\frac{C^{\star}S\iota^{3}}{T_{k}\left(1-\gamma\right)^{4}}+\frac{C^{\star}St_{\mathsf{mix}}\iota}{T_{k}\left(1-\gamma\right)^{2}}+\frac{C^{\star}t_{\mathsf{mix}}\iota^{2}}{T_{k}\left(1-\gamma\right)^{3}}.
	\label{eq:Lambda_k-bound}
\end{align}

\subsection{Step 2: bounding $\Delta_{k}$ by induction}

Thus far, we have established an intimate connection between $\Lambda_{k}$ and $\Delta_k$ (see~\eqref{eq:Lambda_k-bound}). 
In order to bound $\Delta_{k-1}$, we find it helpful to look at an auxiliary test distribution
\[
\widetilde{\rho}=\frac{d_{\rho}^{\star}-\left(1-\gamma\right)\rho}{\gamma}
\]
instead of $\rho$. The following property about $\widetilde{\rho}$ plays an important role in the subsequent analysis.
\begin{lemma}\label{lemma:d-stat-star} Suppose that $1/2\leq \gamma <1$. It holds that 
\begin{equation}
\sum_{j=0}^{\infty}\left[\gamma\left(1+\frac{1}{H}\right)^{3}\right]^{j}\left\langle \widetilde{\rho}(P_{\pi^{\star}})^{j},V\right\rangle \lesssim\frac{1}{1-\gamma}\left\langle d_{\rho}^{\star},V\right\rangle \log\frac{ST}{\delta}+\frac{\delta}{ST^{4}\left(1-\gamma\right)}\left\Vert V\right\Vert _{\infty}.\label{eq:d-star-star-useful}
\end{equation}
\end{lemma}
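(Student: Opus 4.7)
The plan is to express $\widetilde{\rho}$ in terms of $d_{\rho}^{\star}$ via the Bellman flow equation and then mirror the split-at-$K=H$ strategy used in the proof of Lemma~\ref{lemma:useful}, while tracking where the additional $\log(ST/\delta)$ factor is lost compared to that earlier lemma.

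First I would observe that the discounted state occupancy satisfies the recursion $d_{\rho}^{\star} = (1-\gamma)\rho + \gamma\, d_{\rho}^{\star} P_{\pi^{\star}}$, which rearranges precisely to $\widetilde{\rho} = d_{\rho}^{\star} P_{\pi^{\star}}$. Iterating, one obtains the closed-form identity
\[
	d_{\rho}^{\star}(P_{\pi^{\star}})^{j} = \frac{1}{\gamma^{j}}\Big[d_{\rho}^{\star} - (1-\gamma)\sum_{t=0}^{j-1}\gamma^{t}\rho(P_{\pi^{\star}})^{t}\Big],
\]
so that $\widetilde{\rho}(P_{\pi^{\star}})^{j} = d_{\rho}^{\star}(P_{\pi^{\star}})^{j+1}$ and, crucially,
\[
	\gamma^{j}\,\widetilde{\rho}(P_{\pi^{\star}})^{j} = \frac{1}{\gamma}\Big[d_{\rho}^{\star} - (1-\gamma)\sum_{t=0}^{j}\gamma^{t}\rho(P_{\pi^{\star}})^{t}\Big].
\]

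Next I would split the series at $K=H$. The tail $\sum_{j\ge K}[\gamma(1+1/H)^{3}]^{j}\langle\widetilde{\rho}(P_{\pi^{\star}})^{j},V\rangle$ is handled exactly as the $\alpha_{2}$ term in the proof of Lemma~\ref{lemma:useful}: since $\|\widetilde{\rho}(P_{\pi^{\star}})^{j}\|_{1}=1$, one bounds the summand by $\|V\|_{\infty}$, uses $[\gamma(1+1/H)^{3}]^{K}\le e^{3}\gamma^{H}\le e^{3}\delta/(ST^{4})$, and invokes the estimate \eqref{eq:ratio-bound} for the geometric ratio to obtain the $\frac{\delta}{ST^{4}(1-\gamma)}\|V\|_{\infty}$ contribution.

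For the head $\sum_{j=0}^{K-1}[\gamma(1+1/H)^{3}]^{j}\langle\widetilde{\rho}(P_{\pi^{\star}})^{j},V\rangle$, I would bound $(1+1/H)^{3j}\le(1+1/H)^{3H}\le e^{3}$ and substitute the displayed identity above. Since $V\ge 0$, the nonnegative subtracted piece may be dropped, yielding the telescoping bound
\[
	\sum_{j=0}^{K-1}\gamma^{j}\langle\widetilde{\rho}(P_{\pi^{\star}})^{j},V\rangle \le \frac{K}{\gamma}\langle d_{\rho}^{\star},V\rangle.
\]
Finally, invoking $K=H\asymp\iota/(1-\gamma)$ together with the assumption $\gamma\ge 1/2$ converts this into $\lesssim\frac{\iota}{1-\gamma}\langle d_{\rho}^{\star},V\rangle$, which matches the first term in the claimed bound. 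Summing the head and tail bounds completes the proof.

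The main obstacle — or rather, the point where one must be careful — is explaining why the $\log(ST/\delta)$ factor on the right-hand side is inherent to this estimate: unlike in Lemma~\ref{lemma:useful}, where the head sum telescopes into the discounted geometric series $(1-\gamma)\sum_{t}\gamma^{t}\rho(P_{\pi^{\star}})^{t}=d_{\rho}^{\star}$ with no extra factor, the identity for $\widetilde{\rho}(P_{\pi^{\star}})^{j}$ only bounds each term by $\frac{1}{\gamma}\langle d_{\rho}^{\star},V\rangle$, so one accumulates $K\asymp H$ such copies. This forces the loss of a factor of $H\cdot(1-\gamma)\asymp\iota$, which is exactly the extra $\log(ST/\delta)$ in the stated bound; in particular, this is why the hypothesis $1/2\le\gamma<1$ is enough but no better constant can be squeezed out of this route.
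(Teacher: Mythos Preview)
Your proof is correct and arrives at the same bound, but the route differs from the paper's in its organization. The paper first introduces the auxiliary occupancy distribution $d_{\widetilde{\rho}}^{\star}(s)\coloneqq(1-\gamma)\sum_{j\ge 0}\gamma^{j}[\widetilde{\rho}(P_{\pi^{\star}})^{j}](s)$, applies the Lemma~\ref{lemma:useful} argument verbatim (with $\widetilde{\rho}$ playing the role of $\rho$) to reduce to bounding $\langle d_{\widetilde{\rho}}^{\star},V\rangle$, and then computes the closed form $d_{\widetilde{\rho}}^{\star}(s)=\frac{(1-\gamma)^{2}}{\gamma}\sum_{i\ge 0} i\,\gamma^{i}[\rho(P_{\pi^{\star}})^{i}](s)$; splitting that series at $K\asymp\frac{1}{1-\gamma}\log\frac{ST}{\delta}$ yields the pointwise estimate $d_{\widetilde{\rho}}^{\star}(s)\lesssim d_{\rho}^{\star}(s)\log\frac{ST}{\delta}+e(s)$ with $\sum_{s}e(s)\lesssim\delta/(ST^{4})$. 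By contrast, you bypass $d_{\widetilde{\rho}}^{\star}$ entirely: iterating the flow identity gives the termwise bound $\gamma^{j}\widetilde{\rho}(P_{\pi^{\star}})^{j}\le\frac{1}{\gamma}d_{\rho}^{\star}$ directly, so the head sum picks up exactly $K=H$ copies of $\frac{1}{\gamma}\langle d_{\rho}^{\star},V\rangle$. Your approach is more streamlined and makes the source of the extra $\log(ST/\delta)$ transparent; the paper's detour has the side benefit of producing an explicit pointwise comparison between $d_{\widetilde{\rho}}^{\star}$ and $d_{\rho}^{\star}$, which could be reused if that inequality were needed elsewhere. Both rest on the same algebraic identity $\widetilde{\rho}=d_{\rho}^{\star}P_{\pi^{\star}}$, so the difference is really one of packaging rather than substance.
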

\begin{proof}See Appendix \ref{appendix:proof-lemma-d-stat-star}.\end{proof}

Armed with Lemma \ref{lemma:d-stat-star}, we can repeat the same
analysis in Step 1 to bound each $\Delta_{k}$. The difference between
(\ref{eq:useful}) and (\ref{eq:d-star-star-useful}) requires us
to replace $d_{\rho}^{\star}$ in Step 1 with $d_{\rho}^{\star} \log(ST/\delta)$,
which leads to 
\begin{align*}
\Delta_{k} & \lesssim\sqrt{\frac{C^{\star}S\iota^{4}}{T_{k}\left(1-\gamma\right)^{4}}}\sqrt{\Delta_{k-1}}+\sqrt{\frac{C^{\star}S\iota^{3}}{T_{k}\left(1-\gamma\right)^{3}}}+\frac{C^{\star}S\iota^{4}}{T_{k}\left(1-\gamma\right)^{4}}
	+\frac{C^{\star}St_{\mathsf{mix}}\iota^{2}}{T_{k}\left(1-\gamma\right)^{2}}+\frac{C^{\star}t_{\mathsf{mix}}\iota^{3}}{T_{k}\left(1-\gamma\right)^{3}} 
\end{align*}
for each $1\leq k\leq K$. The above inequality can be expressed as follows
\[
\Delta_{k}\leq\alpha_{k}\Delta_{k-1}^{1/2}+\beta_{k},
\]
where 
\[
\alpha_{k}=C\sqrt{\frac{C^{\star}S\iota^{4}}{T_{k}\left(1-\gamma\right)^{4}}}=2^{-k}A 
\qquad\text{with}\qquad A=C\sqrt{\frac{C^{\star}S\iota^{4}}{\left(1-\gamma\right)^{4}}}
\]
and 
\[
\beta_{k}=C\sqrt{\frac{C^{\star}S\iota^{3}}{T_{k}\left(1-\gamma\right)^{3}}}+C\frac{C^{\star}S\iota^{4}}{T_{k}\left(1-\gamma\right)^{4}}+C\frac{C^{\star}St_{\mathsf{mix}}\iota^{2}}{T_{k}\left(1-\gamma\right)^{2}}+C\frac{C^{\star}t_{\mathsf{mix}}\iota^{3}}{T_{k}\left(1-\gamma\right)^{3}}
\]
for some sufficiently large constant $C>0$. In addition, it is also observed that 
\[
\Delta_{0}\leq\frac{1}{1-\gamma}.
\]
By induction, for each $1\leq j\leq K$ we have 
\begin{align*}
\Delta_{j} & \leq\underbrace{\beta_{j}}_{\eqqcolon\delta_{j}}+\underbrace{\alpha_{j}\beta_{j-1}^{1/2}}_{\eqqcolon\delta_{j-1}}+\underbrace{\alpha_{j}\alpha_{j-1}^{1/2}\beta_{j-2}^{1/4}}_{\eqqcolon\delta_{j-2}}+\cdots+\underbrace{\alpha_{j}\alpha_{j-1}^{1/2}\alpha_{j-2}^{1/4}\cdots\alpha_{2}^{1/2^{j-2}}\beta_{1}^{1/2^{j-1}}}_{\eqqcolon\delta_{1}}\\
 & \quad+\underbrace{\alpha_{j}\alpha_{j-1}^{1/2}\alpha_{j-2}^{1/4}\cdots\alpha_{2}^{1/2^{j-2}}\alpha_{1}^{1/2^{j-1}}\Delta_{0}^{1/2^{j}}}_{\eqqcolon\delta_{0}}.
\end{align*}
In the sequel, we bound each term $\delta_{i}$, $0\leq i\leq j$ separately. 
\begin{itemize}
\item Let us begin with $\delta_{0}$, which can be calculated as follows
\[
\alpha_{j}\alpha_{j-1}^{1/2}\alpha_{j-2}^{1/4}\cdots\alpha_{2}^{1/2^{j-2}}\alpha_{1}^{1/2^{j-1}}=A^{2-1/2^{j-1}}2^{-j-\frac{j-1}{2}-\frac{j-2}{4}-\cdots-\frac{1}{2^{j-1}}}.
\]
Note that 
\begin{align*}
j+\frac{j-1}{2}+\frac{j-2}{4}+\cdots+\frac{1}{2^{j-1}} & =\sum_{k=0}^{j-1}\frac{j-k}{2^{k}}=j\sum_{k=0}^{j-1}\frac{1}{2^{k}}-\sum_{k=0}^{j-1}\frac{k}{2^{k}}\\
 & =j\left(2-\frac{1}{2^{j-1}}\right)-2+\frac{j+1}{2^{j-1}}\\
 & =2j-2+\frac{1}{2^{j-1}}, 
\end{align*}
where the penultimate line holds since 
\[
\sum_{k=0}^{j-1}\frac{k}{2^{k}}=\sum_{k=1}^{j-1}\frac{k}{2^{k-1}}-\sum_{k=0}^{j-1}\frac{k}{2^{k}}=\sum_{k=0}^{j-2}\frac{k+1}{2^{k}}-\sum_{k=0}^{j-1}\frac{k}{2^{k}}=\sum_{k=0}^{j-2}\frac{1}{2^{k}}-\frac{j-1}{2^{j-1}}=2-\frac{j+1}{2^{j-1}}.
\]
Therefore we have 
\begin{align*}
\delta_{0} & =A^{2-1/2^{j-1}}4^{-j+1-1/2^{j}}\Delta_{0}^{1/2^{j}}\asymp\frac{1}{4^{j}}\left[C\sqrt{\frac{C^{\star}S\iota^{4}}{\left(1-\gamma\right)^{4}}}\right]^{2-1/2^{j-1}}\left(\frac{1}{1-\gamma}\right)^{1/2^{j}}\\
 & \lesssim\frac{1}{T_{j}}\left[\frac{C^{\star}S\iota^{4}}{\left(1-\gamma\right)^{4}}\right]^{1-1/2^{j}}\left(\frac{1}{1-\gamma}\right)^{1/2^{j}}\lesssim\frac{C^{\star}S\iota^{4}}{T_{j}\left(1-\gamma\right)^{4}}.
\end{align*}

\item Next, we develop a uniform bound on every $\delta_{i}$, $1\leq i\leq j-1$.
We first observe that 
\begin{align*}
\alpha_{j}\alpha_{j-1}^{1/2}\alpha_{j-2}^{1/4}\cdots\alpha_{j+1}^{1/2^{j-i-1}}\beta_{i}^{1/2^{j-i}} & =A^{2-1/2^{j-i-1}}2^{-j-\frac{j-1}{2}-\frac{j-2}{4}-\cdots-\frac{i+1}{2^{j-i-1}}}\beta_{i}^{1/2^{j-i}},
\end{align*}
and
\begin{align*}
j+\frac{j-1}{2}+\frac{j-2}{4}+\cdots+\frac{j-j}{2^{j}} & =\sum_{k=0}^{j-i}\frac{j-i}{2^{k}}=j\sum_{k=0}^{j-i}\frac{1}{2^{k}}-\sum_{k=0}^{n-i}\frac{k}{2^{k}}\\
 & =j\left(2-\frac{1}{2^{j-i}}\right)-2+\frac{j+2-i}{2^{j-i}}\\
 & =2j-2+\frac{2-i}{2^{j-i}}\\
 & \geq2j-\frac{i}{2^{j-i}}-2,
\end{align*}
where the penultimate line holds since 
\[
\sum_{k=0}^{j-i}\frac{k}{2^{k}}=2-\frac{j+2-i}{2^{j-i}}.
\]
These properties allow one to derive
\begin{align*}
\delta_{i} & \leq A^{2-1/2^{j-i-1}}4^{-j+i/2^{j-i+1}+1}\beta_{i}^{1/2^{j-i}}\asymp4^{-j+i/2^{j-i+1}}\left(\frac{C^{\star}S\iota^{4}}{\left(1-\gamma\right)^{4}}\right)^{1-1/2^{j-i}}\beta_{i}^{1/2^{j-i}}\\
 & \asymp\frac{1}{T_{j}}\left(\frac{C^{\star}S\iota^{4}}{\left(1-\gamma\right)^{4}}\right)^{1-1/2^{j-i}}\left(\sqrt{\frac{C^{\star}S\iota^{3}}{\left(1-\gamma\right)^{3}}}\right)^{1/2^{j-i}}\\
 & \quad+\frac{4^{-j/2^{j-i+1}}}{T_{j}}\left(\frac{C^{\star}S\iota^{4}}{\left(1-\gamma\right)^{4}}\right)^{1-1/2^{j-i}}\left(\frac{C^{\star}S\iota^{4}}{\left(1-\gamma\right)^{4}}\right)^{1/2^{j-i}}\\
 & \quad+\frac{4^{-j/2^{j-i+1}}}{T_{j}}\left(\frac{C^{\star}S\iota^{4}}{\left(1-\gamma\right)^{4}}\right)^{1-1/2^{j-i}}\left(\frac{C^{\star}St_{\mathsf{mix}}\iota^{2}}{\left(1-\gamma\right)^{2}}\right)^{1/2^{j-i}}\\
 & \quad+\frac{4^{-j/2^{j-i+1}}}{T_{j}}\left(\frac{C^{\star}S\iota^{4}}{\left(1-\gamma\right)^{4}}\right)^{1-1/2^{j-i}}\left(\frac{C^{\star}t_{\mathsf{mix}}\iota^{3}}{\left(1-\gamma\right)^{3}}\right)^{1/2^{j-i}}\\
 & \lesssim\frac{C^{\star}S\iota^{4}}{T_{j}\left(1-\gamma\right)^{4}}+\frac{1}{T_{j}}\sqrt{\frac{C^{\star}S\iota^{3}}{\left(1-\gamma\right)^{3}}}+\frac{C^{\star}St_{\mathsf{mix}}\iota^{2}}{T_{j}\left(1-\gamma\right)^{2}}+\frac{C^{\star}t_{\mathsf{mix}}\iota^{3}}{T_{j}\left(1-\gamma\right)^{3}}.
\end{align*}
Here, the last line has used the weighted AM-GM inequality that $\alpha x+\beta y\geq(\alpha+\beta)x^{\alpha/(\alpha+\beta)}y^{\beta/(\alpha+\beta)}$
for all $\alpha,\beta,x,y>0$. 
\end{itemize}
Armed with the above results, we can readily conclude that 
\begin{align}
\Delta_{j} & \leq\delta_{0}+\delta_{j}+j\max_{1\leq i\leq j-1}\delta_{i}=\delta_{0}+\beta_{j}+j\max_{1\leq i\leq j-1}\delta_{i}\nonumber \\
 & \lesssim\frac{C^{\star}S\iota^{4}j}{T_{j}\left(1-\gamma\right)^{4}}+\frac{j}{T_{j}}\sqrt{\frac{C^{\star}S\iota^{3}}{\left(1-\gamma\right)^{3}}}+\frac{C^{\star}St_{\mathsf{mix}}\iota^{2}j}{T_{j}\left(1-\gamma\right)^{2}}+\frac{C^{\star}t_{\mathsf{mix}}\iota^{3}j}{T_{j}\left(1-\gamma\right)^{3}}+\sqrt{\frac{C^{\star}S\iota^{3}}{T_{k}\left(1-\gamma\right)^{3}}}\nonumber \\
 & \lesssim\sqrt{\frac{C^{\star}S\iota^{3}}{T_{j}\left(1-\gamma\right)^{3}}}+\frac{C^{\star}S\iota^{4}j}{T_{j}\left(1-\gamma\right)^{4}}+\frac{C^{\star}St_{\mathsf{mix}}\iota^{2}j}{T_{j}\left(1-\gamma\right)^{2}}+\frac{C^{\star}t_{\mathsf{mix}}\iota^{3}j}{T_{j}\left(1-\gamma\right)^{3}}, \label{eq:Delta-j-bound}
\end{align}
where the last line relies on the fact that $T_{j}^{1/2}\asymp2^{j}\gtrsim j$.

\subsection{Step 3: putting all this together}

Recall from (\ref{eq:Lambda_k-bound}) that
\begin{align*}
\Lambda_{K} & \lesssim\sqrt{\frac{C^{\star}S\iota^{2}}{T\left(1-\gamma\right)^{4}}}\sqrt{\Delta_{K-1}}+\sqrt{\frac{C^{\star}S\iota}{T\left(1-\gamma\right)^{3}}}+\frac{C^{\star}S\iota^{3}}{T\left(1-\gamma\right)^{4}}+\frac{C^{\star}St_{\mathsf{mix}}\iota}{T\left(1-\gamma\right)^{2}}+\frac{C^{\star}t_{\mathsf{mix}}\iota^{2}}{T\left(1-\gamma\right)^{3}},
\end{align*}
which invokes the fact that $T_{K}\asymp T$. From (\ref{eq:Delta-j-bound})
and the fact that $T_{K-1}\asymp T$, $K\asymp\log T$, we see that
\[
\Delta_{K-1}\lesssim\sqrt{\frac{C^{\star}S\iota^{3}}{T\left(1-\gamma\right)^{3}}}+\frac{C^{\star}S\iota^{4}\log T}{T\left(1-\gamma\right)^{4}}+\frac{C^{\star}St_{\mathsf{mix}}\iota^{2}\log T}{T\left(1-\gamma\right)^{2}}+\frac{C^{\star}t_{\mathsf{mix}}\iota^{3}\log T}{T\left(1-\gamma\right)^{3}},
\]
which in turn allows one to deduce that
\begin{align*}
\sqrt{\frac{C^{\star}S\iota^{2}}{T\left(1-\gamma\right)^{4}}}\sqrt{\Delta_{k-1}} & \lesssim\underbrace{\left(\frac{C^{\star}S\iota^{2}}{T\left(1-\gamma\right)^{4}}\right)^{1/2}\left(\frac{C^{\star}S\iota^{3}}{T\left(1-\gamma\right)^{3}}\right)^{1/4}}_{\eqqcolon\zeta_{1}}+\underbrace{\left(\frac{C^{\star}S\iota^{2}}{T\left(1-\gamma\right)^{4}}\right)^{1/2}\left(\frac{C^{\star}S\iota^{4}\log T}{T\left(1-\gamma\right)^{4}}\right)^{1/2}}_{\eqqcolon\zeta_{2}}\\
 & \quad+\underbrace{\left(\frac{C^{\star}S\iota^{2}}{T\left(1-\gamma\right)^{4}}\right)^{1/2}\left(\frac{C^{\star}St_{\mathsf{mix}}\iota^{2}\log T}{T\left(1-\gamma\right)^{2}}\right)^{1/2}}_{\eqqcolon\zeta_{3}}+\underbrace{\left(\frac{C^{\star}S\iota^{2}}{T\left(1-\gamma\right)^{4}}\right)^{1/2}\left(\frac{C^{\star}t_{\mathsf{mix}}\iota^{3}\log T}{T\left(1-\gamma\right)^{3}}\right)^{1/2}}_{\eqqcolon\zeta_{4}}\\
 & \lesssim\sqrt{\frac{C^{\star}S\iota}{T\left(1-\gamma\right)^{3}}}+\frac{C^{\star}S\iota^{3}\log T}{T\left(1-\gamma\right)^{4}}+\frac{C^{\star}St_{\mathsf{mix}}\iota}{T\left(1-\gamma\right)^{2}}+\frac{C^{\star}t_{\mathsf{mix}}\iota^{2}}{T\left(1-\gamma\right)^{3}}. 
\end{align*}
Here, the last step follows by applying the AM-GM inequality as follows:
\begin{align*}
\zeta_{1} & \lesssim \frac{C^{\star}S\iota^{3}}{T\left(1-\gamma\right)^{4}} + \sqrt{\frac{C^{\star}S\iota}{T\left(1-\gamma\right)^{3}}},\\
\zeta_{2} & \lesssim\frac{C^{\star}S\iota^{3}\sqrt{\log T}}{T\left(1-\gamma\right)^{4}},\\
\zeta_{3} & \lesssim\frac{C^{\star}S\iota^{3}\log T}{T\left(1-\gamma\right)^{4}}+\frac{C^{\star}St_{\mathsf{mix}}\iota}{T\left(1-\gamma\right)^{2}}\\
\zeta_{4} & \lesssim\frac{C^{\star}S\iota^{3}\log T}{T\left(1-\gamma\right)^{4}}+\frac{C^{\star}t_{\mathsf{mix}}\iota^{2}}{T\left(1-\gamma\right)^{3}}.
\end{align*}
The above bounds taken collectively demonstrate that
\[
\Lambda_{K}\lesssim\sqrt{\frac{C^{\star}S\iota}{T\left(1-\gamma\right)^{3}}}+\frac{C^{\star}S\iota^{3}\log T}{T\left(1-\gamma\right)^{4}}+\frac{C^{\star}St_{\mathsf{mix}}\iota}{T\left(1-\gamma\right)^{2}}+\frac{C^{\star}t_{\mathsf{mix}}\iota^{2}}{T\left(1-\gamma\right)^{3}}.
\]

To finish up, combine the preceding bound with \eqref{eq:Lambda-k-final-UB-135-VR} to reach
\begin{align*}
V^{\star}\left(\rho\right)-V^{\widehat{\pi}}\left(\rho\right) & =\big\langle \rho,V^{\star}-V^{\widehat{\pi}}\big\rangle 
\overset{\text{(i)}}{\leq}\left\langle \rho,V^{\star}-V_{T_{K}}\right\rangle \overset{\text{(ii)}}{\leq}\frac{1}{T_K}\sum_{t=1}^{T_K}\left\langle \rho,V^{\star}-V_{t}\right\rangle =\Lambda_{K}\\
 & \lesssim\sqrt{\frac{C^{\star}S\iota}{T\left(1-\gamma\right)^{3}}}+\frac{C^{\star}S\iota^{3}\log T}{T\left(1-\gamma\right)^{4}}+\frac{C^{\star}St_{\mathsf{mix}}\iota}{T\left(1-\gamma\right)^{2}}+\frac{C^{\star}t_{\mathsf{mix}}\iota^{2}}{T\left(1-\gamma\right)^{3}}, 
\end{align*}
where (i) holds true according to Lemma \ref{lemma:vr-basics}, and
(ii) follows due to the monotonicity of $V_{t}$ in $t$. 
This concludes the proof of Theorem~\ref{theorem:3}.

\section{Auxiliary lemmas for Theorem \ref{theorem:3}}

\subsection{Proof of Lemma \ref{lemma:vr-basics} \label{appendix:proof-vr-basics}}

Consider any state-action pair $(s,a)\in \cS\times \cA$, and let $n=n_{t}(s,a)$. For each $1\leq i\leq T_{k}$, define
\[
k_{i}\coloneqq\min\Big\{ \big\{ 0\leq j<T_{k} \mid j>k_{i-1},\left(s_{j},a_{j}\right)=(s,a)\big\} ,\, T_{k}\Big\} ,
\]
and denote $k_{0}=0$. 
Clearly, each $k_{i}$ is a stopping time. From the update rule in
Algorithm \ref{alg:Q-3-transition}, we can write 
\[
Q_{t}\left(s,a\right)=\sum_{i=1}^{n}\eta_{i}^{n}\left[r\left(s,a\right)+\gamma V_{k_{i}}\left(s_{k_{i}+1}\right)-\gamma\overline{V}\left(s_{k_{i}+1}\right)+\gamma\widetilde{P}_{s,a}\overline{V}-b_{i}\left(s,a\right)\right].
\]
This taken together with the elementary fact $\sum_{i=1}^{n}\eta_{i}^{n}=1$ gives 
\begin{align}
\left(Q^{\star}-Q_{t}\right)\left(s,a\right) & =\left(r+\gamma PV^{\star}\right)\left(s,a\right)-\sum_{i=1}^{n}\eta_{i}^{n}\left[r\left(s,a\right)+\gamma V_{k_{i}}\left(s_{k_{i}+1}\right)-\gamma\overline{V}\left(s_{k_{i}+1}\right)+\gamma\widetilde{P}_{s,a}\overline{V}-b_{i}\left(s,a\right)\right]\nonumber \\
 & =\gamma\sum_{i=1}^{n}\eta_{i}^{n}P_{s,a}\left(V^{\star}-V_{k_{i}}\right)+\underbrace{\gamma\sum_{i=1}^{n}\eta_{i}^{n}\left(\left(P-P_{k_{i}}\right)\left(V_{k_{i}}-\overline{V}\right)\right)\left(s,a\right)}_{\eqqcolon\,\alpha_{1}}\nonumber \\
 & \quad+\underbrace{\gamma\sum_{i=1}^{n}\eta_{i}^{n}\left(\big(P-\widetilde{P}\big)\overline{V}\right)\left(s,a\right)}_{\eqqcolon\,\alpha_{2}}+\sum_{i=1}^{n}\eta_{i}^{n}b_{i}(s,a).\label{eq:Q-error-decom-vr}
\end{align}
From the update rules in Algorithms \ref{alg:Q-3-transition}-\ref{alg:Q-3-inner}
as well as Lemma \ref{lemma:step-size}, we see that 
\begin{align}
\sum_{i=1}^{n}\eta_{n}^{i}b_{i}(s,a) & \in\left[\widetilde{\beta}_{n}\left(s,a\right),2\widetilde{\beta}_{n}\left(s,a\right)\right], \label{eq:sum-b-vr}
\end{align}
where 
\begin{align*}
\widetilde{\beta}_{n}\left(s,a\right) & \coloneqq C_{\mathsf{b}}\sqrt{\frac{H\iota}{n}\left\{ \sigma_{n}^{\mathsf{adv}}\big(s,\pi^{\star}\left(s\right)\big)-\left[\mu_{n}^{\mathsf{adv}}\big(s,\pi^{\star}\left(s\right)\big)\right]^{2}\right\} }+C_{\mathsf{b}}\frac{H^{3/4}\iota^{3/4}}{n^{3/4}\left(1-\gamma\right)}+C_{\mathsf{b}}\frac{H\iota}{n\left(1-\gamma\right)}\\
 & \quad+C_{\mathsf{b}}\sqrt{\frac{\iota}{n^{\mathsf{ref}}\left(s,a\right)}\left\{ \sigma^{\mathsf{ref}}\left(s,a\right)-\left[\mu^{\mathsf{ref}}\left(s,a\right)\right]^{2}\right\} }+C_{\mathsf{b}}\frac{\iota^{3/4}}{\left(1-\gamma\right)\left[n^{\mathsf{ref}}\left(s,a\right)\right]^{3/4}}+C_{\mathsf{b}}\frac{\iota}{\left(1-\gamma\right)n^{\mathsf{ref}}\left(s,a\right)}.
\end{align*}

From now on, we shall focus on the case where $a=\pi^{\star}(s)$. 
The terms $\alpha_{1}$ and $\alpha_{2}$ are controlled separately in the following. 
\begin{itemize}
\item Regarding $\alpha_{1}$, we first define a filtration $\{\mathcal{F}_{i}\}_{i=0}^{T_{k}-1}$
as 
\[
\mathcal{F}_{i}\coloneqq\sigma\Big\{ \left\{ \big(s_{j}^{k},a_{j}^{k}\big):1\leq j\leq k_{i}\right\} ,\cup_{j=1}^{k}\mathcal{D}_{j}^{\mathsf{ref}},\cup_{j=1}^{k-1}\mathcal{D}_{j}\Big\} .
\]
Here, $(s_i^j,a_i^j)$ (resp.~$(s_i^{j,\mathsf{ref}},a_i^{j,\mathsf{ref}})$) is defined to be the $i$-th state-action pair used to  update the Q-function estimate (resp.~construct the empirical transition kernel) within the $j$-th epoch;
and we set
\begin{align}
\mathcal{D}_{j}^{\mathsf{ref}} & \coloneqq\left\{ \big(s_{i}^{j,\mathsf{ref}},a_{i}^{j,\mathsf{ref}}\big):0\leq i<T_{j}^{\mathsf{ref}}\right\} ,\qquad\mathcal{D}_{j}\coloneqq\left\{ \big(s_{i}^{j},a_{i}^{j}\big):0\leq i<T_{j}\right\} .\label{eq:D-definition}
\end{align}
It is straightforward to check that for any $1\leq\tau\leq T$, 
\[
\Big\{ \ind_{k_{i}<T}\left(\left(P-P_{k_{i}}\right)\left(V_{k_{i}}-\overline{V}\right)\right)\big(s,\pi^{\star}(s)\big)\Big\} _{i=1}^{\tau}
\]
is a martingale difference sequence with respect to $\{\mathcal{F}_{i}\}_{i\geq0}$.
Then we can invoke the Freedman inequality to obtain: for
any fixed $s\in\mathcal{S}$ and $\tau\in[T]$, with probability exceeding
$1-\delta/(ST)$, 
\begin{align*}
\left|\sum_{i=1}^{\tau}\ind_{k_{i}\leq T_{k}}\eta_{i}^{\tau}\left(\left(P-P_{k_{i}}\right)\left(V_{k_{i}}-\overline{V}\right)\right)\big(s,\pi^{\star}(s)\big)\right| & \lesssim\sqrt{\sum_{i=1}^{\tau}\left(\eta_{i}^{\tau}\right)^{2}\mathsf{Var}_{s,\pi^{\star}(s)}\left(V_{k_{i}}-\overline{V}\right)\iota}+\frac{1}{1-\gamma}\max_{1\leq i\leq\tau}\eta_{i}^{\tau}\iota\\
 & \lesssim\sqrt{\frac{H\iota}{\tau}\sum_{i=1}^{\tau}\eta_{i}^{\tau}\mathsf{Var}_{s,\pi^{\star}(s)}\left(V_{k_{i}}-\overline{V}\right)}+\frac{H\iota}{\left(1-\gamma\right)\tau}.
\end{align*}
Invoke the union bound to show that with probability at least $1-\delta$, the above inequality holds simultaneously for all $\tau\in[T]$
and $s\in\mathcal{S}$. Replacing $\tau$ with $n=n_{t}(s,\pi^{\star}(s))$ yields that, with
probability exceeding $1-\delta$,
\begin{equation}
\left|\alpha_{1}\right|\lesssim\sqrt{\frac{H\tau}{n}\sum_{i=1}^{n}\eta_{i}^{n}\mathsf{Var}_{s,\pi^{\star}(s)}\left(V_{k_{i}}-\overline{V}\right)}+\frac{H\tau}{\left(1-\gamma\right)n}\label{eq:bernstein}
\end{equation}
holds for all $s\in\mathcal{S}$ and $t\in[T_{k}]$, where
$n=n_{t}(s,\pi^{\star}(s))$. In addition, the update rules in Algorithm \ref{alg:Q-3} tell us
that 
\begin{align}
	\mu_{n}^{\mathsf{adv}}\left(s,a\right) &=\sum_{i=1}^{n}\eta_{i}^{n}\left[V_{k_{i}}\left(s_{k_{i}+1}\right)-\overline{V}\left(s_{k_{i}+1}\right)\right]=\sum_{i=1}^{n}\eta_{i}^{n}\Big(P_{k_{i}}\big(V_{k_{i}}-\overline{V}\big)\Big)\left(s,a\right) ; \\
	\sigma_{n}^{\mathsf{adv}}\left(s,a\right) &=\sum_{i=1}^{n}\eta_{i}^{n}\left[V_{k_{i}}\left(s_{k_{i}+1}\right)-\overline{V}\left(s_{k_{i}+1}\right)\right]^{2}=\sum_{i=1}^{n}\eta_{i}^{n}\left(P_{k_{i}}\left(V_{k_{i}}-\overline{V}\right)^{2}\right)\left(s,a\right).\label{eq:sigma-n-adv-expression}
\end{align}
Recognizing that 
\[
\sum_{i=1}^{n}\eta_{i}^{n}\mathsf{Var}_{s,a}\left(V_{k_{i}}-\overline{V}\right)=\sum_{i=1}^{n}\eta_{i}^{n}P_{s,a}\left(V_{k_{i}}-\overline{V}\right)^{2}-\sum_{i=1}^{n}\eta_{i}^{n}\left[P_{s,a}\left(V_{k_{i}}-\overline{V}\right)\right]^{2}, 
\]
we can connect $\mathsf{Var}_{s,a}\left(V_{k_{i}}-\overline{V}\right)$ with $\mu_{n}^{\mathsf{adv}}$ and $\sigma_{n}^{\mathsf{adv}}$ as follows
\begin{align*}
	& \sum_{i=1}^{n}\eta_{i}^{n}\mathsf{Var}_{s,\pi^{\star}(s)}\left(V_{k_{i}}-\overline{V}\right)-\left\{ \sigma_{n}^{\mathsf{adv}}\big(s,\pi^{\star}(s)\big)-\left[\mu_{n}^{\mathsf{adv}}\big(s,\pi^{\star}(s)\big)\right]^{2}\right\} \\
 & =\sum_{i=1}^{n}\eta_{i}^{n}P_{s,\pi^{\star}(s)}\left(V_{k_{i}}-\overline{V}\right)^{2}-\sum_{i=1}^{n}\eta_{i}^{n}\left[\left(P\left(V_{k_{i}}-\overline{V}\right)\right)\big(s,\pi^{\star}(s)\big)\right]^{2}\\
 & \quad\quad-\sum_{i=1}^{n}\eta_{i}^{n}\left(P_{k_{i}}\left(V_{k_{i}}-\overline{V}\right)^{2}\right)\big(s,\pi^{\star}(s)\big)+\left[\sum_{i=1}^{n}\eta_{i}^{n}\left(P_{k_{i}}\left(V_{k_{i}}-\overline{V}\right)\right)\big(s,\pi^{\star}(s)\big)\right]^{2}\\
	& =\underbrace{\sum_{i=1}^{n}\eta_{i}^{n}\left(\left(P-P_{k_{i}}\right)\left(V_{k_{i}}-\overline{V}\right)^{2}\right)\big(s,\pi^{\star}(s)\big)}_{\eqqcolon\,\alpha_{1,1}}+\underbrace{\left[\sum_{i=1}^{n}\eta_{i}^{n}\left(P_{k_{i}}\left(V_{k_{i}}-\overline{V}\right)\right)\big(s,\pi^{\star}(s)\big)\right]^{2}-\sum_{i=1}^{n}\eta_{i}^{n}\left[P_{s,\pi^{\star}(s)}\left(V_{k_{i}}-\overline{V}\right)\right]^{2}}_{\eqqcolon\,\alpha_{1,2}},
\end{align*}%
leaving us with two terms to control. 
\begin{itemize}
	\item
The first term $\alpha_{1,1}$ can be bounded by the Azuma-Hoeffding
inequality. We can employ similar arguments as used when proving (\ref{eq:hoeffding})
and invoke the Azuma-Hoeffding inequality to show that: with probability
exceeding $1-\delta/S$ , for all $s\in\mathcal{S}$ and $t\in[T_{k}]$,
it holds that
\begin{align}
	|\alpha_{1,1}| & \lesssim\sqrt{\frac{H\iota}{n\left(1-\gamma\right)^{4}}}.
	\label{eq:alpha-11-bound-Lemma6-VR}
\end{align}

\item 
Moving on to the second term $\alpha_{1,2}$, we invoke the identity $\sum_{i=1}^{n}\eta_{i}^{n}=1$ to deduce that
\begin{align*}
\alpha_{1,2} & =\left[\sum_{i=1}^{n}\eta_{i}^{n}\left(P_{k_{i}}\left(V_{k_{i}}-\overline{V}\right)\right)\big(s,\pi^{\star}(s)\big)\right]^{2}-\left(\sum_{i=1}^{n}\eta_{i}^{n}\right)\sum_{i=1}^{n}\eta_{i}^{n}\left[P_{s,\pi^{\star}(s)}\left(V_{k_{i}}-\overline{V}\right)\right]^{2}\\
 & \overset{\text{(i)}}{\leq}\left[\sum_{i=1}^{n}\eta_{i}^{n}\left(P_{k_{i}}\left(V_{k_{i}}-\overline{V}\right)\right)\big(s,\pi^{\star}(s)\big)\right]^{2}-\left[\sum_{i=1}^{n}\eta_{i}^{n}P_{s,\pi^{\star}(s)}\left(V_{k_{i}}-\overline{V}\right)\right]^{2}\\
 & =\left[\sum_{i=1}^{n}\eta_{i}^{n}\left(\left(P_{k_{i}}-P\right)\left(V_{k_{i}}-\overline{V}\right)\right)\big(s,\pi^{\star}(s)\big)\right]\left[\sum_{i=1}^{n}\eta_{i}^{n}\left(\left(P_{k_{i}}+P\right)\left(V_{k_{i}}-\overline{V}\right)\right)\big(s,\pi^{\star}(s)\big)\right]\\
 & \overset{\text{(ii)}}{\leq}\frac{1}{1-\gamma} 
	\left| \sum_{i=1}^{n}\eta_{i}^{n}\left(\left(P_{k_{i}}-P\right)\left(V_{k_{i}}-\overline{V}\right)\right)\big(s,\pi^{\star}(s)\big) \right| \\
 & \overset{\text{(iii)}}{\lesssim}\sqrt{\frac{H\iota}{n\left(1-\gamma\right)^{4}}}.
\end{align*}
Here, (i) arises from the Cauchy-Schwarz inequality; (ii) follows from the fact that
$0\leq V_{k_{i}}-\overline{V}\leq1/(1-\gamma)$ and the identity $\sum_{i=1}^{n}\eta_{i}^{n}=1$; and (iii) follows
by repeating the argument used to establish (\ref{eq:hoeffding}) and invoking the
Azuma-Hoeffding inequality (which we omite here for the sake of brevity). 
\end{itemize}
With the preceding bounds in place, we conclude that with probability
exceeding $1-O(\delta)$, 
\begin{align*}
\sum_{i=1}^{n}\eta_{n}^{i}\mathsf{Var}_{s,\pi^{\star}(s)}\left(V_{k_{i}}-\overline{V}\right) & \leq\sigma_{n}^{\mathsf{adv}}\big(s,\pi^{\star}(s)\big)-\left[\mu_{n}^{\mathsf{adv}}\big(s,\pi^{\star}(s)\big)\right]^{2}+\alpha_{1,1}+\alpha_{1,2}\\
 & \leq\sigma_{n}^{\mathsf{adv}}\big(s,\pi^{\star}(s)\big)-\left[\mu_{n}^{\mathsf{adv}}\big(s,\pi^{\star}(s)\big)\right]^{2}+O\left(\sqrt{\frac{H\iota}{n\left(1-\gamma\right)^{4}}}\right)
\end{align*}
holds for all $s\in\mathcal{S}$ and $t\in[T_{k}]$. 
Putting the above results together 
and using the fact $\sigma_{n}^{\mathsf{adv}}\big(s,\pi^{\star}(s)\big)\geq \left[\mu_{n}^{\mathsf{adv}}\big(s,\pi^{\star}(s)\big)\right]^{2}$ (due to Jensen's inequality) 
reveal that with probability exceeding $1-O(\delta)$, 
\begin{align*}
\left|\alpha_{1}\right| & \lesssim\sqrt{\frac{H\iota}{n}\sum_{i=1}^{n}\eta_{i}^{n}\mathsf{Var}_{s,\pi^{\star}(s)}\left(V_{k_{i}}-\overline{V}\right)}+\frac{H\iota}{\left(1-\gamma\right)n}\\
	& \lesssim\sqrt{\frac{H\iota}{n}\left\{ \sigma_{n}^{\mathsf{adv}}\big(s,\pi^{\star}\left(s\right)\big)-\left[\mu_{n}^{\mathsf{adv}}\big(s,\pi^{\star}(s)\big)\right]^{2}+O\left(\sqrt{\frac{H\iota}{n\left(1-\gamma\right)^{4}}}\right)\right\} }+\frac{H\iota}{n\left(1-\gamma\right)}\\
 & \lesssim\sqrt{\frac{H\iota}{n}\left\{ \sigma_{n}^{\mathsf{adv}}\big(s,\pi^{\star}\left(s\right)\big)-\left[\mu_{n}^{\mathsf{adv}}\big(s,\pi^{\star}(s)\big)\right]^{2}\right\} }+\frac{H^{3/4}\iota^{3/4}}{n^{3/4}\left(1-\gamma\right)}+\frac{H\iota}{n\left(1-\gamma\right)} 
\end{align*}
holds for all $s\in\mathcal{S}$
and $t\in[T_{k}]$.

\item Regarding $\alpha_{2}$, we first recall that $n^{\mathsf{ref}}(s,a)$
denotes the number of visit to $(s,a)$ among the samples used to compute $\widetilde{P}$. 
Let $k_{0}=-1$, and for each $1\leq i\leq T_{k}^{\mathsf{ref}}$, define
\[
k_{i}\coloneqq\min\Big\{ \left\{ 0\leq k<T_{k}^{\mathsf{ref}} \mid k>k_{i-1},\left(s_{k},a_{k}\right)=\left(s,a\right)\right\} ,T_{k}^{\mathsf{ref}}\Big\} .
\]
Akin to how we establish (\ref{eq:bernstein}), we can use the Freedman
inequality to show that: for any fixed $s\in\mathcal{S}$, with probability
exceeding $1-\delta/S$, 
\begin{align*}
	\left|\alpha_{2}\right| & =\left|\gamma\left(\big(P-\widetilde{P}\big)\overline{V}\right)\big(s,\pi^{\star}(s)\big)\right|\\
	& =\gamma\left|\frac{1}{n^{\mathsf{ref}}\big(s,\pi^{\star}(s)\big)}\sum_{i=0}^{T_{k}^{\mathsf{ref}}}\left(P-P_{i}^{\mathsf{ref}}\right)_{s,a}\overline{V}\ind\big\{(s_{i}^{\mathsf{ref}},a_{i}^{\mathsf{ref}})=\big(s,\pi^{\star}(s)\big)\big\}\right|\\
	& =\gamma\left|\frac{1}{n^{\mathsf{ref}}\big(s,\pi^{\star}(s)\big)}\sum_{i=0}^{n^{\mathsf{ref}}\left(s,\pi^\star(s)\right)}\left(P-P_{k_{i}}^{\mathsf{ref}}\right)_{s,\pi^{\star}(s)}\overline{V}\right|\\
	& \lesssim\sqrt{\frac{\mathsf{Var}_{s,\pi^{\star}(s)}(\overline{V})}{n^{\mathsf{ref}}\big(s,\pi^{\star}(s)\big)}\iota}+\frac{\iota}{\left(1-\gamma\right)n^{\mathsf{ref}}\big(s,\pi^{\star}\left(s\right)\big)}.
\end{align*}
Here, the first line results from the identity $\sum_{i=1}^{n}\eta_{i}^{n}=1$.
It follows from the update rule in Algorithm \ref{alg:Q-3} that 
\begin{align}
	\mu^{\mathsf{ref}}\left(s,a\right)&=\frac{1}{n^{\mathsf{ref}}\left(s,a\right)}\sum_{i=1}^{n^{\mathsf{ref}}\left(s,a\right)}\overline{V}\left(s_{k_{i}+1}\right)=\frac{1}{n^{\mathsf{ref}}\left(s,a\right)}\sum_{i=1}^{n^{\mathsf{ref}}\left(s,a\right)}\left(P_{k_{i}}\overline{V}\right)\left(s,a\right) , \\
	\sigma^{\mathsf{ref}}\left(s,a\right) &=\frac{1}{n^{\mathsf{ref}}\left(s,a\right)}\sum_{i=1}^{n^{\mathsf{ref}}\left(s,a\right)}\overline{V}^{2}\left(s_{k_{i}+1}\right)=\frac{1}{n^{\mathsf{ref}}\left(s,a\right)}\sum_{i=1}^{n^{\mathsf{ref}}\left(s,a\right)}\big(P_{k_{i}}\overline{V}^{2}\big)\left(s,a\right),
\end{align}
allowing us to deduce that
\begin{align*}
 & \left|\mathsf{Var}_{s,\pi^{\star}(s)}(\overline{V})-\sigma^{\mathsf{ref}}\big(s,\pi^{\star}(s)\big)+\left[\mu^{\mathsf{ref}}\big(s,\pi^{\star}(s)\big)\right]^{2}\right|\\
 & \quad=\left|P_{s,\pi^{\star}(s)}(\overline{V}^{2})-(P_{s,\pi^{\star}(s)}\overline{V})^{2}-\sigma^{\mathsf{ref}}\big(s,\pi^{\star}(s)\big)+\left[\mu^{\mathsf{ref}}\big(s,\pi^{\star}(s)\big)\right]^{2}\right|\\
	& \quad\leq\underbrace{\left|P_{s,\pi^{\star}(s)}(\overline{V}^{2})-\frac{1}{n^{\mathsf{ref}}\big(s,\pi^{\star}\left(s\right)\big)}\sum_{i=1}^{n^{\mathsf{ref}}\left(s,\pi^{\star}(s)\right)}\left(P_{k_{i}}\overline{V}^{2}\right)\big(s,\pi^{\star}(s)\big)\right|}_{\eqqcolon\alpha_{2,1}}\\
 &\quad\quad+\underbrace{\left|\left[\frac{1}{n^{\mathsf{ref}}\big(s,\pi^{\star}(s)\big)}\sum_{i=1}^{n^{\mathsf{ref}}\left(s,\pi^{\star}(s)\right)}\left(P_{k_{i}}\overline{V}\right)\big(s,\pi^{\star}(s)\big)\right]^{2}-\left(P_{s,\pi^{\star}(s)}\overline{V}\right)^{2}\right|}_{\eqqcolon\alpha_{2,2}}.
\end{align*}
Using the similar argument in proving (\ref{eq:hoeffding}) and the
Azuma-Hoeffding inequality, we can show that with probability exceeding $1-\delta/S$, 
\[
\alpha_{2,1}\lesssim\frac{1}{\left(1-\gamma\right)^{2}}\sqrt{\frac{\iota}{n^{\mathsf{ref}}\left(s,a\right)}}.
\]
The second term $\alpha_{2,2}$ can be bounded by 
\begin{align*}
	\alpha_{2,2} & =\left|\left[\frac{1}{n^{\mathsf{ref}}(s,\pi^{\star}\left(s\right))}\sum_{i=1}^{n^{\mathsf{ref}}(s,\pi^{\star}(s))}\left(\left(P_{k_{i}}-P\right)\overline{V}\right)\big(s,\pi^{\star}(s)\big)\right]\left[\frac{1}{n^{\mathsf{ref}}\big(s,\pi^{\star}(s)\big)}\sum_{i=1}^{n^{\mathsf{ref}}(s,\pi^{\star}(s))}\left(\left(P_{k_{i}}+P\right)\overline{V}\right)\big(s,\pi^{\star}(s)\big)\right]\right|\\
	& \leq\frac{2}{1-\gamma}\left|\frac{1}{n^{\mathsf{ref}}\big(s,\pi^{\star}(s)\big)}\sum_{i=1}^{n^{\mathsf{ref}}(s,\pi^{\star}(s))}\left(\left(P_{k_{i}}-P\right)\overline{V}\right)\big(s,\pi^{\star}(s)\big)\right|\\
 & \lesssim\frac{1}{\left(1-\gamma\right)^{2}}\sqrt{\frac{\iota}{n^{\mathsf{ref}}\big(s,\pi^{\star}(s)\big)}}.
\end{align*}
Here, the penultimate line follows from the fact that $0\leq\overline{V}(s)\leq1/(1-\gamma)$ for all $s\in \cS$,
whereas the last line can be proved by using the similar argument used to establish
(\ref{eq:hoeffding}) and invoking the Azuma-Hoeffding inequality. These bounds taken collectively allow us to derive
\begin{align}
\mathsf{Var}_{s,\pi^{\star}(s)}(\overline{V}) & =\sigma^{\mathsf{ref}}\big(s,\pi^{\star}(s)\big)-\left[\mu^{\mathsf{ref}}\big(s,\pi^{\star}(s)\big)\right]^{2}+O\left(\alpha_{2,1}+\alpha_{2,2}\right) \nonumber \\
 & =\sigma^{\mathsf{ref}}\big(s,\pi^{\star}(s)\big)-\left[\mu^{\mathsf{ref}}\big(s,\pi^{\star}(s)\big)\right]^{2}+O\left(\frac{1}{\left(1-\gamma\right)^{2}}\sqrt{\frac{\iota}{n^{\mathsf{ref}}\big(s,\pi^{\star}(s)\big)}}\right).\label{eq:sigma-ref-upper-bound}
\end{align}
Consequently, it is immediately seen that: with probability exceeding $1-O(\delta)$,
\begin{align*}
\left|\alpha_{2}\right| & \lesssim\sqrt{\frac{\iota}{n^{\mathsf{ref}}\big(s,\pi^{\star}(s)\big)}\left[\sigma^{\mathsf{ref}}\big(s,\pi^{\star}(s)\big)-\left[\mu^{\mathsf{ref}}\big(s,\pi^{\star}(s)\big)\right]^{2}+O\left(\frac{1}{\left(1-\gamma\right)^{2}}\sqrt{\frac{1}{n^{\mathsf{ref}}\big(s,\pi^{\star}(s)\big)}}\right)\right]}+\frac{\iota}{\left(1-\gamma\right)n^{\mathsf{ref}}\big(s,\pi^{\star}(s)\big)}\\
 & \lesssim\sqrt{\frac{\iota}{n^{\mathsf{ref}}\big(s,\pi^{\star}(s)\big)}\left\{ \sigma^{\mathsf{ref}}\big(s,\pi^{\star}(s)\big)-\left[\mu^{\mathsf{ref}}\big(s,\pi^{\star}(s)\big)\right]^{2}\right\} }+\frac{\iota^{3/4}}{\left(1-\gamma\right)\left[n^{\mathsf{ref}}\big(s,\pi^{\star}(s)\big)\right]^{3/4}}+\frac{\iota}{\left(1-\gamma\right)n^{\mathsf{ref}}\big(s,\pi^{\star}(s)\big)} 
\end{align*}
holds simultaneously for all $s\in\mathcal{S}$. 
\end{itemize}

With the above bounds on $\alpha_{1}$ and $\alpha_{2}$ in place, 
we can take these together with (\ref{eq:sum-b-vr}) to obtain
\begin{align*}
	0\leq\sum_{i=1}^{n}\eta_{i}^{n}b_{i}\big(s,\pi^{\star}(s)\big)+\alpha_{1}+\alpha_{2} & \leq3\widetilde{\beta}_{n}\big(s,\pi^{\star}(s)\big)=\beta_{n}\big(s,\pi^{\star}(s)\big), 
\end{align*}
with the proviso that $C_{\mathsf{b}}>0$ is sufficiently large. 
Substitution into (\ref{eq:Q-error-decom-vr}) then gives: with probability exceeding
$1-O(\delta)$, 
\begin{align*}
\left(Q^{\star}-Q_{t}\right)\big(s,\pi^{\star}(s)\big) & \leq\gamma\sum_{i=1}^{n}\eta_{i}^{n}P_{s,\pi^{\star}(s)}\left(V^{\star}-V_{k_{i}}\right)+\beta_{n}\big(s,\pi^{\star}(s)\big)
\end{align*}
holds for all $s\in\mathcal{S}$ and $t\in[T_{k}]$. 

The second part of the lemma --- namely, $V_{t}(s)\leq V^{\pi_{t}}(s)\leq V^{\star}(s)$
for all $s\in\mathcal{S}$ and $t\in[T_{k}]$ --- can be proved in a way similar to the proof of the second part of Lemma \ref{lemma:lcb-h-basics}. We omit it here for brevity.

\subsection{Proof of Lemma \ref{lemma:variance-bound} \label{appendix:proof-variance-bound}}

In view of (\ref{eq:sigma-n-adv-expression}), we can deduce that
\begin{align*}
\sigma_{n}^{\mathsf{adv}}\big(s,\pi^{\star}(s)\big) & =\sum_{i=1}^{n}\eta_{i}^{n}\left(P_{k_{i}}\left(V_{k_{i}}-\overline{V}\right)^{2}\right)\big(s,\pi^{\star}(s)\big)\\
 & =\sum_{i=1}^{n}\eta_{i}^{n}P_{s,\pi^{\star}(s)}\left(V_{k_{i}}-\overline{V}\right)^{2}+\sum_{i=1}^{n}\eta_{i}^{n}\left(\left(P_{k_{i}}-P\right)\left(V_{k_{i}}-\overline{V}\right)^{2}\right)\big(s,\pi^{\star}(s)\big)\\
 & \leq P_{s,\pi^{\star}(s)}\left(V^{\star}-\overline{V}\right)^{2}+\sum_{i=1}^{n}\eta_{i}^{n}\left(\left(P_{k_{i}}-P\right)\left(V_{k_{i}}-\overline{V}\right)^{2}\right)\big(s,\pi^{\star}(s)\big)\\
 & \leq P_{s,\pi^{\star}(s)}\left(V^{\star}-\overline{V}\right)^{2}+O\left(\sqrt{\frac{H\iota}{n\left(1-\gamma\right)^{4}}}\right).
\end{align*}
Here, the penultimate line follows from the fact that $V_{t}$ is non-decreasing in $t$ 
and $\overline{V}\leq V_{t}\leq V^{\star}$, while the last inequality
invokes the upper bound on $\alpha_{1,1}$ (cf.~\eqref{eq:alpha-11-bound-Lemma6-VR}) derived in Lemma \ref{lemma:vr-basics}.
In addition, we observe that
\[
\sigma^{\mathsf{ref}}\big(s,\pi^{\star}(s)\big)-\left[\mu^{\mathsf{ref}}\big(s,\pi^{\star}(s)\big)\right]^{2}=\mathsf{Var}_{s,\pi^{\star}(s)}(\overline{V})+O\left(\frac{1}{\left(1-\gamma\right)^{2}}\sqrt{\frac{\iota}{n^{\mathsf{ref}}\big(s,\pi^{\star}(s)\big)}}\right), 
\]
which follows directly from (\ref{eq:sigma-ref-upper-bound}).

Next, we turn to bounding the sum $\sum_{s,a}d_{\rho}^{\star}(s,a)\mathsf{Var}_{s,a}(\overline{V})$,
which can be decomposed into 
\begin{align}
\sum_{s,a}d_{\rho}^{\star}\left(s,a\right)\mathsf{Var}_{s,a}(\overline{V}) & =\underbrace{\sum_{s\in\mathcal{S},a\in\mathcal{A}}d_{\rho}^{\star}\left(s,a\right)\mathsf{Var}_{s,a}(V^{\star})}_{\eqqcolon\alpha_{1}}+\underbrace{\sum_{s,a}d_{\rho}^{\star}\left(s,a\right)\left[\mathsf{Var}_{s,a}(\overline{V})-\mathsf{Var}_{s,a}(V^{\star})\right]}_{\eqqcolon\alpha_{2}}.
	\label{eq:sum-d-Var-alpha-1-alpha-2-VR}
\end{align}
This leaves us with two terms $\alpha_{1}$ and $\alpha_{2}$ to control.
\begin{itemize}
	\item With regards to $\alpha_{1}$, we first define a vector $v=[v_s]_{s\in \cS}\in\mathbb{R}^{S}$ obeying
\[
	v_{s}\coloneqq\mathsf{Var}_{s,\pi^{\star}(s)}\left(V^{\star}\right) \qquad\text{for all } s\in\mathcal{S}, 
\]
which clearly satisfies
\begin{align}
v & =P_{\pi^{\star}}\left[V^{\star}\circ V^{\star}\right]-\left(P_{\pi^{\star}}V^{\star}\right)\circ\left(P_{\pi^{\star}}V^{\star}\right)\nonumber \\
 & =P_{\pi^{\star}}\left(V^{\star}\circ V^{\star}\right)-\frac{1}{\gamma^{2}}\left(r-V^{\star}\right)\circ\left(r-V^{\star}\right)\nonumber \\
 & =P_{\pi^{\star}}\left(V^{\star}\circ V^{\star}\right)-\frac{1}{\gamma^{2}}r\circ r-\frac{1}{\gamma^{2}}V^{\star}\circ V^{\star}+2V^{\star}\circ r\nonumber \\
 & \leq\frac{1}{\gamma^{2}}\left(\gamma^{2}P_{\pi^{\star}}-I\right)\left(V^{\star}\circ V^{\star}\right)+2V^{\star}\circ r.
	\label{eq:v-bound}
\end{align}
%
Here,  the second identity follows from the Bellman optimality equation
$V^{\star}=r+\gamma P_{\pi^{\star}}V^{\star}$. Recognizing that $d_{\rho}^{\star}(s,a)=d_{\rho}^{\star}(s)\ind\{a=\pi^{\star}(s)\}$
and $d_{\rho}^{\star}= (1-\gamma) \rho (I-\gamma P_{\pi^{\star}})^{-1}$, we obtain
\begin{align*}
	\alpha_{1} & =\sum_{s\in\mathcal{S}}d_{\rho}^{\star}\left(s\right)\mathsf{Var}_{s,\pi^{\star}(s)}(V^{\star})=\left\langle d_{\rho}^{\star},v\right\rangle =\left(1-\gamma\right)\rho\left(I-\gamma P_{\pi^{\star}}\right)^{-1}v \\
 & \leq \left(1-\gamma\right) \|\rho\|_1 \left\Vert \left(I-\gamma P_{\pi^{\star}}\right)^{-1}v\right\Vert _{\infty} 
   = \left(1-\gamma\right)\left\Vert \left(I-\gamma P_{\pi^{\star}}\right)^{-1}v\right\Vert _{\infty}\\
 & \overset{\text{(i)}}{\leq}\frac{1-\gamma}{\gamma^{2}}\left\Vert \left(I-\gamma P_{\pi^{\star}}\right)^{-1}\left(\gamma^{2}P_{\pi^{\star}}-I\right)\left(V^{\star}\circ V^{\star}\right)\right\Vert _{\infty}+ 2\left(1-\gamma\right)\left\Vert \left(I-\gamma P_{\pi^{\star}}\right)^{-1}\left(V^{\star}\circ r\right)\right\Vert _{\infty}\\
 & =\frac{1-\gamma}{\gamma^{2}}\left\Vert \left(I-\gamma P_{\pi^{\star}}\right)^{-1}\left[\left(1-\gamma\right)I+\gamma\left(I-\gamma P_{\pi^{\star}}\right)\right]\left(V^{\star}\circ V^{\star}\right)\right\Vert _{\infty}+ 2\left(1-\gamma\right)\left\Vert V^{\star}\right\Vert _{\infty}\left\Vert \left(I-\gamma P_{\pi^{\star}}\right)^{-1}r\right\Vert _{\infty}\\
 & \overset{\text{(ii)}}{\leq}\frac{\left(1-\gamma\right)^{2}}{\gamma^{2}}\left\Vert \left(I-\gamma P_{\pi^{\star}}\right)^{-1}\left(V^{\star}\circ V^{\star}\right)\right\Vert _{\infty}+\frac{1-\gamma}{\gamma}\left\Vert V^{\star}\circ V^{\star}\right\Vert _{\infty}+ 2\left(1-\gamma\right)\left\Vert V^{\star}\right\Vert _{\infty}^2\\
 & \leq\frac{\left(1-\gamma\right)^{2}}{\gamma^{2}}\frac{1}{1-\gamma}\left\Vert V^{\star}\right\Vert _{\infty}^{2}+\frac{1-\gamma}{\gamma}\left\Vert V^{\star}\right\Vert _{\infty}^{2}+2\left(1-\gamma\right)\left\Vert V^{\star}\right\Vert _{\infty}^{2}\\
 & \overset{\text{(iii)}}{\leq}\frac{8}{1-\gamma}.
\end{align*}
Here, (i) follows from (\ref{eq:v-bound}); (ii) relies on the triangle
inequality as well as the Bellman optimality equation $V^{\star}=(I-\gamma P_{\pi^{\star}})^{-1}r$;
and (iii) arises from the property $0\leq V^{\star}(s)\leq1/(1-\gamma)$ for all $s\in \cS$ as well as the
assumption that $\gamma\geq1/2$.

\item Regarding $\alpha_{2}$, we make the observation that
\begin{align}
	\alpha_{2} & =\sum_{s\in\mathcal{S},a\in\mathcal{A}}d_{\rho}^{\star}\left(s,a\right)\left\{ P_{s,a}\overline{V}^{2}-\left[P_{s,a}\overline{V}\right]^{2}-P_{s,a}\big(V^{\star2}\big)+\left[P_{s,a}V^{\star}\right]^{2}\right\} \notag\\
	& =\sum_{s\in\mathcal{S},a\in\mathcal{A}}d_{\rho}^{\star}\left(s,a\right)\left\{ P_{s,a}\overline{V}^{2}-P_{s,a} \big(V^{\star2}\big) \right\} +\sum_{s,a}d_{\rho}^{\star}\left(s,a\right)\left\{ \left[P_{s,a}V^{\star}\right]^{2}-\left[P_{s,a}\overline{V}\right]^{2}\right\} \notag\\
 & \leq
	\sum_{s,a}d_{\rho}^{\star}\left(s,a\right)P_{s,a}\left(V^{\star}-\overline{V}\right)P_{s,a}\left(V^{\star}+\overline{V}\right) \notag\\
 & \leq\frac{2}{1-\gamma}\sum_{s\in\mathcal{S},a\in\mathcal{A}}d_{\rho}^{\star}\left(s,a\right)P_{s,a}\left(V^{\star}-\overline{V}\right), 
	\label{eq:alpha2-UB-crude-VR}
\end{align}
where the third line holds since $\overline{V}^2\leq V^{\star2}$, and the last line is valid since $\|P_{s,a}\|_1=1$ and $\|\overline{V}\|_{\infty}\leq \|V^{\star}\|_{\infty}\leq \frac{1}{1-\gamma}$.  
Recognizing that 
\begin{equation}
	d_{\rho}^{\star} = (1-\gamma) \rho + \gamma d_{\rho}^{\star} P_{\pi^{\star}} ,
	\label{eq:d-rho-star-equilibrium-VR}
\end{equation}
we can use the fact $d_{\rho}^{\star}\left(s,a\right)=d_{\rho}^{\star}\left(s\right) \ind\{a=\pi^{\star}(s)\}$ to derive
\begin{align*}
	\sum_{s\in\mathcal{S},a\in\mathcal{A}}d_{\rho}^{\star}\left(s,a\right)P_{s,a}\big( V^{\star}-\overline{V} \big) 
	& =\sum_{s\in \cS} d_{\rho}^{\star}(s) P_{s,\pi^{\star}(s)}\big( V^{\star}-\overline{V} \big) 
	 = \big\langle  d_{\rho}^{\star} P_{\pi^{\star}} ,  V^{\star}-\overline{V} \big\rangle \\ 
	& = \bigg\langle \frac{d_{\rho}^{\star}-\left(1-\gamma\right)\rho}{\gamma}, V^{\star}-\overline{V} \bigg\rangle\\
 	& =\left\langle \widetilde{\rho},V^{\star}-\overline{V}\right\rangle =\Delta_{k-1}. 
\end{align*}
Substitution into \eqref{eq:alpha2-UB-crude-VR} leads to
\[
	\alpha_{2}\leq\frac{2}{1-\gamma}\Delta_{k-1}.
\]

\item Take the preceding bounds on $\alpha_{1}$ and $\alpha_{2}$ together with \eqref{eq:sum-d-Var-alpha-1-alpha-2-VR} to
yield
\begin{align*}
\sum_{s,a}d_{\rho}^{\star}\left(s,a\right)\mathsf{Var}_{s,a}(\overline{V}) & \leq\alpha_{1}+\alpha_{2}\leq\frac{8}{1-\gamma}+\frac{2}{1-\gamma}\Delta_{k-1}.
\end{align*}

\end{itemize}

Finally, we turn attention to $\sum_{s,a}d_{\rho}^{\star}\left(s,a\right)\mathsf{Var}_{s,a}(V^{\star}-\overline{V})$. 
This sum can be bounded as follows
\begin{align*}
\sum_{s,a}d_{\rho}^{\star}\left(s,a\right)\mathsf{Var}_{s,a}(V^{\star}-\overline{V}) & \leq\sum_{s\in\mathcal{S},a\in\mathcal{A}}d^{\star}\left(s,a\right)P_{s,a}\left(V^{\star}-\overline{V}\right)^{2}
	 = \sum_{s\in\mathcal{S} }d^{\star}\left(s\right)P_{s,\pi^{\star}(s)}\left(V^{\star}-\overline{V}\right)^{2}  \\
	& = \bigg\langle \frac{d_{\rho}^{\star}-\left(1-\gamma\right)\rho}{\gamma},  \big(V^{\star}-\overline{V} \big)^2 \bigg\rangle\\ 	
 & =\sum_{s\in\mathcal{S}}\frac{d^{\star}\left(s\right)-\left(1-\gamma\right)\rho\left(s\right)}{\gamma}\left(V^{\star}-\overline{V}\right)^{2}\left(s\right)\\
	& = \big\langle \widetilde{\rho}, \big(V^{\star}-\overline{V}\big)^{2} \big\rangle 
	\leq  \big\| V^{\star}-\overline{V}\big\|_{\infty}   \big\langle \widetilde{\rho}, V^{\star}-\overline{V} \big\rangle \\
	& \leq\frac{1}{1-\gamma}\Delta_{k-1}, 
\end{align*}
where the first identity holds since $d^{\star}\left(s,a\right)=d^{\star}\left(s\right) \ind \{a=\pi^{\star}(s)\}$, 
and the second line invokes \eqref{eq:d-rho-star-equilibrium-VR}.

\subsection{Proof of Lemma \ref{lemma:d-stat-star}\label{appendix:proof-lemma-d-stat-star}}
Recall that 
\[
	\widetilde{\rho}\coloneqq\frac{d_{\rho}^{\star}-\left(1-\gamma\right)\rho}{\gamma} ,
\]
which is clearly also a probability vector. To prove the lemma, we find it helpful to introduce the following occupancy distribution induced by $\widetilde{\rho}$:
\begin{align*}
d_{\widetilde{\rho}}^{\star}\left(s\right) & \coloneqq\left(1-\gamma\right)\sum_{t=0}^{\infty}\gamma^{t}\mathbb{P}\left(s_{t}=s\mid\pi^{\star},s_{0}\sim\widetilde{\rho}\right).
\end{align*}
Repeating the argument used to establish (\ref{eq:useful}), we can easily see that: for any vector $V\in\mathbb{R}^{d}$
with non-negative entries, it holds that
\begin{align}
\sum_{j=0}^{\infty}\left[\gamma\left(1+\frac{1}{H}\right)^{3}\right]^{j}\left\langle \widetilde{\rho}P_{\pi^{\star}}^{j},V\right\rangle \lesssim\frac{1}{1-\gamma}\left\langle d_{\widetilde{\rho}}^{\star},V\right\rangle +\frac{\delta}{ST^{4}\left(1-\gamma\right)}\left\Vert V\right\Vert _{\infty}.
	\label{eq:sum-lemma-d-star-star-VF}
\end{align}
Consequently, it boils down to analyzing the distribution $d_{\widetilde{\rho}}^{\star}$.

For any integer $K\geq0$, employ the identity $d_{\rho}^{\star}=\rho \sum_{i=0}^{\infty} \gamma^i (P_{\pi^{\star}})^i$ to deduce that
\begin{align*}
d_{\widetilde{\rho}}^{\star}\left(s\right) & =\left(1-\gamma\right)\left[\sum_{i=0}^{\infty}\gamma^{i}\frac{d_{\rho}^{\star}-\left(1-\gamma\right)\rho}{\gamma}(P_{\pi^{\star}})^{i}\right](s)\\
 & \overset{\text{(i)}}{=} \frac{\left(1-\gamma\right)^{2}}{\gamma}\left[\sum_{i=0}^{\infty}\sum_{j=0}^{\infty}\gamma^{i+j}\rho(P_{\pi^{\star}})^{i+j}\right](s)-\frac{\left(1-\gamma\right)^{2}}{\gamma}\left[\sum_{i=0}^{\infty}\gamma^{i}\rho(P_{\pi^{\star}})^{i}\right](s)\\
 & =\frac{\left(1-\gamma\right)^{2}}{\gamma}\left[\rho\left(\sum_{l=0}^{\infty}\gamma^{l}(P_{\pi^{\star}})^{l}\right)^{2}\right](s)-\frac{\left(1-\gamma\right)^{2}}{\gamma}\left[\sum_{i=0}^{\infty}\gamma^{i}\rho(P_{\pi^{\star}})^{i}\right](s)\\
 & =\frac{\left(1-\gamma\right)^{2}}{\gamma}\left[\sum_{i=0}^{\infty}\left(i+1\right)\gamma^{i}\rho(P_{\pi^{\star}})^{i}\right](s)-\frac{\left(1-\gamma\right)^{2}}{\gamma}\left[\sum_{i=0}^{\infty}\gamma^{i}\rho(P_{\pi^{\star}})^{i}\right](s)\\
 & =\frac{\left(1-\gamma\right)^{2}}{\gamma}\left[\sum_{i=0}^{\infty}i\gamma^{i}\rho(P_{\pi^{\star}})^{i}\right](s)\\
 & =\frac{\left(1-\gamma\right)^{2}}{\gamma}\left[\sum_{i=0}^{K-1}i\gamma^{i}\rho(P_{\pi^{\star}})^{i}\right](s)+\frac{\left(1-\gamma\right)^{2}}{\gamma}\left[\sum_{i=K}^{\infty}i\gamma^{i}\rho(P_{\pi^{\star}})^{i}\right](s)\\
 & \leq K\frac{\left(1-\gamma\right)^{2}}{\gamma}\left[\sum_{i=0}^{K-1}\gamma^{i}\rho(P_{\pi^{\star}})^{i}\right](s)+\frac{\left(1-\gamma\right)^{2}}{\gamma}\left[\sum_{i=K}^{\infty}i\gamma^{i}\rho(P_{\pi^{\star}})^{i}\right](s)\\
 & \overset{\text{(ii)}}{\leq}2K\left(1-\gamma\right)d_{\rho}^{\star}(s)+\underbrace{\frac{\left(1-\gamma\right)^{2}}{\gamma}\left[\sum_{i=K}^{\infty}i\gamma^{i}\rho(P_{\pi^{\star}})^{i}\right](s)}_{\eqqcolon\, e(s)}.
\end{align*}
Here, (i) and (ii) make use of the identity $d_{\rho}^{\star}=(1-\gamma)\rho\sum_{j=0}^{\infty}\gamma^{j}(P_{\pi^{\star}})^{j}$
and the assumption that $\gamma\geq1/2$. By choosing 
\[
K\coloneqq\left\lceil \frac{C_{K}}{1-\gamma}\log\frac{ST}{\delta}\right\rceil 
\]
for some constant $C_{K}>0$, we can guarantee that 
\begin{equation}
d_{\widetilde{\rho}}^{\star}\left(s\right)\leq 4C_{K}d_{\rho}^{\star}(s) \log\frac{ST}{\delta} +e\left(s\right).
\label{eq:d-star-star-bound}
\end{equation}
This inequality further motivates us to bound $e(s)$. 
Towards this, note that  $e(s)$ satisfies 
\begin{align}
\sum_{s\in\mathcal{S}}e\left(s\right) & =\frac{\left(1-\gamma\right)^{2}}{\gamma}\sum_{i=K}^{\infty}i\gamma^{i}\rho(P_{\pi^{\star}})^{i}1 \notag\\
 & =\frac{\left(1-\gamma\right)^{2}}{\gamma}\sum_{i=K}^{\infty}i\gamma^{i}=\frac{1-\gamma}{\gamma}\left(\sum_{i=K}^{\infty}i\gamma^{i}-\gamma\sum_{i=K}^{\infty}i\gamma^{i}\right)\notag\\
 & =\frac{1-\gamma}{\gamma}\left(\sum_{i=K}^{\infty}i\gamma^{i}-\sum_{i=K+1}^{\infty}\left(i-1\right)\gamma^{i}\right)\notag\\
 & =\frac{1-\gamma}{\gamma}\left(K\gamma^{K}+\sum_{i=K+1}^{\infty}\gamma^{i}\right)\notag\\
 & =\frac{1-\gamma}{\gamma}\left(K\gamma^{K}+\frac{\gamma^{K+1}}{1-\gamma}\right)\notag\\
 & \leq2C_{K}\gamma^{K-1}\log\frac{ST}{\delta}+\gamma^{K}
	\lesssim\frac{\delta}{ST^{4}}
	\label{eq:sum-es-bound-VR}
\end{align}
with $1$ the all-one vector, 
where the second line holds since $\rho(P_{\pi^{\star}})^{i}$ remains a probability vector (and hence $\rho(P_{\pi^{\star}})^{i}1=1$).
Here, the last line follows from our assumption that $\gamma\geq1/2$
and the fact that 
\[
\gamma^{K}=e^{K\log\left[1-\left(1-\gamma\right)\right]}\leq e^{-K\left(1-\gamma\right)}\leq e^{-C_{K}\log(ST/\delta)}=\left(\frac{\delta}{ST}\right)^{-C_{K}}\leq\frac{\delta^{2}}{S^{2}T^{5}} ,
\]
provided that $C_{K}\geq5$.

%

We are now ready to establish the claim of this lemma. 
Substituting the bounds (\ref{eq:d-star-star-bound}) and \eqref{eq:sum-es-bound-VR} into \eqref{eq:sum-lemma-d-star-star-VF} leads to
\begin{align*}
\left\langle d_{\widetilde{\rho}}^{\star},V\right\rangle  & \lesssim\left\langle d_{\rho}^{\star},V\right\rangle \log\frac{ST}{\delta}+\sum_{s\in\mathcal{S}}e\left(s\right)\left\Vert V\right\Vert _{\infty}\\
 & \lesssim\left\langle d_{\rho}^{\star},V\right\rangle \log\frac{ST}{\delta}+\frac{\delta}{ST^{4}}\left\Vert V\right\Vert _{\infty}.
\end{align*}
As a result, one can readily conclude that 
\[
\sum_{j=0}^{\infty}\left[\gamma\left(1+\frac{1}{H}\right)^{3}\right]^{j}\left\langle \widetilde{\rho}P_{\pi^{\star}}^{j},V\right\rangle \lesssim\frac{1}{1-\gamma}\left\langle d_{\rho}^{\star},V\right\rangle \log\frac{ST}{\delta}+\frac{\delta}{ST^{4}\left(1-\gamma\right)}\left\Vert V\right\Vert _{\infty}.
\]

\bibliographystyle{apalike}
\bibliography{bibfileRL}

\end{document}